\newcommand{\bhatM}{\boldsymbol{\widehat{M}}}
\newcommand{\pinv}{\dagger}
\newcommand{\bA}{\boldsymbol{A}}
\newcommand{\bC}{\boldsymbol{C}}
\newcommand{\bx}{\boldsymbol{x}}
\newcommand{\ba}{\boldsymbol{a}}
\newcommand{\bX}{\boldsymbol{X}}
\newcommand{\bM}{\boldsymbol{M}}
\newcommand{\bhG}{\boldsymbol{\wh{G}}}
\newcommand{\bI}{\boldsymbol{I}}
\newcommand{\bu}{\boldsymbol{u}}
\newcommand{\by}{\boldsymbol{y}}
\newcommand{\sS}{\mathcal{S}}
\newcommand{\bz}{\boldsymbol{z}}
\newcommand{\bS}{\boldsymbol{S}}
\newcommand{\bw}{\boldsymbol{w}}
\newcommand{\btilw}{\boldsymbol{\tilde{w}}}
\newcommand{\btilwstar}{\tilde{\bw}^{\star}}
\newcommand{\bW}{\boldsymbol{W}}
\newcommand{\bV}{\boldsymbol{V}}
\newcommand{\bU}{\boldsymbol{U}}
\newcommand{\bv}{\boldsymbol{v}}
\newcommand{\bzero}{\boldsymbol{0}}
\newcommand{\balpha}{\boldsymbol{\alpha}}
\newcommand{\sA}{\mathcal{A}}
\newcommand{\sN}{\mathcal{N}}
\newcommand{\sM}{\mathcal{M}}
\newcommand{\sB}{\mathcal{B}}
\newcommand{\sE}{\mathcal{E}}
\newcommand{\sD}{\mathcal{D}}
\newcommand{\sW}{\mathcal{W}}
\newcommand{\sZ}{\mathcal{Z}}
\DeclareMathOperator*{\argmin}{arg\,min}
\DeclareMathOperator*{\rank}{rank}
\newcommand{\field}[1]{\mathbb{#1}}
\newcommand{\R}{\field{R}}
\newcommand{\sctilO}{\mathcal{\tilde{O}}}
\newcommand{\wh}{\widehat}
\newcommand{\ve}{\varepsilon}
\newcommand{\bve}{\boldsymbol{\varepsilon}}
\newcommand{\reals}{\mathbb{R}}
\newcommand{\tp}{^{\top}}
\DeclareMathOperator{\E}{\mathbb{E}}
\newcommand*\diff{\mathop{}\!\mathrm{d}}
\newcommand{\lmin}{\lambda_{\mathrm{min}}}
\newcommand{\lminh}{\widehat{\lambda}_{\mathrm{min}}}
\newcommand{\lmax}{\lambda_{\mathrm{max}}}
\newcommand{\smin}{s_{\mathrm{min}}}
\newcommand{\smax}{s_{\mathrm{max}}}
\newcommand{\bSigma}{\boldsymbol{\Sigma}}
\newcommand{\bhSigma}{\boldsymbol{\wh{\Sigma}}}
\newcommand{\eps}{\epsilon}
\newcommand{\pr}[1]{\left( #1 \right)}
\newcommand{\br}[1]{\left[ #1 \right]}
\newcommand{\cbr}[1]{\left\{ #1 \right\}}
\newcommand{\lf}{\left}
\newcommand{\rt}{\right}
\newcommand{\bwstar}{\bw^{\star}}
\newcommand{\Lip}{\Delta}
\newcommand{\leqC}{\lesssim}
\newcommand{\alg}{\sA_S}
\newcommand{\algmap}{\sA}
\newcommand{\Lh}{\hat{L}}
\newtheorem{lemma}{Lemma}
\newtheorem{theorem}{Theorem}
\newtheorem{cor}{Corollary}
\newtheorem{prop}{Proposition}
\newtheorem{definition}{Definition}
\newcommand{\bmid}{\;\middle|\;}
\newcommand{\initvar}{\nu^2_{\mathrm{init}}}
\newcommand{\noisevar}{\sigma^2}
\title{On the Role of Optimization in Double Descent:\\ A Least Squares Study}
\date{}
\author{
  Ilja Kuzborskij\\
  DeepMind\\
 \and
 Csaba Szepesv\'ari\\
 DeepMind, Canada\\
 University of Alberta, Edmonton\\
 \and
 Omar Rivasplata\\
 University College London\\
 \and
 Amal Rannen-Triki\\
 DeepMind
 \and
 Razvan Pascanu\\
 DeepMind
}
\begin{document}

\maketitle

\begin{abstract}

 Empirically it has been observed that the performance of deep neural networks steadily improves as we increase model size, contradicting the classical view on overfitting and generalization. 
 Recently, the \emph{double descent} phenomena has been proposed to reconcile this observation with theory, suggesting that the test error has a second descent when the model becomes sufficiently overparametrized, as the model size itself acts as an implicit regularizer.
 In this paper we add to the growing body of work in this space, providing a careful study of learning dynamics as a function of model size for the least squares scenario. 
 We show an excess risk bound for the gradient descent solution of the least squares objective. The bound depends on the smallest non-zero eigenvalue of the covariance matrix of the input features, via a functional form that has the double descent behaviour. 
 This gives a new perspective on the double descent curves reported in the literature.
 Our analysis of the excess risk allows to decouple the effect of optimisation and generalisation error.
 In particular, we find that in case of noiseless regression,  double descent 
 is explained solely by  optimisation-related quantities, which was  missed in studies focusing on the Moore-Penrose pseudoinverse solution.
 We believe that our derivation provides an alternative view compared to existing work, shedding some light on a possible cause of this phenomena, at least in the considered least squares setting. 
 We empirically explore if our predictions hold for neural networks, in particular whether the covariance of intermediary hidden activations has a similar behaviour as the one 
 predicted by our derivations. 
 
\end{abstract}

\section{Introduction}
Deep Neural Networks have shown amazing versatility across a large range of domains. Among one of their main features is their ability to perform better with scale. Indeed, some of the most impressive results [see e.g. \citealp{brock2021high,brown2020language,senior2020improved,schrittwieser2020mastering,silver2017mastering,he2016deep} and references therein] have been obtained often by exploiting this fact, 
leading to models that have at least as many parameters as the number of examples in the dataset they are trained on. 
Empirically, the limitation on the model size seems to be mostly imposed by hardware or compute.  From a theoretical point of view, however, this property is quite surprising and counter-intuitive, as one would expect that in such extremely overparametrized regimes the learning
would be prone to overfitting~\citep{hastie2009elements,shalev2014understanding}. 

Recently \cite{belkin2019reconciling} proposed
\ac{DD} phenomena as an explanation. 
They argue that the classical view of overfitting does not apply in extremely over-parameterized regimes, which were less studied prior to the emergence of the deep learning era. 
The classical view in the parametric learning models was based on error curves showing that the training error decreases monotonically when plotted against model size, while the corresponding test errors displayed a U-shape curve, where the model size for the bottom of the U-shape was taken to achieve the ideal trade-off between model size and generalization, and larger model sizes than that were thought to lead to `overfitting' since the gap between test errors and training errors increased.
\begin{figure}[H]
  \centering
  \includegraphics[width=.95\textwidth]{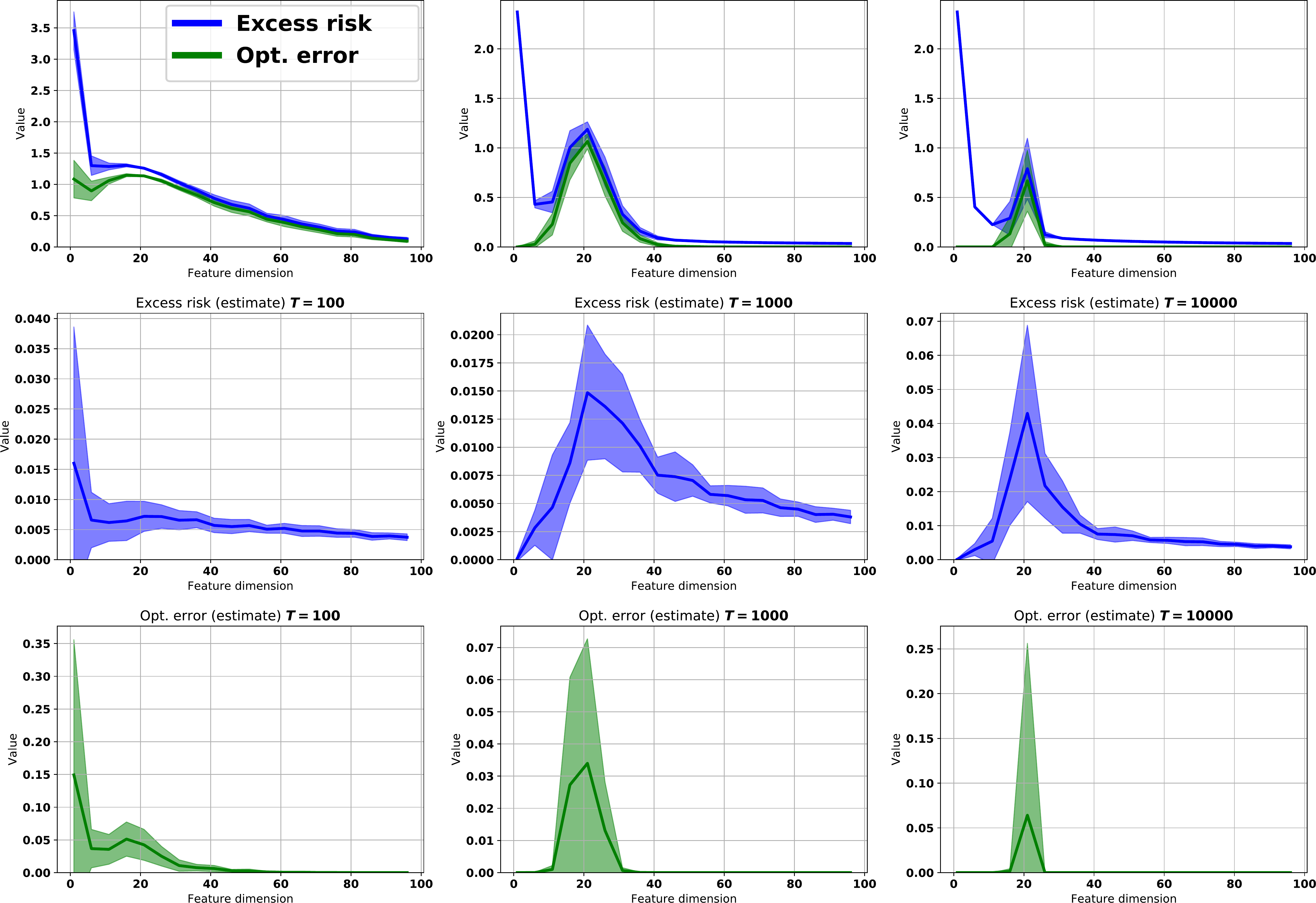}
  \caption{\small
    Evaluation of a synthetic setting inspired by~\cite{belkin2019two}.
    We consider a linear regression problem ($n=20$, $d \in [100]$), where regression parameters are fixed, and instances are sampled from $[-1,1]$-truncated normal density.
    \ac{GD} is run with $\alpha = 0.05$ and initialization variance is set as $\initvar = 1/d$.
    The first row demonstrates behavior of~\eqref{eq:intro:excess_ls}, the second shows an estimate of the excess risk (on $10^4$ held-out points), and the third an estimate of the optimization error. 
    }
  \label{fig:ls_emp_bound_vs_sim}
\end{figure}

The classical U-shape error curve 
dwells in what is now called the under-parameterized regime, where the model size is smaller than the size of the dataset.
Arguably, the restricted model sizes used in the past were
tied to 
the available computing power.
By contrast, it is common nowadays for model sizes to be
larger than the amount of available data, which we call the over-parameterized regime. 
The divide between these two regimes 
is marked by a point where model size matches dataset size, which \cite{belkin2019reconciling} called the \emph{interpolation threshold}.

The work of \cite{belkin2019reconciling} argues that as model size grows beyond the interpolation threshold, one will observe a second descent of the test error that asymptotes in the limit to smaller values than those in the underparameterized regime, which indicates better generalization rather than overfitting.
To some extent this was already known in the \emph{nonparametric} learning where model complexity scales with the amount of data by design (such as in nearest neighbor rules and kernels), yet one can generalize well and even achieve statistical consistency \citep{gyorfi2002distribution}.
This has lead to a growing body of works trying to identify the mechanisms behind DD, to which the current manuscript belongs too. We refer the reader to \cref{sec:related_work}, where the related literature is discussed.
Similar to these works, our goal is also to understand the cause of \ac{DD}.
Our approach
is slightly different:
 we explore the least squares problem that allows us to work with analytic expressions for all the quantities involved. 
 \cref{fig:ls_emp_bound_vs_sim} provides a summary of our findings. 
 In particular, it shows the behaviour
 of the excess risk in a setting with random inputs and noise-free labels, for which in \cref{sec:excess} we prove a bound that has the form $\E\Bigl[(1-\alpha \lminh^+)^{2T}\Bigr] \|\bwstar\|^2 + \frac{\|\bwstar\|^2}{\sqrt{n}}$, for a rapidly decaying spectrum of the sample covariance.
 In this setting, the linear predictors project $d$-dimensional features
 by dot product with a weight vector which must be learned from data; then $\bwstar$ refers to the optimal solution, $\alpha$ is a constant learning rate, and $n$ is the number of examples in the training set. 
 Note that the feature dimension $d$ coincides with the number of parameters in this particular setting, hence $d>n$ is the overparameterized regime.
 The quantity $\lminh^+$ is of special importance: \emph{It is the smallest positive eigenvalue of the sample covariance matrix of the features.}
 In particular, we observe that 
 the excess risk is controlled by the smallest non-zero eigenvalue of the covariance of the features, and its functional dependence exhibits a profile similar to the \ac{DD} curve. This offers a new perspective on the problem.

In~\cref{fig:ls_emp_bound_vs_sim}
we observe \emph{a peaking behavior}, not only in the excess risk, but also in the quantity that we label `optimization error' which is a special term of the excess risk bound that is purely related to optimization.  
The peaking behaviour of the excess risk (MSE in case of the square loss) was observed and studied in a number of settings \citep{belkin2019reconciling,mei2019generalization,derezinski2020exact};
however, the connection between the peaking behavior and optimization so far received less attention.
This pinpoints a less-studied setting and we conjecture that the \ac{DD} phenomenon occurs due to
$\lminh^+$.
In the absence of label noise, we conclude that \ac{DD} manifests due to the optimization process.
On the other hand, when label noise is present, in addition to the optimization effect, $\lminh^+$ also has an effect on the generalization error.

\paragraph{Our contributions:}
Our main theoretical contribution is provided in \cref{sec:excess}. In particular, \cref{sec:ls_random_design} focuses on the noise-free least squares problem,
\cref{sec:ls_random_design_plus_noise} adds noise to the problem, and \cref{sec:lmin_concentration} deals with concentration of the sample-dependent $\lminh^+$ around its population counterpart. Sections~\ref{sec:excess_risk} and~\ref{sec:empirical_discussion} provide an in-depth discussion on the implications of our findings and an empirical exploration of the question whether simple neural networks have a similar behaviour.

\paragraph{Notation:}
The linear algebra/analysis notation used in this work is defined in~\cref{app:sec:defs}.
We briefly mention here that we denote column vectors and matrices with small and capital bold letters, respectively, e.g. $\balpha=[\alpha_1, \alpha_2, \ldots, \alpha_d]\tp \in \R^d~$
and $\bA \in \R^{d_1 \times d_2 }$.
Singular values of a rectangular matrix $\bA \in \R^{n \times d}$ are denoted by $\smax(\bA) = s_1(\bA) \geq \ldots \geq s_{n \wedge d}(\bA) = \smin(\bA)$.
The rank of $\bA$ is $r = \max\{ k \mid s_k(\bA) > 0 \}$.
Eigenvalues of a \ac{PSD} matrix $\bM \in \R^{d \times d}$ are non-negative and are denoted 
$\lmax(\bM) = \lambda_1(\bM) \geq \ldots \geq \lambda_d(\bM) = \lmin(\bM)$,
while the smallest \emph{non-zero} eigenvalue is denoted $\lmin^+(\bM)$.

Next, we set the learning theory notation.
In a parametric statistical learning problem the learner is given a training set $S = \pr{Z_1, \ldots, Z_n}$, which is an $n$-tuple consisting of independent random elements, called training examples, distributed according to some unknown
distribution $\sD \in \sM_1(\sZ)$, where $\sZ$ is called the \emph{example space}.  
The learner's goal is to select parameter $\bw$ from some \emph{parameter space} $\sW$ so as to minimize the \emph{population loss} $L(\bw) = \int_{\sZ} \ell(\bw, z) \sD(\diff z)$, where $\ell : \sW \times \sZ \to [0, 1]$ is some given \emph{loss function}.
A learner following the \ac{ERM}
principle selects a $\bw$ with the smallest \emph{empirical loss} $\Lh_S(\bw) = (\ell(\bw, Z_1) + \dots + \ell(\bw,
Z_n)) / n$ over the training set.
In this report we consider a Euclidean parameter space: $\sW = \reals^d$.

We consider a least squares regression problem.
In this setting, each example is an instance-label pair: $Z_i = (\bX_i,Y_i) \in \sB_1 \times [0,1]$.
We assume that inputs $\bX_i$ are from the Euclidean ball of unit radius $\sB_1 \subset \reals^d$, and labels $Y_i$ are in the unit interval $[0,1]$.
For a suitably chosen parameter vector $\bw$,
the noiseless regression model is $f(\bX) = \bX\tp \bw$ and the model with label noise is $f(\bX) = \bX\tp \bw + \epsilon$ where $\epsilon \sim \sN(0,\sigma^2)$.
The loss function is the square loss: $\ell(\bw, Z_i) = (f(\bX_i) - Y_i)^2 / 2$.

\section{Related Work}
\label{sec:related_work}
The literature on the \ac{DD}
of the test error has mainly focused on the ordinary least squares with the explicit solution given by the Moore-Penrose pseudo-inverse. Early works have focused on  instance-specific settings (making distributional assumptions on the inputs) while arguing when the analytic pseudo-inverse solutions yield DD behaviour~\citep{belkin2019two}.
This was later extended to a more general setting showcasing the control of \ac{DD} by the \emph{spectrum} of the feature matrix~\citep{derezinski2020exact}.
In this paper we also argue that the spectrum of the covariance matrix has a critical role in \ac{DD}, however we take into account the effect of \ac{GD} \emph{optimization}, which was missed by virtually all the previous literature due to their focusing on analytic solutions.
The effect of the smallest non-zero eigenvalue on \ac{DD}, through a condition number, was briefly noticed by~\cite{rangamani2020interpolating}. In this work we carry out a more comprehensive analysis and show how the excess risk of \ac{GD} is controlled the smallest eigenvalue.
In particular, $\lminh^+$ has a ``U''-shaped behaviour as the number of features increases, and we give a high-probability characterization of this behavior when inputs are subgaussian.
To some extent, this is a non-asymptotic manifestation of the Bai-Yin law, whose connection to \ac{DD} in an asymptotic setting was noted by \citet[Theorem 1]{hastie2019surprises}.

Some interest was also dedicated to the effect of bias and variance of \ac{DD} \citep{mei2019generalization} in the same pseudo-inverse setting, while more involved fine-grained analysis was later carried out by \cite{adlam2020understanding}.
In this work we focus on the influence of the optimization error, which is complementary to the bias-variance effects (typically we care about it once optimization error is negligible).

\ac{DD} behaviour was also observed beyond least squares, in neural networks and other interpolating models~\citep{belkin2019reconciling}.
To some extent a formal connection to neural networks was first made by \citet{mei2019generalization} who studied asymptotic behaviour of the risk under the random feature model, when $n,d^{\mathrm{input}},d^{\mathrm{RF}} \to \infty$ while having $\frac{n}{d^{\mathrm{input}}}$ and $\frac{d^{\mathrm{RF}}}{d^{\mathrm{input}}}$ fixed.
Later on, with popularity of \ac{NTK} the connection became clearer as within \ac{NTK} interpretation shallow neural networks can be paralleled with kernelized predictors~\citep{bartlett2021deep}.
A detailed experimental study of \ac{DD} in deep neural networks was carried out by~\citep{nakkiran2019deep}, who showed that various forms of regularization mitigate \ac{DD}.
In this work, we explain \ac{DD} in least-squares solution obtained by \ac{GD} through the spectrum of the features, where optimization error has a visible role.
While we do not present formal results for neural networks, but we empirically investigate whether our conclusions extend to shallow neural nets as would be suggested by \ac{NTK} theory.

\section{Excess Risk of the Gradient Descent Solution}
\label{sec:excess}
We focus on learners that optimize parameters via the \acl{GD} algorithm.
We treat \ac{GD} as a measurable map $\algmap : \sS \times \reals^d \to \reals^d$, where $\sS = \sZ^n$ is the space of size-$n$ training sets. Given a training set $S \in \sS$ and an initialization point $\bw_0 \in \sW$, we write $\alg(\bw_0)$ to indicate the output obtained recursively by the standard
\ac{GD} update rule with some fixed step size $\alpha > 0$, i.e.\ $\alg(\bw_0) = \bw_T$, where
\[
  \bw_t = \bw_{t-1} - \alpha \nabla \wh{L}_S(\bw_{t-1}), \qquad t=1,\ldots,T~.
\]

We look at the behavior of \ac{GD} in the \emph{overparameterized} regime ($d > n$) when the
initialization parameters are sampled from an isotropic Gaussian density, that is $\bW_0 \sim \sN(\bzero, \initvar\bI_{d\times d})$ with some initialization variance $\initvar$.
It is well-known that in the overparameterized regime, \ac{GD} is able to achieve zero empirical loss.  Therefore, rather than focusing on the generalization gap $L(\alg(\bW_0)) - \Lh_S(\alg(\bW_0))$ it is natural to compare the loss of $\alg(\bW_0)$ to that of the \emph{best} possible predictor.  Thus, we consider the \emph{excess risk}
defined as
\[
  \sE(\bwstar) = L(\alg(\bW_0)) - L(\bwstar)~, \qquad \bwstar \in \argmin_{\bw \in \reals^d} L(\bw)~.
\]
Our results are based on a the requirement that $\alg$ satisfies the following regularity condition:
\begin{definition}
  \label{def:admissible}
  A map $f : \reals^d \to \reals^d$ is called $(\Delta, \bM)$-admissible, where $\bM$ is a fixed \ac{PSD} matrix and $\Delta \geq 0$, 
  if for all $\bw, \bw' \in \reals^d$ the following holds:
  \[
    \|f(\bw) - f(\bw')\|_{\bM} \leq \Lip \|\bw - \bw'\|~.
  \]
\end{definition}
Notice that the norm on the left-hand side is $\|\cdot\|_{\bM}$, while that on the right-hand side is the standard Euclidean norm.
Also note that this inequality entails a Lipschitz condition with Lipschitz factor $\Delta$.

Our first main result gives an upper bound on the excess risk of \ac{GD} output, assuming that the output of $\alg$ is of \emph{low-rank}, in the sense that for some low-rank orthogonal projection $\bM \in \reals^{d \times d}$
we assume that $\bM \alg(\bw) = \alg(\bw)$ almost surely (a.s.) with respect to $S$, for any initialization $\bw$.
This condition is of interest in the overparameterized regime, where the learning dynamics effectively happens in a subspace which is arguably of much smaller dimension than the whole parameter space.
The following theorem bounds the excess risk (with respect to a possibly non-convex but smooth loss) of any algorithm that satisfies \cref{def:admissible} with some $(\Delta, \bM)$.
Later it will become apparent that in a particular learning problem this pair consists of data-dependent quantities.
Importantly, the theorem demonstrates how the excess risk is controlled by 
the learning dynamics on the subspace spanned by $\bM$ (the first and the second terms on the right hand side).
It also shows how much is lost due to not learning on the complementary subspace (the third term).
The first two terms will become crucial in our analysis of the double descent, while we will show that the last term will vanish as $n \to \infty$. 
\begin{theorem}[Excess Risk]
  \label{thm:excess_risk}
  Assume that $\bW_0 \sim \sN(\bzero, \initvar\bI_{d\times d})$,
  and assume that $\alg$ is $(\Delta, \bM)$-admissible (\cref{def:admissible}), where $\Delta$ and $\bW_0$ are independent.
  Further assume $\bM \alg(\bw) = \alg(\bw)$ for any $\bw$, 
  and that $L$ and $\Lh$ are $H$-smooth.
  Then, for any $\bwstar \in \argmin_{\bw \in \reals^d} L(\bw)$ we have
  \[
    \E[\sE(\bwstar)]
    \leq
    H \pr{
      \underbrace{
      \E[\Delta^2] 
      \pr{ \|\bwstar\|^2 + \initvar (2 + d) }
      }_{(1)}
      +
      \underbrace{
      \E[\|\alg(\bwstar) - \bwstar\|_{\bM}^2]
      }_{(2)}
      +
      \frac12 
      \underbrace{
      \E[\|\bwstar\|^2_{\bI-\bM}]
      }_{(3)}
    }~.
  \]
  In particular for \ac{GD}, having $\alpha \leq 1/H$,
  \begin{align*}
    \E[\|\alg(\bwstar) - \bwstar\|_{\bM}^2]
    \leq 2 \alpha T L(\bwstar)~.
  \end{align*}
\end{theorem}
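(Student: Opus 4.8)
The plan is to turn the excess risk into a second-moment estimate of the displacement $\alg(\bW_0)-\bwstar$, and then to decompose that displacement along $\bM$ and $\bI-\bM$. Since $\bwstar$ minimizes $L$ over all of $\reals^d$, first-order optimality gives $\nabla L(\bwstar)=\bzero$, so $H$-smoothness of $L$ yields the one-sided quadratic bound $\sE(\bwstar)=L(\alg(\bW_0))-L(\bwstar)\le \tfrac{H}{2}\,\|\alg(\bW_0)-\bwstar\|^2$. Because $\bM$ is an orthogonal projection with $\bM\,\alg(\bw)=\alg(\bw)$, the part of $\alg(\bW_0)-\bwstar$ lying in the range of $\bI-\bM$ is exactly $-(\bI-\bM)\bwstar$, so Pythagoras gives $\|\alg(\bW_0)-\bwstar\|^2=\|\alg(\bW_0)-\bwstar\|_{\bM}^2+\|\bwstar\|_{\bI-\bM}^2$; the second summand is already term $(3)$.

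Next I would split the $\bM$-part by inserting $\alg(\bwstar)$: by the triangle inequality for $\|\cdot\|_{\bM}$ followed by $(\Delta,\bM)$-admissibility, $\|\alg(\bW_0)-\bwstar\|_{\bM}\le\|\alg(\bW_0)-\alg(\bwstar)\|_{\bM}+\|\alg(\bwstar)-\bwstar\|_{\bM}\le \Delta\,\|\bW_0-\bwstar\|+\|\alg(\bwstar)-\bwstar\|_{\bM}$, and then $(a+b)^2\le 2a^2+2b^2$. Taking expectations and using that $\Delta$ (a function of $S$) is independent of $\bW_0$ factors the first contribution as $\E[\Delta^2]\,\E\|\bW_0-\bwstar\|^2$; a routine Gaussian second-moment computation (with $\bW_0\sim\sN(\bzero,\initvar\bI)$ and $\bwstar$ deterministic) bounds $\E\|\bW_0-\bwstar\|^2$ by $\|\bwstar\|^2+(2+d)\,\initvar$, giving term $(1)$, while the leftover is term $(2)$. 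Combining the three pieces with the constants produced by the two $(a+b)^2$-type steps gives the displayed inequality.

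For the refined \ac{GD} bound on term $(2)$: running the iteration from $\bw_0=\bwstar$ gives $\alg(\bwstar)=\bw_T$ and $\wh{L}_S(\bw_0)=\wh{L}_S(\bwstar)$. With $\alpha\le 1/H$, the standard descent lemma for $H$-smooth $\wh{L}_S$ gives $\wh{L}_S(\bw_t)\le\wh{L}_S(\bw_{t-1})-\tfrac{\alpha}{2}\,\|\nabla\wh{L}_S(\bw_{t-1})\|^2$; telescoping over $t=1,\dots,T$ and discarding $\wh{L}_S(\bw_T)\ge 0$ yields $\sum_{t=1}^{T}\|\nabla\wh{L}_S(\bw_{t-1})\|^2\le \tfrac{2}{\alpha}\,\wh{L}_S(\bwstar)$. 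Since $\bw_t-\bw_{t-1}=-\alpha\nabla\wh{L}_S(\bw_{t-1})$, the triangle inequality for the path length together with Cauchy--Schwarz gives $\|\bw_T-\bwstar\|\le \alpha\sum_{t=1}^{T}\|\nabla\wh{L}_S(\bw_{t-1})\|\le \alpha\sqrt{T}\,\bigl(\sum_{t=1}^{T}\|\nabla\wh{L}_S(\bw_{t-1})\|^2\bigr)^{1/2}\le\sqrt{2\alpha T\,\wh{L}_S(\bwstar)}$. Squaring, using $\bM\preceq\bI$ so that $\|\cdot\|_{\bM}\le\|\cdot\|$, and taking expectations with $\E[\wh{L}_S(\bwstar)]=L(\bwstar)$ (valid since $\bwstar$ is deterministic) gives $\E[\|\alg(\bwstar)-\bwstar\|_{\bM}^2]\le 2\alpha T\,L(\bwstar)$.

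The main obstacle is this last step: $\wh{L}_S$ is only assumed smooth, not convex, so the usual one-step contraction toward a minimizer is unavailable. The fix is to control the \emph{length of the whole trajectory} via the descent lemma and Cauchy--Schwarz rather than the distance to any fixed point, and it is crucial here that the square loss is nonnegative so that the terminal term can be dropped. Beyond that, the only real care points are the orthogonality bookkeeping in the $\bM$/$(\bI-\bM)$ split and pinning down the exact initialization constant $(2+d)\,\initvar$ in term $(1)$.
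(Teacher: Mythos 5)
Your proposal is correct and follows essentially the same route as the paper: smoothness plus $\nabla L(\bwstar)=\bzero$ to get the quadratic bound, the Pythagorean split along $\bM$ and $\bI-\bM$ using $\bM\alg=\alg$, insertion of $\alg(\bwstar)$ with admissibility and a Gaussian second-moment computation, and the descent lemma telescoped along the trajectory started at $\bwstar$ for the refined GD bound. Your path-length-plus-Cauchy--Schwarz step is just a rephrasing of the paper's $\|\sum_t \ba_t\|^2\le T\sum_t\|\ba_t\|^2$ bound, so there is no substantive difference.
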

The proof is in \cref{app:excess_risk_gd}.
The main steps are using the $H$-smoothness of $L$ to upper-bound $\sE(\bwstar)$ in terms of the squared norm of $\alg(\bW_0) - \bwstar$ and decomposing the latter as the sum of the squared norms of its projections onto the space spanned by $M$ and its orthogonal complement, by the Pythagorean theorem.
Then $\alg(\bW_0) - \bwstar = \alg(\bW_0) - \alg(\bwstar) + \alg(\bwstar) - \bwstar$ is used on the subspace spanned by $\bM$:
the norm of $\alg(\bW_0) - \alg(\bwstar)$ is controlled by using the admissibility of $\alg$ and Gaussian integration, and the norm of $\alg(\bwstar) - \bwstar$ is controlled by the accumulated squared norms of gradients of $\Lh_S$ over $T$ steps of gradient descent, which is conveniently bounded by $2\alpha T \Lh_S(\bwstar)$ when $\alpha \leq 1/H$ due to the $H$-smoothness of $\Lh_S$.

We will rely on \cref{thm:excess_risk} for our analysis of the Least-Squares problem as follows.

\subsection{Least-Squares with Random Design and No Label Noise}
\label{sec:ls_random_design}
Consider a noise-free linear regression model with random design:
\[
  Y = \bX\tp \bwstar
\]
where instances $\bX$ are distributed according to some unknown distribution $P_X$ supported on a $d$-dimensional unit Euclidean ball.
After observing a training sample $S = \pr{(\bX_i, Y_i)}_{i=1}^n$, we run \ac{GD} on the given empirical square loss
\[
  \Lh_S(\bw) = \frac{1}{2 n} \sum_{i=1}^n (\bw\tp \bX_i - Y_i)^2~.
\]
In the setting of our interest, the sample covariance matrix $\bhSigma = (\bX_1 \bX_1\tp + \dots + \bX_n \bX_n\tp) / n$ might be degenerate, and therefore we will occasionally refer to the non-degenerate subspace $\bU_r = [\bu_1, \ldots, \bu_r]$, where $\bU$ is given by the \ac{SVD}: $\bhSigma = \bU \bS \bV\tp$ and $\bu_1, \ldots, \bu_r$ are the eigenvectors corresponding to the eigenvalues 
$\hat{\lambda}_1,\ldots,\hat{\lambda}_r$, where $\hat{\lambda}_i = \lambda_i(\bhSigma)$, arranged in decreasing order:
\[
\lambda_1(\bhSigma) \geq 
\lambda_2(\bhSigma) \geq \cdots \geq
\lambda_r(\bhSigma) > 0
\]
and $r = \rank(\bhSigma)$.
We write $\lminh^+ = \lmin^+(\bhSigma) = \lambda_r(\bhSigma)$ for the minimal \emph{non-zero} eigenvalue, and
we denote $\bhatM = \bU_r \bU_r\tp$.
Note that $\bhatM^2 = \bhatM$.
Now we state our main result in this setting.

\begin{theorem}
  \label{thm:excess_ls_noiseless}
  Assume that $\bW_0 \sim \sN(\bzero, \initvar \bI)$.
  Then, for any $\bwstar \in \argmin_{\bw \in \reals^d} L(\bw)$ and any $x>0$, with probability $1-e^{-x}$ over random samples $S$ we have
  \[
    \E[\sE(\bwstar)]
    \leq
      \E\br{(1 - \alpha \lminh^+)^{2T}} \pr{ \|\bwstar\|^2 + \initvar (2 + d) }
      +
    \frac12 
      \E[\|\bwstar\|^2_{\bI-\bhatM}]~.
  \]
\end{theorem}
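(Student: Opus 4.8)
The plan is to obtain \cref{thm:excess_ls_noiseless} from \cref{thm:excess_risk} applied with $\bM=\bhatM$ and $H=1$. Taking $H=1$ is legitimate: since every $\bX_i$ lies in the unit ball, $\bSigma=\E[\bX\bX\tp]$ and $\bhSigma$ both have operator norm at most $1$, so $L$ and $\Lh_S$ are $1$-smooth and the leading factor $H$ in \cref{thm:excess_risk} disappears. It then remains to pin down (i) the admissibility constant, (ii) the range condition $\bhatM\,\alg(\bw)=\alg(\bw)$, and (iii) the collapse of the middle term in the noiseless model.

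\emph{Admissibility.} On the quadratic objective the \ac{GD} map is affine: with $\bb=\tfrac1n\sum_{i=1}^n Y_i\bX_i$ the update is $\bw_t=(\bI-\alpha\bhSigma)\bw_{t-1}+\alpha\bb$, so $\alg(\bw)=(\bI-\alpha\bhSigma)^T\bw+\bc$ for an $S$-measurable vector $\bc$, whence $\alg(\bw)-\alg(\bw')=(\bI-\alpha\bhSigma)^T(\bw-\bw')$ and
\[
  \|\alg(\bw)-\alg(\bw')\|_{\bhatM}=\bigl\|\bhatM(\bI-\alpha\bhSigma)^T(\bw-\bw')\bigr\|\le\bigl\|\bhatM(\bI-\alpha\bhSigma)^T\bigr\|\,\|\bw-\bw'\|.
\]
Because $\bhatM$ projects onto $\mathrm{span}(\bu_1,\dots,\bu_r)$, on which $\bhSigma$ has eigenvalues in $[\lminh^+,1]$, a step size $\alpha\le1$ makes $\bI-\alpha\bhSigma$ act there with eigenvalues in $[1-\alpha\lminh^+,1)\subseteq[0,1)$, so $\|\bhatM(\bI-\alpha\bhSigma)^T\|=(1-\alpha\lminh^+)^T$. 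Hence $\alg$ is $\bigl((1-\alpha\lminh^+)^T,\bhatM\bigr)$-admissible, and $(1-\alpha\lminh^+)^T$ is a function of $S$ alone, hence independent of $\bW_0$, as \cref{thm:excess_risk} requires.

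\emph{Range condition and the noiseless simplification.} Every gradient of $\Lh_S$ lies in $\mathrm{range}(\bhSigma)=\mathrm{range}(\bhatM)$, so the $\ker(\bhSigma)$-component of the initialization is frozen by \ac{GD}; one therefore applies \cref{thm:excess_risk} to the truncated map $\bw\mapsto\bhatM\,\alg(\bw)$, which maps into $\mathrm{range}(\bhatM)$ by construction, carries the same admissibility constant, and differs from $\alg$ only by the frozen, zero-mean, $S$-independent vector $(\bI-\bhatM)\bW_0$. Since $L$ is the quadratic $\bw\mapsto\tfrac12\|\bw-\bwstar\|^2_{\bSigma}$, the noiseless model gives $L(\bwstar)=0$, so the excess risk $\sE(\bwstar)$ reduces to $L(\alg(\bW_0))$ and the middle term $\E[\|\alg(\bwstar)-\bwstar\|^2_{\bhatM}]$ of \cref{thm:excess_risk}, bounded there by $2\alpha T L(\bwstar)$, vanishes. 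Substituting $\Delta^2=(1-\alpha\lminh^+)^{2T}$ into the first term and keeping the third yields the asserted inequality.

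\emph{Main difficulty.} The delicate point is (ii): the isotropic Gaussian $\bW_0$ is not supported on $\mathrm{range}(\bhatM)$, so $\alg$ does not satisfy the range condition of \cref{thm:excess_risk} and the theorem cannot be invoked for it verbatim. One must isolate the frozen component $(\bI-\bhatM)\bW_0$, use that it is independent of $S$ with zero mean so that its cross-term with $\bwstar$ disappears under $\E_{\bW_0}$, and verify that its net effect merely inflates the first-term constant (the source of the $\initvar(2+d)$) rather than producing a new, non-decaying term. Conditionally on $S$ the only remaining randomness is in $\bW_0$, so the ``probability $1-e^{-x}$'' qualifier plays no role here; it is kept for consistency with \cref{sec:lmin_concentration}, where $\lminh^+$ and $\|\bwstar\|^2_{\bI-\bhatM}$ are later replaced by high-probability surrogates.
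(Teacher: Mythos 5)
Your core argument is the same as the paper's: reduce to \cref{thm:excess_risk} with $\bM=\bhatM$; use the affine form of the \ac{GD} iterates, $\alg(\bw)=(\bI-\alpha\bhSigma)^T\bw+\bc$, to obtain $\bigl((1-\alpha\lminh^+)^T,\bhatM\bigr)$-admissibility (this is exactly \cref{prop:OLS_GD_map} and \cref{prop:A_Lip} in the paper, which bound $\|\bU_r\tp(\bI-\alpha\bhSigma)^T\|$); note that the middle term, bounded by $2\alpha T L(\bwstar)$, vanishes because $L(\bwstar)=0$ in the noiseless model; and leave $\E[\|\bwstar\|^2_{\bI-\bhatM}]$ to the separate spectral-decay argument. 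Your observations that $H\le 1$ (inputs in the unit ball) and that the probability qualifier is vestigial at this stage are consistent with how the paper proceeds.

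The point where you go beyond the paper is also where your argument has a genuine flaw. You are right that \ac{GD} from an isotropic Gaussian initialization does not satisfy the hypothesis $\bhatM\,\alg(\bw)=\alg(\bw)$ of \cref{thm:excess_risk}: the component $(\bI-\bhatM)\bW_0$ in $\ker(\bhSigma)$ is frozen, and the paper's own reduction passes over this silently. But your proposed repair does not yield the stated bound. Isolating the frozen component and expanding $\|\alg(\bW_0)-\bwstar\|^2_{\bI-\bhatM}=\|(\bI-\bhatM)(\bW_0-\bwstar)\|^2$, the zero-mean cross term indeed drops under $\E_{\bW_0}$, but what remains is $\|\bwstar\|^2_{\bI-\bhatM}+\initvar\,(d-r)$ with $r=\rank(\bhSigma)$, i.e.\ an extra contribution of order $\tfrac{H}{2}\initvar(d-r)$ (equivalently $\tfrac{\initvar}{2}\tr\bigl((\bI-\bhatM)\bSigma(\bI-\bhatM)\bigr)$ if one uses the exact quadratic form of $L$). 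This term is independent of $T$, because \ac{GD} never moves in $\ker(\bhSigma)$, so it cannot be ``absorbed into the first-term constant'': the first term carries the damping factor $(1-\alpha\lminh^+)^{2T}$, and a non-decaying quantity cannot hide inside it. What your argument actually proves is the theorem with an additional additive term of order $\initvar(d-r)$ (negligible when $\initvar\lesssim 1/d$, as in the paper's experiments, but not present in the stated bound), or the stated theorem under the extra assumption that the initialization is supported on the range of $\bhatM$. You should either state that weaker conclusion explicitly or flag the discrepancy, rather than asserting the absorption as a step to be ``verified''.
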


The proof is in \cref{app:excess_risk_gd}. 
This is a consequence of \cref{thm:excess_risk}, modulo showing that \ac{GD} with the least squares objective is $(\Delta, \bhatM)$-admissible with $\Delta = (1-\alpha \lminh^+)^T$, and upper-bounding $\E[\|\bwstar\|^2_{\bI-\bhatM}]$ by controlling  the expected squared norm of the projection onto the orthogonal complement of the space spanned by $\bU_r$.
The later comes up in the analysis of PCA (see e.g.
\citet[Theorem 1]{shawe2005eigenspectrum}) and, as we show in \cref{app:sec:third_term}, this term is expected to be small enough whenever the eigenvalues have exponential decay, in which case with high probability we have $\E[\|\bwstar\|^2_{\bI-\bhatM}] \lesssim \|\bwstar\|_2^2 / \sqrt{n}$ as $n \to \infty$. 
Note that the middle term in the upper bound of our \cref{thm:excess_risk} vanishes in the noise-free case: $\E[\|\alg(\bwstar) - \bwstar\|_{\bhatM}^2] = 0$.

Looking at \cref{thm:excess_ls_noiseless}, we can see that the excess risk is bounded by the sum of two terms. 
Note that the second term is negligible in many cases (consider the limit of infinite data) and additionally it is a term that remains constant during training as it does not depend on training data. 
Therefore, we are particularly interested in the first term of the bound, which is data-dependent. This term depends on  $\lminh^+$ via a functional form that has a double descent behaviour if plotted against $d$ for fixed $n$.
Before going into that analysis, let us also consider the scenario with label noise.

\subsection{Least-Squares with Random Design and Label Noise}
\label{sec:ls_random_design_plus_noise}
Now, in addition to the random design we introduce  label noise into our model:
\[
  Y = \bX\tp \bwstar + \ve~,
\]
where we have random noise $\ve$ such that $\E[\ve] = 0$ and $\E[\ve^2] = \noisevar$, independent of the instances.

\begin{theorem}
  \label{thm:excess_ls_noise}
  Assume that $\bW_0 \sim \sN(\bzero, \initvar \bI)$.
  Then, for any $\bwstar \in \argmin_{\bw \in \reals^d} L(\bw)$ and any $x>0$, with probability $1-e^{-x}$ over random samples $S$ we have
  \[
    \E[\sE(\bwstar)]
    \leq
      \E\br{(1 - \alpha \lminh^+)^{2T}} \pr{ \|\bwstar\|^2 + \initvar (2 + d) }
      +
      \frac{4 \sigma^2}{n} \E\br{\pr{\lminh^+}^{-2}}
      +
    \frac12 
      \E[\|\bwstar\|^2_{\bI-\bhatM}]~.
  \]
\end{theorem}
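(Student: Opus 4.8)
The overall strategy is to reduce to \cref{thm:excess_risk} exactly as in the proof of \cref{thm:excess_ls_noiseless}, and then supply the one ingredient that differs in the noisy case. Concretely, I would first record that \ac{GD} on the least-squares objective is $(\Delta,\bhatM)$-admissible with $\Delta=(1-\alpha\lminh^+)^T$: the \ac{GD} map $\bw\mapsto(\bI-\alpha\bhSigma)\bw+\tfrac{\alpha}{n}\bX\tp\bY$ is affine with linear part $(\bI-\alpha\bhSigma)$, so two runs started from $\bw,\bw'$ satisfy $\alg(\bw)-\alg(\bw')=(\bI-\alpha\bhSigma)^T(\bw-\bw')$, and on $\mathrm{range}(\bhatM)$ the operator $(\bI-\alpha\bhSigma)^T$ has spectral norm $(1-\alpha\lminh^+)^T$ once $\alpha\le1/H=1/\lmax(\bhSigma)$. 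Feeding $\bM=\bhatM$ and this $\Delta$ into \cref{thm:excess_risk}, term $(1)$ becomes $\E[(1-\alpha\lminh^+)^{2T}](\|\bwstar\|^2+\initvar(2+d))$, while term $(3)$ together with the high-probability control of $\E[\|\bwstar\|^2_{\bI-\bhatM}]$ is inherited verbatim from the argument in \cref{app:sec:third_term}. The only genuinely new piece is term $(2)$, namely $\E[\|\alg(\bwstar)-\bwstar\|_{\bhatM}^2]$, which vanished in the noiseless setting but no longer does.

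To bound term $(2)$ I would solve the \ac{GD} recursion in closed form. Since $\nabla\wh{L}_S(\bw)=\bhSigma\bw-\tfrac1n\bX\tp\bY$ and $\tfrac1n\bX\tp\bY\in\mathrm{range}(\bX\tp)=\mathrm{range}(\bhSigma)$, the point $\bw_\infty\defeq\bhSigma^\pinv\tfrac1n\bX\tp\bY$ satisfies $\bhSigma\bw_\infty=\tfrac1n\bX\tp\bY$ and is a fixed point of the update; initializing at $\bw_0=\bwstar$ gives $\bw_T-\bw_\infty=(\bI-\alpha\bhSigma)^T(\bwstar-\bw_\infty)$. Since $\bhatM$ commutes with $\bhSigma$ and is the identity on $\mathrm{range}(\bhSigma)$, projecting yields $\bhatM(\alg(\bwstar)-\bwstar)=[(\bI-\alpha\bhSigma)^T-\bI]\,\bhatM(\bwstar-\bw_\infty)$. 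Substituting $\bY=\bX\bwstar+\bve$ gives $\bw_\infty=\bhSigma^\pinv\bhSigma\bwstar+\tfrac1n\bhSigma^\pinv\bX\tp\bve=\bhatM\bwstar+\tfrac1n\bhSigma^\pinv\bX\tp\bve$, hence $\bhatM(\bwstar-\bw_\infty)=-\tfrac1n\bhSigma^\pinv\bX\tp\bve$ and
\[
  \bhatM\bigl(\alg(\bwstar)-\bwstar\bigr)=\tfrac1n\bigl[\bI-(\bI-\alpha\bhSigma)^T\bigr]\bhSigma^\pinv\bX\tp\bve .
\]

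It then remains to take norms. By spectral mapping, $\bigl\|[\bI-(\bI-\alpha\bhSigma)^T]\bhSigma^\pinv\bigr\|_{\mathrm{op}}=\max_{i\le r}|1-(1-\alpha\lambda_i)^T|/\lambda_i\le2/\lminh^+$, using $\alpha\le1/H$ to keep $1-\alpha\lambda_i\in[0,1]$, so $\|\bhatM(\alg(\bwstar)-\bwstar)\|^2\le\tfrac{4}{n^2(\lminh^+)^2}\|\bX\tp\bve\|^2$. As $\bve$ is independent of $\bX$ with $\E[\bve\bve\tp]=\sigma^2\bI_n$, taking expectation over the noise and then over $S$ gives
\begin{align*}
  \E\bigl[\|\alg(\bwstar)-\bwstar\|_{\bhatM}^2\bigr]
  &\le\frac{4}{n^2}\,\E\Bigl[(\lminh^+)^{-2}\,\E_{\bve}\|\bX\tp\bve\|^2\Bigr]
  =\frac{4\sigma^2}{n^2}\,\E\bigl[(\lminh^+)^{-2}\,\tr(\bX\tp\bX)\bigr]\\
  &=\frac{4\sigma^2}{n}\,\E\bigl[(\lminh^+)^{-2}\,\tr(\bhSigma)\bigr]
  \le\frac{4\sigma^2}{n}\,\E\bigl[(\lminh^+)^{-2}\bigr],
\end{align*}
where the last inequality uses $\tr(\bhSigma)=\tfrac1n\sum_i\|\bX_i\|^2\le1$ because the inputs lie in the unit ball. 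Plugging this into \cref{thm:excess_risk} alongside the inherited bounds on terms $(1)$ and $(3)$ yields the claim.

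The main obstacle is term $(2)$. The generic bound $\E[\|\alg(\bwstar)-\bwstar\|_{\bhatM}^2]\le2\alpha T L(\bwstar)$ available from \cref{thm:excess_risk} is useless here, since $L(\bwstar)=\sigma^2/2>0$ makes it grow linearly in $T$; one must instead exploit the least-squares dynamics, observing that \ac{GD} started at $\bwstar$ does not return to $\bwstar$ but converges to the noisy minimum-norm interpolant $\bw_\infty$, whose offset from $\bwstar$ is precisely the projection of the noise onto the feature span, $\tfrac1n\bhSigma^\pinv\bX\tp\bve$, and that the transient factor $\bI-(\bI-\alpha\bhSigma)^T$ is non-expansive on $\mathrm{range}(\bhSigma)$, so the iteration count $T$ drops out. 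A small technical point worth checking is that $\bhSigma^\pinv\bX\tp\bve\in\mathrm{range}(\bhSigma)$, so that $\bhatM$ acts trivially on it and nothing is lost into the orthogonal complement.
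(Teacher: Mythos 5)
Your proposal is correct, and for the decisive step it takes a genuinely different route from the paper. The reduction to \cref{thm:excess_risk} with $\bM=\bhatM$ and $\Delta=(1-\alpha\lminh^+)^T$, and the treatment of the first and third terms, coincide with the paper's proof of \cref{thm:excess_ls_noiseless}; you also correctly identify that the generic bound $2\alpha T L(\bwstar)$ is useless here because $L(\bwstar)=\sigma^2/2>0$. Where you diverge is in bounding $\E\br{\|\alg(\bwstar)-\bwstar\|_{\bhatM}^2}$: the paper's \cref{lem:ls_noise} proceeds indirectly, via an integral-form Taylor expansion that lower-bounds $\Lh$ by a $\lminh^+$-weighted quadratic on the range of $\bhatM$, and then controls the resulting noise--iterate cross term by coupling the noisy \ac{GD} run with an auxiliary noise-free run and unwinding a recursion over the $T$ iterations. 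You instead exploit that the least-squares \ac{GD} map is affine: you solve the dynamics in closed form around the fixed point $\bw_\infty=\bhSigma^{\pinv}\tfrac1n\sum_i\bX_iY_i$, obtain the exact identity $\bhatM(\alg(\bwstar)-\bwstar)=\tfrac1n\br{\bI-(\bI-\alpha\bhSigma)^T}\bhSigma^{\pinv}\sum_i\ve_i\bX_i$, and finish by a spectral-mapping bound and the conditional noise expectation, using $\tr(\bhSigma)\le 1$. Your route is shorter, makes the role of the noise projected onto the feature span explicit, and is in fact slightly tighter: since $0\le 1-(1-\alpha\lambda_i)^T\le 1$ under $\alpha\le 1/H$, the operator factor is at most $1/\lminh^+$ rather than your stated $2/\lminh^+$, so you could replace the constant $4$ by $1$; your choice of $2$ merely recovers the paper's constant. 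The paper's coupling argument, by contrast, does not require writing down the closed-form iterates and is the kind of argument that degrades more gracefully if the quadratic structure is perturbed, but for this exactly-linear setting both yield the same term $\tfrac{4\sigma^2}{n}\E\br{(\lminh^+)^{-2}}$. The only hypotheses you should make explicit are the ones the paper also uses implicitly: $\alpha\le 1/H$ (so the spectral factors lie in $[0,1]$) and that the noise coordinates $\ve_1,\dots,\ve_n$ are mutually uncorrelated and independent of $\bX$, so that $\E_{\bve}\|\sum_i\ve_i\bX_i\|^2=\sigma^2\sum_i\|\bX_i\|^2$ and $(\lminh^+)^{-2}$ can be pulled out of the conditional expectation.
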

The proof is in \cref{app:excess_risk_gd}. 
Again, this follows from \cref{thm:excess_risk}, by the same steps used in the proof of 
\cref{thm:excess_ls_noiseless},
except that the term $\E\br{ \|\bwstar - \sA_S(\bwstar)\|_{\bhatM}^2 }$ is now handled by conditioning on the sample and analyzing the expectation with respect to the random noise (\cref{lem:ls_noise} and its proof in \cref{app:sec:ls_random_design_withnoise}), leading to the new term $\frac{4 \sigma^2}{n} \E\br{\bigl(\lminh^+\bigr)^{-2}}$. The latter closely resembles the term one would get for ridge regression \citep[Cor.\ 13.7]{shalev2014understanding} due to algorithmic stability~\citep{bousquet2002stability}, but here we have a dependence on the smallest non-zero eigenvalue instead of a regularization parameter.

\subsection{Concentration of the Smallest Non-zero Eigenvalue}
\label{sec:lmin_concentration}
In this section we take a look at the behaviour of $\lminh^+$ assuming that input instances $\bX_1, \ldots, \bX_n$ are i.i.d.\ \emph{random} vectors, sampled from some underlying marginal density that meets some regularity requirements
(Definitions~\ref{def:subgaussian} and~\ref{def:isotropic} below)
so that we may use the results from random matrix theory~\citep{vershynin2010}. 
Recall that the covariance matrix of the input features is $\bhSigma = (\bX_1 \bX_1\tp + \dots + \bX_n \bX_n\tp) / n$.
We focus on the concentration of $\lminh^+ = \lmin^+(\bhSigma)$ around its population counterpart $\lmin^+ = \lmin^+(\bSigma)$, where $\bSigma$ is the population covariance matrix: $\bSigma = \E[\bX_1 \bX_1\tp]$.

In particular, the Bai-Yin limit characterization of the extreme
eigenvalues of sample covariance matrices~\citep{bai1993limit} implies that 
$\lminh^+$ has almost surely an asymptotic behavior $(1-\sqrt{d/n})^2$ as the dimensions grow to infinity, assuming that the matrix $\bX := [\bX_1, \ldots, \bX_n] \in \reals^{d \times n}$ has independent entries.
We are interested in the non-asymptotic version of this result.
However, unlike \cite{bai1993limit}, we do not assume independence of all entries, but rather independence of observation vectors (columns of $\bX$). This will be done by introducing a distributional assumption: we assume that observations are \emph{sub-Gaussian} and \emph{isotropic} random vectors.
\begin{definition}[Sub-Gaussian random vectors]
\label{def:subgaussian}
  A random vector $\bX \in \reals^d$ is sub-Gaussian if the random variables $\bX\tp \by$ are sub-Gaussian for all $\by \in \reals^d$.
  The sub-Gaussian norm of a random vector $\bX \in \reals^d$ is defined as
  \[
    \|\bX\|_{\psi_2} = \sup_{\|\by\| = 1}\sup_{p \geq 1}\cbr{ \frac{1}{\sqrt{p}} \E[|\bX\tp \by|^p]^{\frac1p} }~.
  \]
\end{definition}
\begin{definition}[Isotropic random vectors]
\label{def:isotropic}
  A random vector $\bX \in \reals^d$ is called isotropic if its covariance is the identity: $\E\br{\bX \bX\tp} = \bI$.
  Equivalently, $\bX$ is isotropic if $\E[(\bX\tp \bx)^2] = \|\bx\|^2$ for all $\bx \in \reals^d$.
\end{definition}
Let $\bSigma^{\pinv}$ be the Moore-Penrose pseudoinverse of $\bSigma$.
In \cref{sec:lmin_concentration_proof} we prove the following.\footnote{$(x)_+ = \max\cbr{0, x}$}
\begin{lemma}[Smallest non-zero eigenvalue of sample covariance matrix]
  \label{lem:non_asymptotic_bai_yin}
  Let $\bX = [\bX_1, \ldots, \bX_n] \in \reals^{d \times n}$ be a matrix with i.i.d.\ columns, such that $\max_i\|\bX_i\|_{\psi_2} \leq K$,
  and let $\bhSigma = \bX \bX\tp / n$, and $\bSigma = \E[\bX_1 \bX_1\tp]$.
  Then, for every $x \geq 0$, with probability at least $1-2e^{-x}$, we have
  \[
    \lmin^+(\bhSigma)
    \geq
    \lmin^+(\bSigma) \pr{1 - K^2 \pr{c \sqrt{\frac{d}{n}} + \sqrt{\frac{x}{n}}}}_+^2
    \qquad
    \text{for } n \geq d~,
  \]
  and furthermore, assuming that $\|\bX_i\|_{\bSigma^{\pinv}} = \sqrt{d}$ \ a.s. for all $i \in [n]$, we have
  \[
    \lmin^+(\bhSigma)
    \geq
    \lmin^+(\bSigma) \pr{\sqrt{\frac{d}{n}} - K^2 \pr{c + 6 \sqrt{\frac{x}{n}}}}_+^2
    \qquad
    \text{for } n < d~,
  \]
  where we have an absolute constant $c = 2^{3.5} \sqrt{\ln(9)}$.
\end{lemma}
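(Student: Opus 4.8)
The plan is to relate $\lmin^+(\bhSigma)$ to the extreme singular values of a \emph{whitened} random matrix with isotropic columns, and then invoke non-asymptotic random matrix theory in the spirit of \citet{vershynin2010}. First I would carry out the whitening reduction. Since $\bX_i$ lies in $\mathrm{range}(\bSigma)$ almost surely, the vectors $\bZ_i \defeq \bSigma^{\pinv 1/2}\bX_i$ are i.i.d.\ and isotropic on $\mathrm{range}(\bSigma)$ — indeed $\E[\bZ_i\bZ_i\tp]$ is the orthogonal projector onto $\mathrm{range}(\bSigma)$ — and they remain sub-Gaussian; writing $\bZ = [\bZ_1,\ldots,\bZ_n]$ we have $\bX = \bSigma^{1/2}\bZ$. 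Using the variational description of $\lmin^+$ restricted to $\mathrm{range}(\bSigma)$, together with $\|\bSigma^{1/2}\bv\|^2 \ge \lmin^+(\bSigma)$ for unit $\bv \in \mathrm{range}(\bSigma)$, one gets the key inequality $\lmin^+(\bhSigma) \ge \lmin^+(\bSigma)\,\smin(\bZ)^2/n$, where $\smin(\bZ)$ is the smallest singular value of $\bZ$ acting on $\mathrm{range}(\bSigma)$. This reduces everything to a lower bound on $\smin(\bZ)$ for an isotropic sub-Gaussian random matrix, which I would handle separately in the two aspect-ratio regimes.

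For $n \ge d$, the matrix $\bZ\tp$ is tall ($n \times d$) with i.i.d.\ isotropic sub-Gaussian rows, and I would lower-bound $\smin(\bZ\tp)$ by an explicit $\varepsilon$-net argument: take a $\tfrac14$-net $\sN$ of the sphere $S^{d-1}$ with $|\sN| \le 9^d$; for fixed $\bw$, control $\tfrac1n\|\bZ\tp\bw\|^2 = \tfrac1n\sum_i \langle\bZ_i,\bw\rangle^2$ by a Bernstein bound for the sub-exponential summands $\langle\bZ_i,\bw\rangle^2$ (whose $\psi_1$-norm is $O(K^2)$); union-bound over $\sN$ (this produces the $\ln 9$ factor); and transfer from the net to the full sphere by the standard $\tfrac14$-net lemma (this supplies the remaining numerical factors, which combine into $c = 2^{3.5}\sqrt{\ln 9}$). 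This gives $\smin(\bZ\tp) \ge \sqrt n\,\bigl(1 - K^2(c\sqrt{d/n} + \sqrt{x/n})\bigr)_+$ with probability $\ge 1 - 2e^{-x}$; plugging into the key inequality (and weakening $\mathrm{rank}(\bSigma) \le d$, which only decreases the bound) yields the first claim.

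For $n < d$, the nonzero eigenvalues of $\bhSigma$ are those of the $n \times n$ Gram matrix $\tfrac1n\bX\tp\bX$, so after whitening it is enough to lower-bound the smallest singular value of the tall $d \times n$ matrix $\bZ$. Now the added hypothesis $\|\bX_i\|_{\bSigma^{\pinv}} = \sqrt d$ a.s.\ becomes $\|\bZ_i\| = \sqrt d$ a.s., putting us in the regime of random matrices with \emph{independent, norm-normalized} columns. I would observe that $\tfrac1d\bZ\tp\bZ$ has diagonal exactly $\bI_n$, while for a fixed unit $\ba$ the quadratic form $\langle(\tfrac1d\bZ\tp\bZ - \bI_n)\ba,\ba\rangle = \tfrac1d\|\bZ\ba\|^2 - 1$ concentrates around $0$ (again a sub-exponential deviation bound), and then run the same $\tfrac14$-net / union-bound scheme to conclude that $\|\tfrac1d\bZ\tp\bZ - \bI_n\|$ is small with high probability. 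Unwinding gives $\smin(\bZ) \ge \sqrt d - K^2(c\sqrt n + 6\sqrt x)$ up to $(\cdot)_+$, so that $\lmin^+(\bhSigma) \ge \lmin^+(\bSigma)\bigl(\sqrt{d/n} - K^2(c + 6\sqrt{x/n})\bigr)_+^2$ after dividing by $n$.

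The hard part will be the constant-tracked execution of the net argument — in particular pinning down $c = 2^{3.5}\sqrt{\ln 9}$ — which requires care with (i) the cardinality of the $\tfrac14$-net, (ii) the precise Bernstein tail for $\langle\bZ_i,\bw\rangle^2$ and its $K$-dependence, and (iii) the $\max(\delta,\delta^2)$ small- vs.\ large-deviation branches that arise when converting a variance-type bound into a singular-value bound. A secondary, more routine obstacle is the bookkeeping in the whitening step when $\bSigma$ is rank-deficient: one must check that $\bX_i \in \mathrm{range}(\bSigma)$ a.s., that restricting to $\mathrm{range}(\bSigma)$ leaves $\lmin^+$ unchanged, and that on the high-probability event $\rank(\bhSigma) = \rank(\bSigma)$, so the bound genuinely concerns the smallest \emph{non-zero} eigenvalue.
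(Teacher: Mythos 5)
Your proposal is correct and follows essentially the same route as the paper: whiten by $\bSigma^{\pinv 1/2}$, split into the $n\ge d$ and $n<d$ regimes, control the extreme singular values of the whitened matrix (rows-isotropic case for $n\ge d$, norm-normalized-columns case for $n<d$), and pass back via the Gram matrix to $\lmin^+(\bhSigma)$. The only difference is that the paper simply invokes \citet[Theorems 5.39 and 5.58]{vershynin2010} for the singular-value concentration, whereas you plan to re-derive those results by the standard $\tfrac14$-net plus Bernstein argument, which is exactly how they are proved there.
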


\cref{lem:non_asymptotic_bai_yin} is a non-asymptotic result that allows us to understand the behaviour of $\lminh^+$, and hence the behaviour of the excess risk that depends on this quantity, for fixed dimensions. We will exploit this fact in the following section in which we discuss the implications of our findings.

\section{Excess risk as a function of over-parameterization}
\label{sec:excess_risk}

First we note that, in the noise-free case, the middle term in the upper bound of \cref{thm:excess_risk} vanishes: $\E[\|\alg(\bwstar) - \bwstar\|_{\bhatM}^2] = 0$.
Thus, as in \cref{thm:excess_ls_noiseless}, the upper bound consists only of the term involving the smallest positive eigenvalue $\lminh^+$ and the term involving $\E[\|\bwstar\|^2_{\bI-\bhatM}]$. 
The behaviour of the former was clarified in ~\cref{sec:lmin_concentration},
and the latter is controlled as explained in \cref{app:sec:third_term}.
Thus,
in the \emph{overparametrized} regime ($d>n$) we have:
\footnote{We use $f \leqC g$ when there exists a universal constant $C > 0$ such
that $f \leq C g$ uniformly over all arguments.} 
\begin{equation*}
  \E[\sE(\bwstar)]
  \leqC
    \pr{1- \frac{\alpha}{n} (\sqrt{d} - \sqrt{n} - 1)_+^2}^{2T} \|\bwstar\|^2
    +
    \E\br{\|\bwstar\|^2_{\bI-\bhatM}}~.
\end{equation*}
A similar bound holds in the \emph{underparameterized} case ($d < n$) but replacing the term $(\sqrt{d} - \sqrt{n} - 1)_+^2$ with $(\sqrt{n} - \sqrt{d} - 1)_+^2$.
Note that the term multiplying the learning rate is $(\sqrt{d/n} - 1 - 1/\sqrt{n})^2$, in accordance with the Bai-Yin limit which says that asymptotically $\lminh^+ \sim (\sqrt{d/n} - 1)^2$.
It is interesting to see how $\bigl(1-\alpha(\sqrt{d/n} - 1)_+^2\bigr)^{2T}$ varies with model size $d$ for a given fixed dataset size $n$ and fixed number of gradient updates $T$. Setting $y = d/n$ and considering the cases $y \to 0$ (underparameterized regime), $y \sim 1$ (the peak), and $y > 1$ (overparameterized regime) it becomes evident that this term has a double descent behaviour.
Thus, the double descent is captured in the part of the excess risk bound that corresponds to learning dynamics on the space spanned by $\bhatM$.

Similarly, we can now consider the scenario with label noise:
we can similarly bound the excess risk, following the same logic as for noise-free case; however we have an additional dependence on $\sigma^2$ via the term $\frac{4 \sigma^2}{n} \E\Bigl[\bigl(\lminh^+\bigr)^{-2}\Bigr]$. While this does not interfere with the \ac{DD} shape as we change model size, it does imply that the peak is dependent on the amount of noise. In particular, the more noise we have in the learning problem the larger we expect the peak at the interpolation boundary to be.

While the presence of the double descent has been studied by several works, our derivation provides two potentially new interesting insights. The first one is that there is a dependency between the noise in the learning problem and the shape of the curve, the larger the noise is, the larger the peak in DD curve. This agrees with the typical intuition in the underparmetrized regime that the model fits the noise when it has enough capacity, leading towards a spike in test error. However, due to the dependence on $\lminh^+$, it is subdued as the model size grows.
Secondly, and maybe considerably more interesting, there seems to be a connection between the double descent curve of the excess risk and the optimization process. In particular, our derivation is specific to gradient descent. In this case the excess risk seems to depend on the conditioning of the features in the least squares problem on the subspace spanned by the data through $\lminh^+$, which also affects convergence of the optimization process. For the least squares problem this can easily be seen, as the sample covariance of the features corresponds to the Gauss-Newton approximation of the Hessian~\cite[e.g.][]{NoceWrig06}, hence it impacts the convergence. In a more precise way, conditioning of any matrix is measured by the ratio $s_{\max}/s_{\min}$ (the `condition number') which is determined solely by the smallest singular value $s_{\min}$ in cases when $s_{\max}$ is of constant order, such as the case that we studied here: Note that by our boundedness assumption, $s_{\max}$ is constant,
but in general one needs to consider both $s_{\max}$ and $s_{\min}$ in order to characterize the condition numbers, which interestingly have been observed to display a double descent as well \cite{poggio2019double}.

More generally, normalization, standardization, whitening and various other preprocessing of the input data have been a default step in many computer vision systems~\cite[e.g.][]{lecun98b, Krizhevsky09learningmultiple} where it has been shown empirically that they greatly affect learning. Such preprocessing techniques are usually aimed to improve conditioning of the data. Furthermore, various normalization layers like batch-norm~\citep{Ioffe15} or layer-norm~\citep{Ba16} are typical components of recent architectures, ensuring that features of intermediary layers are well conditioned.
Furthermore, it has been suggested that model size improves conditioning of the learning problem \citep{Li18}, which is in line with our expectation given the behaviour of $\lminh^+$. Taking inspiration from the optimization literature, it is natural for us to ask whether for neural networks, we can also connect the conditioning or $\lminh^+$ of intermediary features and double descent. This particular might be significant if we think of the last layer of the architecture as a least squares problem (assuming we are working with mean square error), and all previous layers as some random projection, ignoring that learning is affecting this projection as well.

This relationship between generalization and double descent on one hand, and the conditioning of the features and optimization process raises some additional interesting questions, particularly since, compared to the typical least squares setting, the conditioning of the problem for deep architectures does not solely depend on size. In the next section we empirically look at some of these questions.

\section{Empirical exploration in neural networks}
\label{sec:empirical_discussion}

The first natural question to ask is whether the observed behaviour for the least squares problem is reflected when working with neural networks. To explore this hypothesis, and to allow tractability of computing various quantities of interest (like $\lminh^+$), we focus on one hidden layer MLPs on the MNIST and FashionMNIST datasets. We follow the protocol used by \cite{belkin2019reconciling}, relying on a squared error loss.
In order to increase the model size we simply increase the dimensionality of the latent space, and rely on gradient descent with a fixed learning rate and a training set to $1000$ randomly chosen examples for both datasets. More details can be found in \cref{sec:app:empirical}.

\begin{figure}
\captionsetup[subfloat]{labelformat=empty}
\centering
\subfloat[MNIST]{
    \begin{minipage}[b]{.2\linewidth}
      \centering
      \includegraphics[align=c,width=\textwidth]{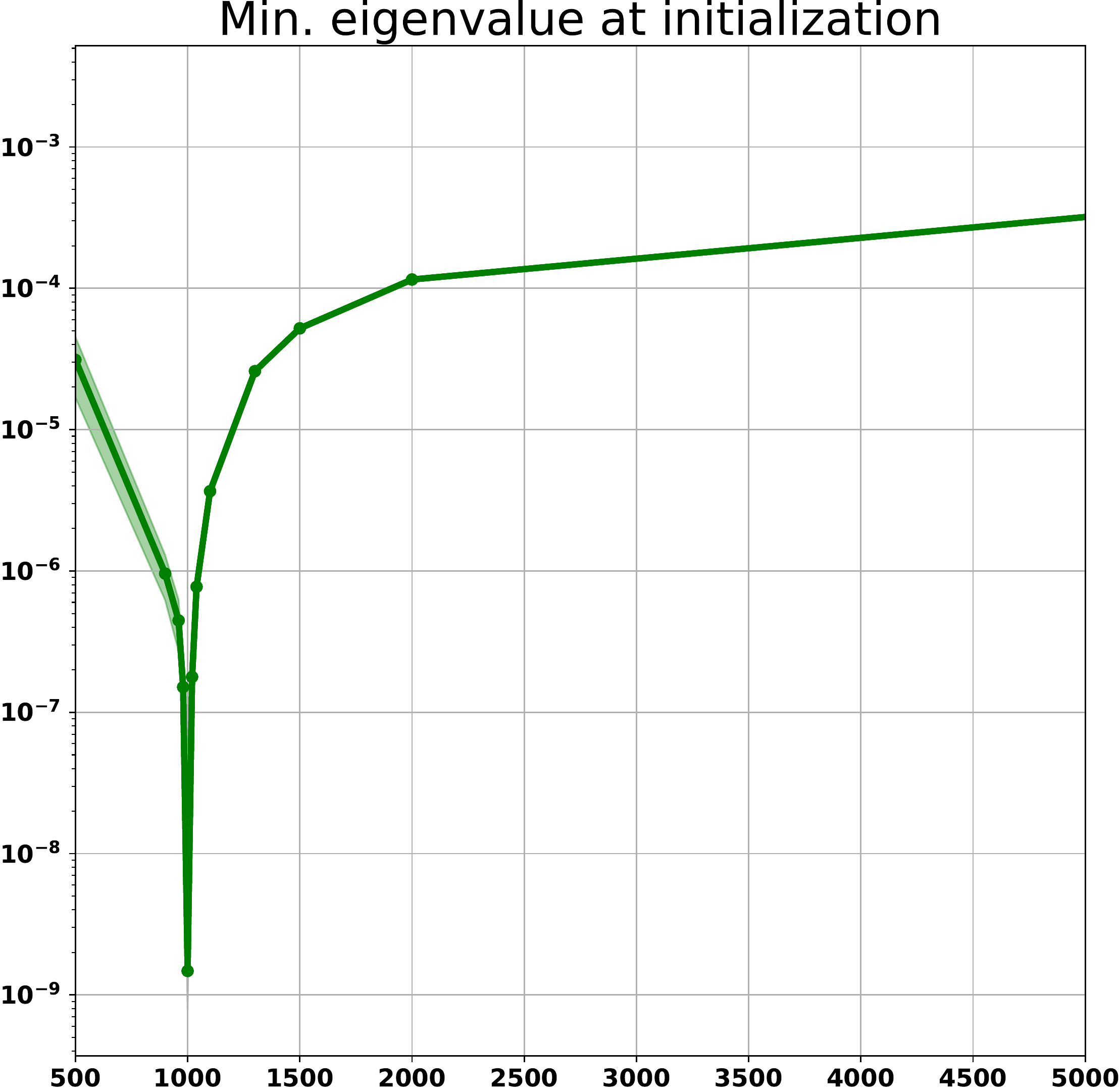}
      (a)
        \label{fig:MNIST_1l_eig_init}
        \end{minipage} \hfill
    \begin{minipage}[b]{.67\linewidth}
        \centering
        \includegraphics[align=c,width=\textwidth]{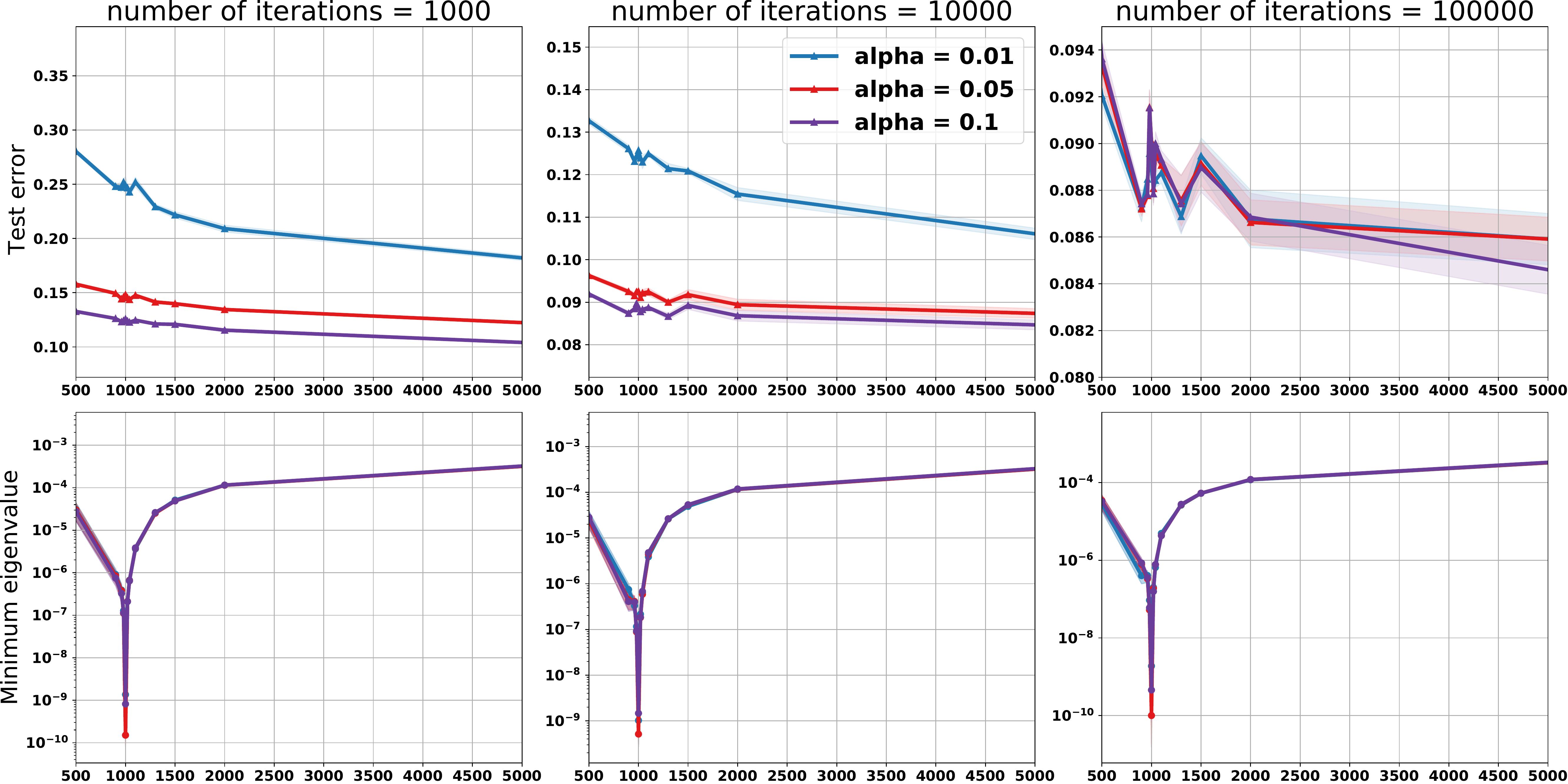}
        (b)
        \label{fig:MNIST_1l_error}
        \end{minipage}
    }\\
\subfloat[FashionMNIST]{
    \begin{minipage}[b]{.2\linewidth}
        \centering
        \includegraphics[align=c,width=\textwidth]{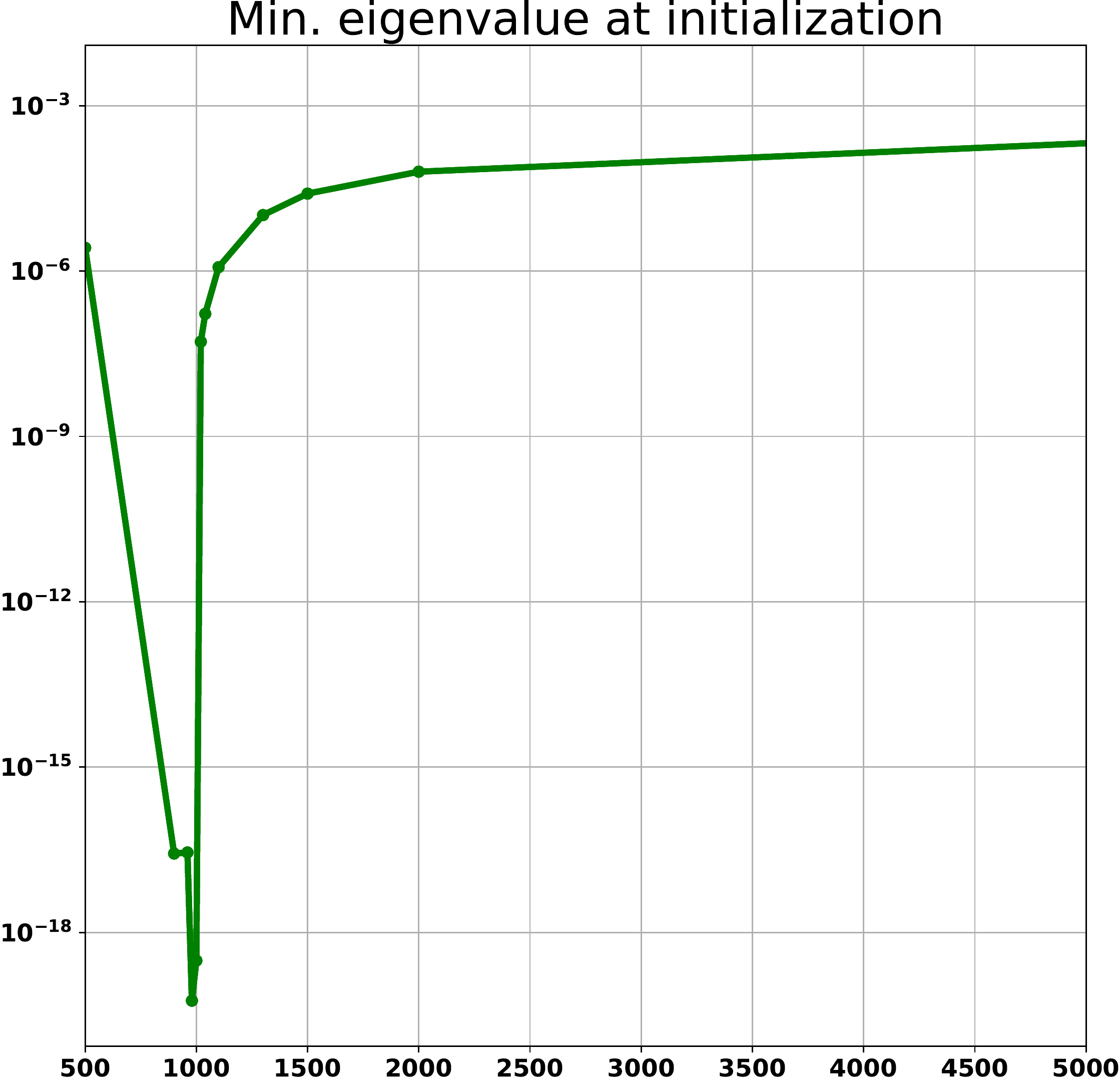}
        (a)
        \end{minipage} \hfill
    \begin{minipage}[b]{.67\linewidth}
        \centering
        \includegraphics[align=c,width=\textwidth]{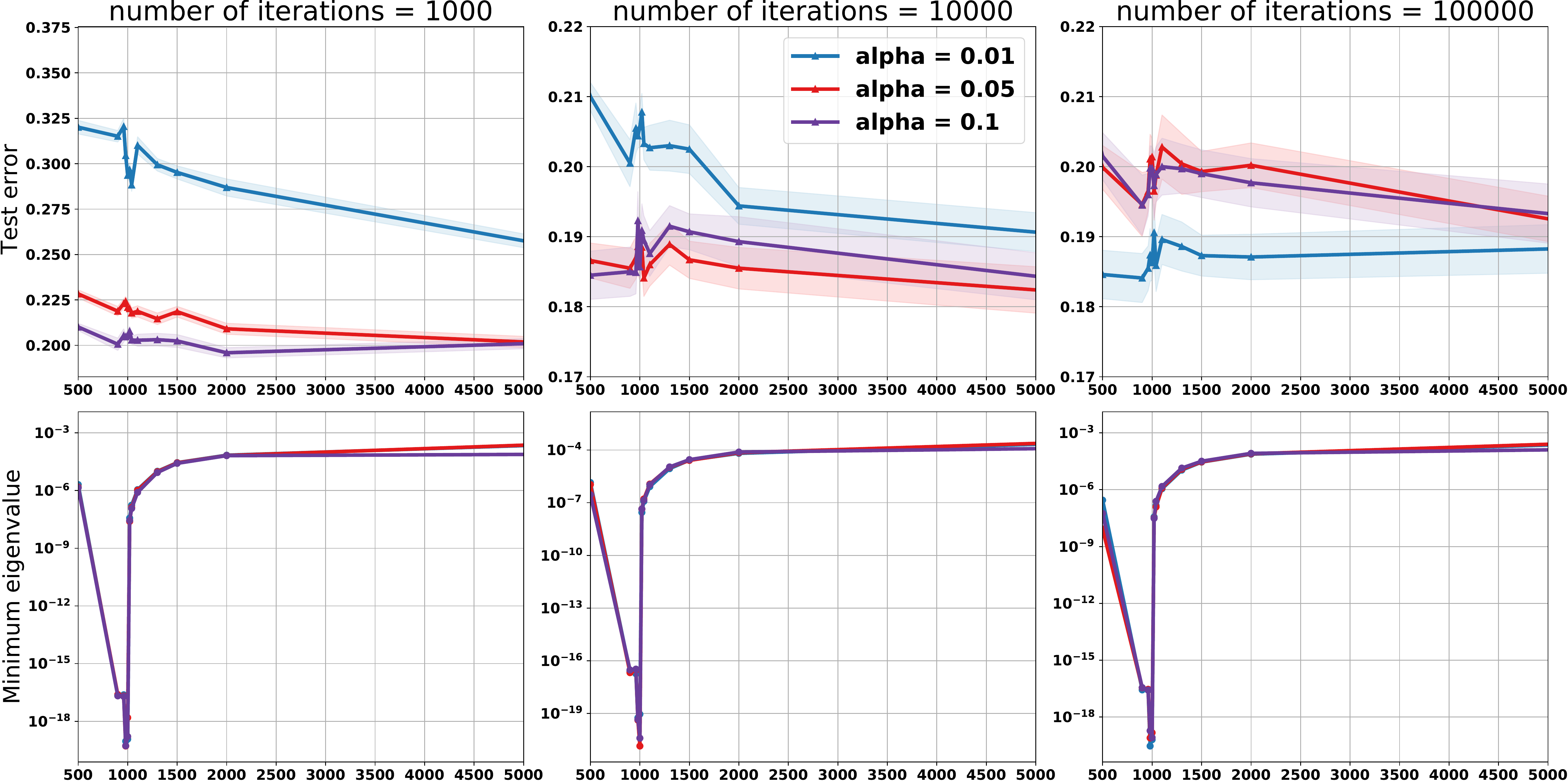}
        (b)
        \label{fig:FashionMNIST_1l_error}
        \end{minipage}
}
\caption{Training one hidden layer networks of increasing width on MNIST (top) and FashionMNIST (bottom): (a) Minimum positive eigenvalue of the intermediary features at initialization - (b) Test error and corresponding minimum eigenvalue of the intermediary features at different iterations}\label{fig:MNIST_FashionMNIST}
\end{figure}

Figure~\ref{fig:MNIST_FashionMNIST} provides the main findings on this experiment. Similar to the Figure~\ref{fig:ls_emp_bound_vs_sim}, we depict $3$ columns showing snapshots at different number of gradient updates: $1000$, $10000$ and $100000$. The first row shows 
test error (number of miss-classified examples out of the test examples) computed on the full test set of $10000$ data points which as expected shows the double descent curve with a peak around $1000$ hidden units. Note that the peak is relatively small, however the behaviour seems consistent under $5$ random seeds for the MNIST experiment.\footnote{The error bars for the test error in all the other experiments are estimated by splitting the test set into 10 subsets.} The second row and potentially the more interesting one looks at the $\lminh^+$ computed on the covariance of the activations of the hidden layer, which as predicted by our theoretical derivation shows a dip around the interpolation threshold, giving the expected U-shape. Even more surprisingly this shape seems to be robust throughout learning, and the fact that the input weights and biases are being trained seems not to alter it, thus suggesting that our derivation might provide insights in the behaviour of deep models. 

\begin{figure}[ht]
    \begin{minipage}[b]{.2\linewidth}
        \centering
        \includegraphics[align=c,width=\textwidth]{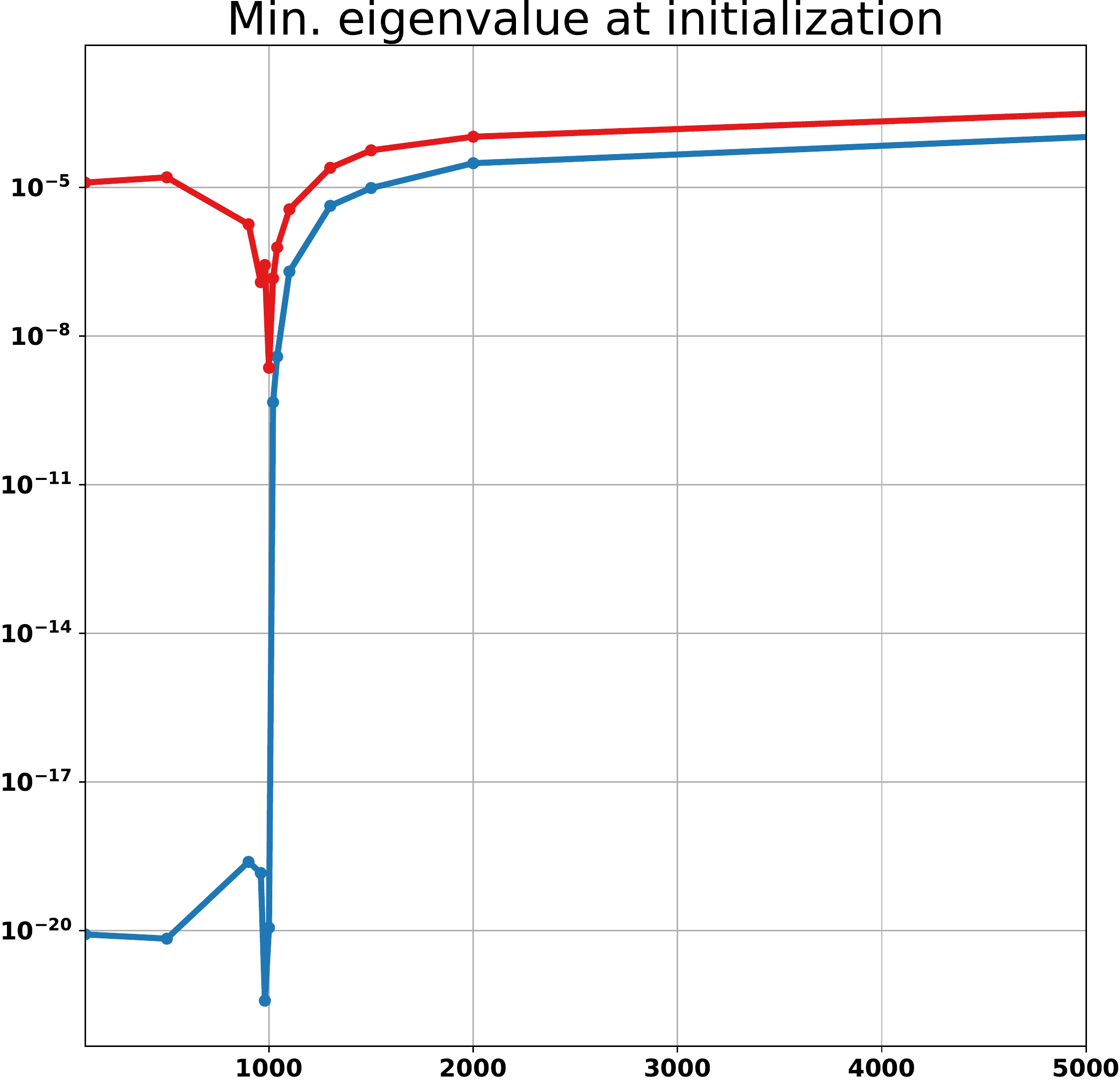}
        (a)
        \label{fig:MNIST_depth_eig_init}
        \end{minipage} \hfill
    \begin{minipage}[b]{.7\linewidth}
        \centering
        \includegraphics[align=c,width=\textwidth]{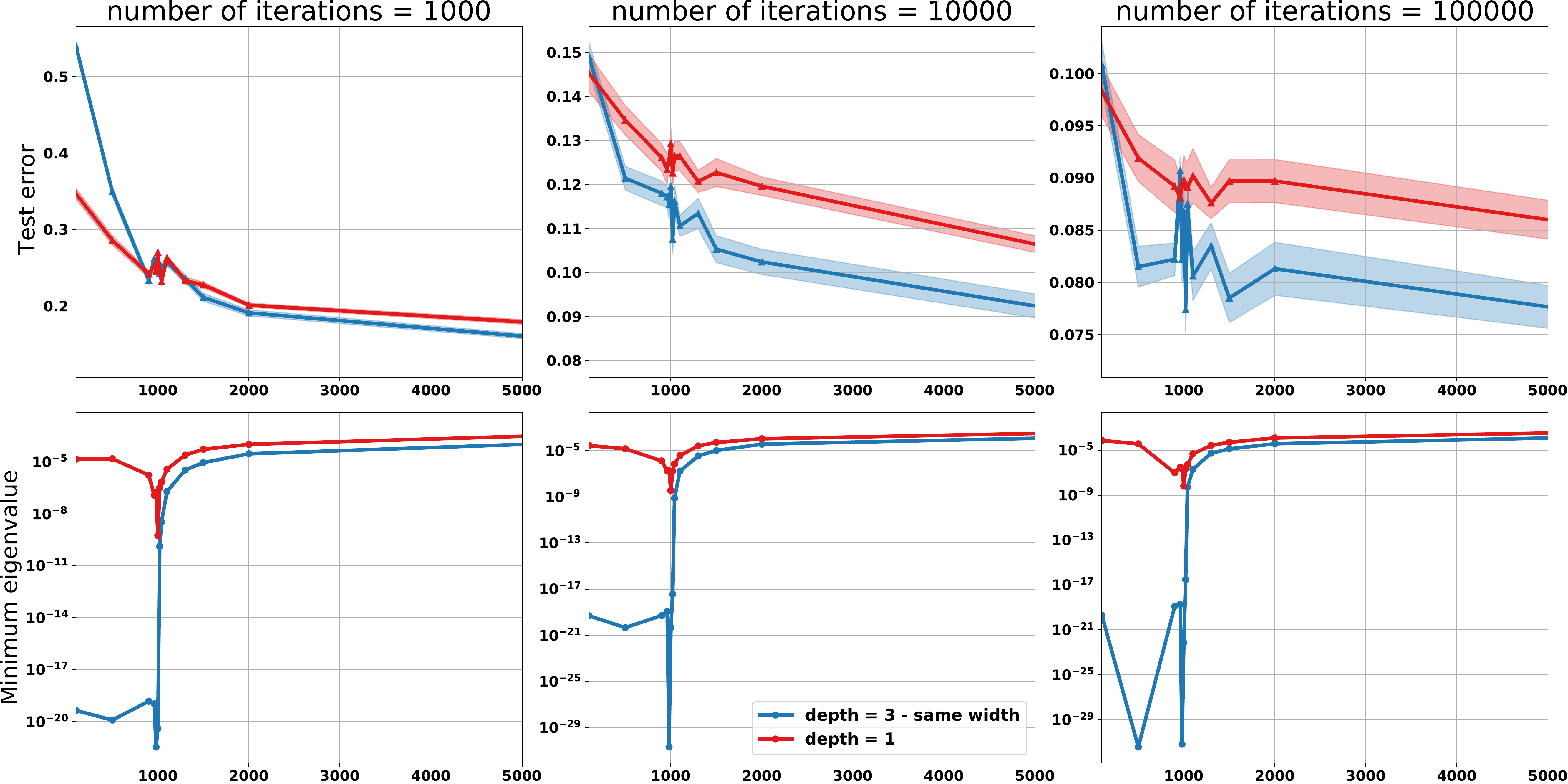}
        (b)
        \label{fig:MNIST_depth_error}
        \end{minipage}
    \caption{Training networks of increasing width with 1 and 3 hidden layers on MNIST: (a) Minimum positive eigenvalue of the intermediary features at initialization - (b) Test error and corresponding minimum eigenvalue of the intermediary features at different iterations}\label{fig:MNIST_depth}
\end{figure}

Following this, if we think of the output layer as solving a least squares problem, while the rest of the network provides a projection of the data, we can consider what can affect the conditioning of the last latent space of the network. We put forward the hypothesis that $\lminh^+$ is not simply affected by the number of parameters, but actually the distribution of these parameters in the architecture matters. 

To test this hypothesis, we conduct an experiment where we compare the behavior of a network with a single hidden layer and a network with three hidden layers. For both networks, we increase the size of the hidden layers. For the deeper network, we consider either increasing the size of all the hidden layers or grow only the last hidden layer while keeping the others to a fixed small size, creating a strong bottleneck in the network. Figure~\ref{fig:MNIST_depth} shows the results obtained with the former, while the effect of the bottleneck can be seen
in \cref{app:depth_exp}. We first observe that for the three tested networks, the drop in the minimum eigenvalues happens when the size of the last hidden layer reaches the number of training samples, as predicted by the theory. The magnitude of this drop and behavior across the different tested sizes depends however on the previous layers. In particular, we observe that the bottleneck yields features that are more ill-conditioned than the network with wide hidden layers, where the width of the last layer on its own can not compensate for the existence of the bottleneck.
Moreover, from Figure~\ref{fig:MNIST_depth}, we can clearly see that the features obtained by the deeper network have a bigger drop in the minimum eigenvalue, which results, as expected in a higher increase in the test error around the interpolation threshold.

It is well known that depth can harm optimization making the problem ill-conditioned, hence the reliance on skip-connections and batch normalization~\cite{De2020} to train very deep architecture. Our construction provides a way of reasoning about double descent that allows us to factor in the ill-conditioning of the learning problem. Rather than focusing simply on the model size, it suggests that for neural networks the quantity of interest might also be $\lminh^+$ for intermediary features, which is affected by size of the model but also by the distribution of the weights and architectural choices. For now we present more empirical explorations and ablations in \cref{sec:app:empirical}, and put forward this perspective as a conjecture for further exploration.

\section{Conclusion and Future Work}
\label{sec:conclusions}

In this work we analyse the double descent phenomenon in the context of the least squares problem. We make the observation that the excess risk of gradient descent is controlled by the smallest \emph{positive} eigenvalue, $\lminh^+$, of the feature covariance matrix. Furthermore, this quantity follows the Bai-Yin law with high probability under mild distributional assumptions on features, that is, it manifests a U-shaped behaviour as the number of features increases, which we argue induces a double descent shape of the excess risk. Through this we provide a connection between the widely known phenomena and optimization process and conditioning of the problem. We believe this insight provides a different perspective compared to existing results focusing on the Moore-Penrose pseudo-inverse solution.
In particular our work conjectures that the connection between the known double descent shape and model size is through $\lminh^+$ of the features at intermediary layers. For the least squares problem $\lminh^+$ correlates strongly with model size (and hence feature size). However this might not necessarily be always true for neural networks. For example we show empirically that while both depth and width increase the model size, they might affect $\lminh^+$ differently. We believe that our work could enable much needed effort, either empirical or theoretical, to disentangle further the role of various factors, like depth and width or other architectural choices like skip connections on double descent.  

\bibliographystyle{unsrtnat}
\bibliography{learn_min}

\newpage
\appendix

\section{Definitions from Linear Analysis}
\label{app:sec:defs}

We denote column vectors and matrices with small and capital bold letters, respectively, e.g. $\balpha=[\alpha_1, \alpha_2, \ldots, \alpha_d]\tp \in \R^d~$
and $\bA \in \R^{d_1 \times d_2 }$.
Singular values of a rectangular matrix $\bA \in \R^{n \times d}$ are denoted by $\smax(\bA) = s_1(\bA) \geq \ldots \geq s_{n \wedge d}(\bA) = \smin(\bA)$. The rank of $\bA$ is $r = \max\{ k \mid s_k(\bA) > 0 \}$.
Eigenvalues of a \acf{PSD} matrix $\bM \in \R^{d \times d}$ are nonnegative and are denoted 
$\lmax(\bM) = \lambda_1(\bM) \geq \ldots \geq \lambda_d(\bM) = \lmin(\bM)$,
while the smallest \emph{non-zero} eigenvalue is denoted $\lmin^+(\bM)$.

When $\bM \in \R^{d \times d}$ is positive definite, we define $\|\bx\|_{\bM}$ for $\bx \in \R^d$ by
$
\|\bx\|_{\bM} = \sqrt{\bx\tp \bM \bx}~.
$
It is easy to check that $\|\cdot\|_{\bM}$ is indeed a norm on $\R^d$, hence it induces a metric over $\R^d$, with the distance between $\bx$ and $\by$ given by $\|\bx - \by\|_{\bM} = \sqrt{(\bx - \by)\tp \bM (\bx - \by)}$. If $\bM$ is only semi-definite, these definitions would give a semi-norm and semi-metric.
Note that $\| \bx \|_{\bM} = \| \bM^{1/2}\bx \|$ where $\bM^{1/2}$ is the matrix square root of $\bM$. If we set $\bM = \bI$, the identity matrix, then the norm $\| \cdot \|_{\bM}$ reduces to the standard Euclidean norm: $\| \bx \| = \sqrt{\bx\tp \bx}$.

Combining the Cauchy-Schwarz inequality and the definition of operator norm $\|\bM \| = \smax(\bM)$, which implies $\| \bM \bx \| \leq \| \bM \| \| \bx \|$, we get the inequality $\| \bx \|_{\bM}^2 \leq \| \bx \|^2 \| \bM \|$.

The distance from a point $\bx$ to some set $B \subseteq \reals^d$ is defined as usual
\[
  \rho(\bx, B) = \inf_{\by \in B} \|\bx - \by\|~
\]
and for a positive defninte matrix $\bM$ we define similarly
\[
  \rho_{\bM}(\bx, B) = \inf_{\by \in B} \|\bx - \by\|_{\bM}~.
\]

Given $\bx_0 \in \reals^d$ and $\eps > 0$,
the Euclidean ball of radius $\eps$ centered at $\bx_0$ is defined as
\[
\sB(\bx_0, \eps) = \left\{ \bx \in \reals^d \mid \|\bx - \bx_0\| \leq \eps \right\} 
\]
and for a positive definite matrix $\bM$ the ellipsoid w.r.t.\ metric $\|\cdot\|_{\bM}$ is defined as
\[
\sE_{\bM}(\bx_0, \eps) = \left\{ \bx \in \reals^d \ \middle| \ \|\bx - \bx_0\|_{\bM} \leq \eps \right\} 
\]

\section{Minimum eigenvalue and condition number}\label{app:condition}

Previous works considered the link between the condition number of the features and the DD behavior~\cite{rangamani2020interpolating}. In this work, the analysis focuses more particularly on the minimum eigenvalue. In the following small experiments, we empirically show that in the experiments shown in the main paper, the condition number is driven by the minimum eigenvalue, and that the maximum eigenvalue stays close to a constant order when we increase the size of the features. In Figure~\ref{fig:condition}, we use the same setting as in the MNIST experiment in Figure~\ref{fig:MNIST_FashionMNIST} is the main paper. We obseve that the behavior of the condition number follows the minimum eigenvalue, while the maximum eigenvalue stays between 10 and 100 as we increase the width of the networks.

\begin{figure}[H]
  \centering
  \includegraphics[width=.95\textwidth]{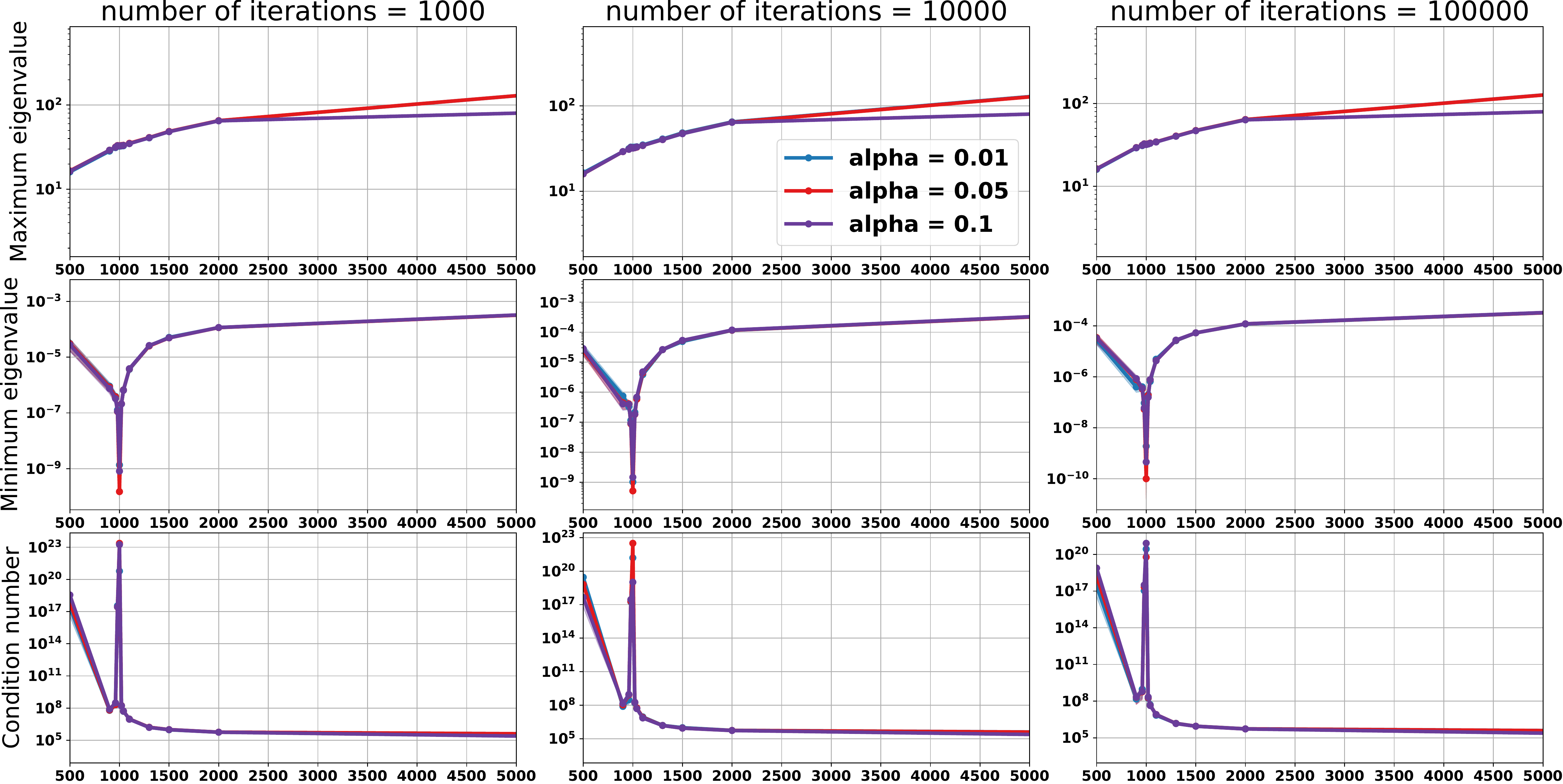}
  \caption{Maximum and minimum eigenvalues and condition numbers of the features of one hidden layer networks of variable width: MNIST - 1000 samples for training, networks trained with gradient descent and different step sizes.}
  \label{fig:condition}
\end{figure}

\section{Excess Risk of Gradient Descent}
\label{app:excess_risk_gd}
In this section we consider the standard \ac{GD} algorithm, that is $\alg(\bw_0) = \bw_T$, which is obtained recursively by applying the update rule $\bw_{t+1} = \bw_t - \alpha \nabla \Lh_S(\bw_t)$ with some step size $\alpha > 0$ and initialization $\bw_0 \in \reals^d$. The rule is iterated  for $t=0, \ldots, T-1$.
We will pay attention to $\alg$ which satisfy the following regularity condition:
\setcounter{definition}{0}
\begin{definition}
  A map $f : \reals^d \to \reals^d$ is called $(\Delta, \bM)$-admissible, where $\bM$ is a fixed \ac{PSD} matrix and $\Delta \geq 0$, 
  if for all $\bw, \bw' \in \reals^d$ the following holds:
  \[
    \|f(\bw) - f(\bw')\|_{\bM} \leq \Lip \|\bw - \bw'\|~.
  \]
\end{definition}
Notice that the norm on the left-hand side is $\|\cdot\|_{\bM}$, while that on the right-hand side is the standard Euclidean norm.
Also notice that this inequality entails a Lipschitz condition with Lipschitz factor $\Delta$.

The \emph{excess risk} of $\alg(\bW_0)$ is defined as
\[
\sE(\bwstar) = L(\alg(\bW_0)) - L(\bwstar) \qquad \bwstar \in \argmin_{\bw \in \reals^d} L(\bw)~.
\]
Next we give upper bounds on the excess risk of \ac{GD} output, assuming that the output of $\alg$ is of \emph{low-rank}, which is of interest in the overparameterized regime.
Specifically, for some low-rank orthogonal projection $\bM \in \reals^{d \times d}$
we assume that $\bM \alg(\bw) = \alg(\bw)$ a.s. with respect to random samples $S$, for any initialization $\bw$.
The following theorem gives us a general bound on the excess risk of any admissible algorithm in a sense of \cref{def:admissible} w.r.t.\ to any smooth loss (not necessarily convex).
In the following we will demonstrate that \ac{GD} satisfies \cref{def:admissible}.
\setcounter{theorem}{0}
\begin{theorem}[Excess Risk of Admissible Algorithm]
  Assume that $\bW_0 \sim \sN(\bzero, \initvar\bI_{d\times d})$,
  and assume that $\alg$ is $(\Delta, \bM)$-admissible (\cref{def:admissible}), where $\Delta$ and $\bW_0$ are independent.
  Further assume $\bM \alg(\bw) = \alg(\bw)$ for any $\bw$, 
  and that $L$ and $\Lh$ are $H$-smooth.
  Then, for any $\bwstar \in \argmin_{\bw \in \reals^d} L(\bw)$ we have
  \[
    \E[\sE(\bwstar)]
    \leq
    H \pr{
      \E[\Delta^2] \pr{ \|\bwstar\|^2 + \initvar (2 + d) }
      +
      \E[\|\alg(\bwstar) - \bwstar\|_{\bM}^2]
      +
      \frac12 \E[\|\bwstar\|^2_{\bI-\bM}]
    }~.
  \]
  In particular, having $\alpha \leq 1/H$,
  \begin{align*}
    \E[\|\alg(\bwstar) - \bwstar\|_{\bM}^2]
    \leq 2 \alpha T L(\bwstar)~.
  \end{align*}
\end{theorem}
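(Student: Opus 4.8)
The plan is to reduce the excess risk to a squared Euclidean distance via smoothness, and then control that distance by splitting along $\mathrm{range}(\bM)$ and its orthogonal complement. Since $\bwstar$ minimizes $L$ over all of $\reals^d$, $\nabla L(\bwstar) = \bzero$, so $H$-smoothness of $L$ gives the pointwise bound $\sE(\bwstar) = L(\alg(\bW_0)) - L(\bwstar) \le \tfrac{H}{2}\,\|\alg(\bW_0) - \bwstar\|^2$. Because $\bM$ is an orthogonal projection with $\bM\,\alg(\bW_0) = \alg(\bW_0)$, we have $(\bI-\bM)(\alg(\bW_0) - \bwstar) = -(\bI-\bM)\bwstar$, and the Pythagorean theorem together with $\|\bM\bv\|^2 = \|\bv\|_{\bM}^2$ splits the right-hand side as $\|\alg(\bW_0) - \bwstar\|_{\bM}^2 + \|\bwstar\|_{\bI-\bM}^2$; this accounts for the third term of the bound.

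Next I would insert $\alg(\bwstar)$ into the $\bM$-part and use $\|\ba + \bb\|_{\bM}^2 \le 2\|\ba\|_{\bM}^2 + 2\|\bb\|_{\bM}^2$, so that $\|\alg(\bW_0) - \bwstar\|_{\bM}^2 \le 2\|\alg(\bW_0) - \alg(\bwstar)\|_{\bM}^2 + 2\|\alg(\bwstar) - \bwstar\|_{\bM}^2$; the first summand is bounded by $\Delta^2\|\bW_0 - \bwstar\|^2$ using $(\Delta,\bM)$-admissibility (\cref{def:admissible}). Putting the pieces back together, $\sE(\bwstar) \le H\Delta^2\|\bW_0 - \bwstar\|^2 + H\|\alg(\bwstar) - \bwstar\|_{\bM}^2 + \tfrac{H}{2}\|\bwstar\|_{\bI-\bM}^2$, where the factor $2$ from the split is exactly cancelled by the $\tfrac{H}{2}$ from smoothness. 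Taking expectations, the independence of $\Delta$ and $\bW_0$ factors the first term as $\E[\Delta^2]\,\E[\|\bW_0 - \bwstar\|^2]$, and Gaussian integration gives $\E[\|\bW_0 - \bwstar\|^2] = \|\bwstar\|^2 + \initvar\, d \le \|\bwstar\|^2 + \initvar(2 + d)$, yielding the stated inequality.

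For the GD-specific claim, note that $\alg(\bwstar) = \bw_T$ obtained by running GD from $\bw_0 = \bwstar$, so $\alg(\bwstar) - \bwstar = -\alpha\sum_{t=0}^{T-1}\nabla\Lh_S(\bw_t)$. Since $\bM$ is a projection, $\|\cdot\|_{\bM} \le \|\cdot\|$, and Cauchy--Schwarz gives $\|\alg(\bwstar) - \bwstar\|_{\bM}^2 \le \alpha^2 T \sum_{t=0}^{T-1}\|\nabla\Lh_S(\bw_t)\|^2$. The descent lemma for $H$-smooth $\Lh_S$ under $\alpha \le 1/H$ gives $\Lh_S(\bw_{t+1}) \le \Lh_S(\bw_t) - \tfrac{\alpha}{2}\|\nabla\Lh_S(\bw_t)\|^2$; summing and using $\Lh_S \ge 0$ gives $\sum_{t=0}^{T-1}\|\nabla\Lh_S(\bw_t)\|^2 \le \tfrac{2}{\alpha}\,\Lh_S(\bwstar)$, hence $\|\alg(\bwstar) - \bwstar\|_{\bM}^2 \le 2\alpha T\,\Lh_S(\bwstar)$. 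Taking expectation over $S$ and using $\E_S[\Lh_S(\bwstar)] = L(\bwstar)$ (as $\bwstar$ is a fixed population minimizer) completes it.

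All steps are elementary; the only care needed is in the norm bookkeeping — tracking when $\|\cdot\|_{\bM}$ and $\|\cdot\|$ coincide (on $\mathrm{range}(\bM)$) versus when only $\|\cdot\|_{\bM} \le \|\cdot\|$ is used — and in making sure the hypotheses are genuinely invoked: independence of $\Delta$ and $\bW_0$ to factor the expectation, $\bM\,\alg(\bw) = \alg(\bw)$ for the Pythagorean split, and $\alpha \le 1/H$ with $H$-smoothness of $\Lh_S$ for the descent lemma. I do not anticipate a real obstacle beyond this bookkeeping.
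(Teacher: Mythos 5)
Your proposal is correct and follows essentially the same route as the paper's proof: smoothness at the minimizer, the Pythagorean split along $\bM$ using $\bM\,\alg(\bW_0)=\alg(\bW_0)$, the $2(a^2+b^2)$ split with admissibility and independence of $\Delta$ and $\bW_0$, Gaussian integration, and the descent lemma for the GD-specific claim. Your Gaussian computation $\E[\|\bW_0-\bwstar\|^2]=\|\bwstar\|^2+\initvar d \le \|\bwstar\|^2+\initvar(2+d)$ is in fact the exact value and cleanly yields the stated bound.
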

\begin{proof}
By the $H$-smoothness of $L$, and noting that $\nabla L(\bwstar) = 0$ since $\bwstar$ is a minimizer,
\begin{align*}
  L(\alg(\bW_0)) - L(\bwstar)
  &\leq
    \frac{H}{2} \|\alg(\bW_0) - \bwstar\|^2\\
  &=
    \frac{H}{2} \|\alg(\bW_0) - \bwstar\|^2_{\bM} + \frac{H}{2} \|\alg(\bW_0) - \bwstar\|^2_{\bI-\bM}\\
  &=
    \frac{H}{2} \|\alg(\bW_0) - \bwstar\|^2_{\bM} + \frac{H}{2} \|\bwstar\|^2_{\bI-\bM}
\end{align*}
where the last equality is justified since $\bM$ is an orthogonal projection satisfying the assumption $\bM \alg(\bW_0) = \alg(\bW_0)$.

Focusing on the first term above, we get
\begin{align*}
  \|\alg(\bW_0) - \bwstar\|_{\bM}^2
  &\leq
    2 \|\alg(\bW_0) - \alg(\bwstar)\|_{\bM}^2 + 2 \|\alg(\bwstar) - \bwstar\|_{\bM}^2\\
  &\leq
    2 \Delta^2 \|\bW_0 - \bwstar\|^2 + 2 \|\alg(\bwstar) - \bwstar\|_{\bM}^2
\end{align*}
where the first inequality is due to the following inequality for squared Euclidean norms: $\| \ba_1 + \cdots + \ba_n \|^2 \leq n(\|\ba_1\|^2 + \cdots + \|\ba_n\|^2)$, and the last inequality is due to $(\Delta, \bM)$-admissibility of $\alg$.
Taking expectation on both sides we have
\begin{align*}
  \E\br{\|\alg(\bW_0) - \bwstar\|_{\bM}^2}
  \leq
  2 \E[\Delta^2] \E[\|\bW_0 - \bwstar\|^2] + 2 \E[\|\alg(\bwstar) - \bwstar\|_{\bM}^2]
\end{align*}
where $\Delta$ is random only due to the sample (recall that $\Delta$ is independent of $\bW_0$ by assumption).
The term $\E[\|\bW_0 - \bwstar\|^2]$ is a standard Gaussian integral, whose calculation is summarized in following lemma:
\begin{lemma}[Expectation of a squared norm of the Gaussian random vector]
  \label{lem:gaussian_norm}
  For any $\nu > 0$ and $\bx_0 \in \reals^d$:
  \[
    \frac1Z \int_{\reals^d} \|\bx - \bx_0\|^2 e^{-\frac{1}{2 \nu^2} \|\bx\|^2} \diff \bx
    =
    \|\bx_0\|^2 + \nu^2 (2 + d)
  \]
  where $Z = \int_{\reals^d} e^{-\frac{1}{2 \nu^2} \|\bx\|^2} \diff \bx$ is the normalization constant. Therefore, if $\bX \sim \sN(\bzero, \nu^2\bI_{d\times d})$ then for any $\bx_0 \in \reals^d$ we have the identity $\E[\|\bX - \bx_0\|^2] = \|\bx_0\|^2 + \nu^2 (2 + d)$.
\end{lemma}
\begin{proof}[Proof of \cref{lem:gaussian_norm}]
  \begin{align*}
    \frac1Z \int_{\reals^d} \|\bx - \bx_0\|^2 e^{-\frac{1}{2 \nu^2} \|\bx\|^2} \diff \bx
    &=
    \|\bx_0\|^2
    +
      \frac1Z \int_{\reals^d} \|\bx\|^2 e^{-\frac{1}{2 \nu^2} \|\bx\|^2} \diff \bx\\
    &=
      \|\bx_0\|^2
      +
      \nu^2 \E[\|\bX\|^2]
      =
      \|\bx_0\|^2 + 2 \nu^2 \frac{\Gamma\pr{\frac{d + 2}{2}}}{\Gamma\pr{\frac{d}{2}}}\\
    &=
      \|\bx_0\|^2 + \nu^2 (2 + d)~,
  \end{align*}
  since $\|\bX\|^2$ is $\chi^2$-distributed with $d$ degrees of freedom.
\end{proof}
For our case with $\bW_0 \sim \sN(\bzero, \initvar\bI_{d\times d})$ this gives $\E[\|\bW_0 - \bwstar\|^2] \leq \|\bwstar\|^2 + \initvar (2 + d)$.

The term $\E[\|\alg(\bwstar) - \bwstar\|_{\bM}^2]$ is bounded next by using the standard ``descent lemma''.
\begin{lemma}[Descent Lemma]
\label{lem:descent_lemma}
  Assuming that $\alpha \leq 1/H$,
  \[
    \sum_{t=0}^{T-1} \|\nabla\Lh_S(\bw_t)\|^2 \leq \frac{2}{\alpha}\pr{\Lh_S(\bw_0) - \Lh_S(\bw_T)}
  \]
\end{lemma}
\begin{proof}[Proof of \cref{lem:descent_lemma}]
  Since $\Lh$ is $H$-smooth, Taylor expansion and the gradient descent rule give us
  \begin{align*}
    \Lh_S(\bw_{t+1}) - \Lh_S(\bw_t) 
    \leq -\alpha \|\nabla \Lh_S(\bw_t)\|^2 + \frac{\alpha^2 H}{2} \|\nabla \Lh_S(\bw_t)\|^2~,
  \end{align*}
  and rearranging we have
  \begin{align*}
    \pr{\alpha - \frac{\alpha^2 H}{2}} \|\nabla \Lh_S(\bw_t)\|^2 \leq \Lh_S(\bw_t) - \Lh_S(\bw_{t+1})~.
  \end{align*}
  Summing over $t=0,\ldots,T-1$ we arrive at
  \begin{align*}
    \pr{\alpha - \frac{\alpha^2 H}{2}} \sum_{t=0}^{T-1} \|\nabla \Lh_S(\bw_t)\|^2 \leq \Lh_S(\bw_0) - \Lh_S(\bw_T)~.
  \end{align*}
  Finally, note that $\alpha - \frac{\alpha^2 H}{2} > \frac{\alpha}{2}$ by the assumption that $\alpha \leq 1/H$.
\end{proof}
In particular, if $\bwstar_t$ are the iterates of \ac{GD} when starting from $\bwstar$ (so that $\bwstar_0 = \bwstar$), then
\begin{align*}
  \|\alg(\bwstar) - \bwstar\|_{\bM}^2
  &= \lf\|\alpha\sum_{t=0}^{T-1} \nabla \Lh_S(\bwstar_t)\rt\|_{\bM}^2\\
  &\leq \alpha^2 T \sum_{t=0}^{T-1} \|\nabla \Lh_S(\bwstar_t)\|_{\bM}^2\\
  &\leq \alpha^2 \frac{2 T}{\alpha} \pr{\Lh_S(\bwstar) - \Lh_S(\bwstar_T)}\\
  &\leq 2\alpha T \Lh_S(\bwstar)
\end{align*}

and taking expectation on both sides we have
\[
  \E[\|\alg(\bwstar) - \bwstar\|_{\bM}^2] 
  \leq 2\alpha T L(\bwstar)~.
\]
Putting all together completes the proof of \cref{thm:excess_risk}.
\end{proof}

\subsection{Least-Squares with Random Design and without Label Noise}
\label{app:sec:ls_random_design_noiseless}
Consider a noise-free linear regression model
\[
  Y = \bX\tp \bwstar~,
\]
where instances are distributed according to some unknown distribution $P_X$ supported on a $d$-dimensional unit Euclidean ball.
After observing a training sample $S = \pr{(\bX_i, Y_i)}_{i=1}^n$, we run \ac{GD} on the given empirical square loss
\[
  \Lh_S(\bw) = \frac{1}{2 n} \sum_{i=1}^n (\bw\tp \bX_i - Y_i)^2~.
\]
Let a sample covariance matrix be defined as $\bhSigma = (\bX_1 \bX_1\tp + \dots + \bX_n \bX_n\tp) / n$, and
let $\bhSigma = \bU \bS \bV\tp$ be the \ac{SVD} of $\bhSigma$.
We will use a subscript notation $\bU_r = [\bu_1, \ldots, \bu_r]$, $\bV_r = [\bv_1, \ldots, \bv_r]$, and $S_r = \text{diag}(s_1(\bhSigma), \ldots, s_r(\bhSigma))$, where $r = \rank(\bhSigma)$ to indicate non-degenerate orthonormal bases and their scaling matrix.
In the setting of our interest $\bhSigma$ might be degenerate, and therefore we will occasionally refer to the non-degenerate subspace $\bU_r$.
We write $\lminh^+ = \lmin^+(\bhSigma) = \lambda_r(\bhSigma)$ for the minimal \emph{non-zero} eigenvalue, and
we denote $\bhatM = \bU_r \bU_r\tp$.
Note that $\bhatM^2 = \bhatM$.
Now we state the main result of this section.

\begin{theorem}
  Assume that $\bW_0 \sim \sN(\bzero, \initvar \bI)$.
  Then, for any $\bwstar \in \argmin_{\bw \in \reals^d} L(\bw)$,
  \[
    \E[\sE(\bwstar)]
    \leq
      \E\br{(1 - \alpha \lminh^+)^{2T}} \pr{ \|\bwstar\|^2 + \initvar (2 + d) }
      +
      \frac12 \E[\|\bwstar\|^2_{\bI-\bhatM}]~.
  \]
\end{theorem}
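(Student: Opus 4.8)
The plan is to obtain this as a corollary of \cref{thm:excess_risk}, so the work reduces to verifying its hypotheses with $\bM = \bhatM = \bU_r \bU_r\tp$ and then simplifying its bound in the noiseless case. The smoothness hypothesis is immediate: since the inputs lie in the unit ball, $\|\bhSigma\| \le 1$ and $\|\bSigma\| \le 1$, so $\Lh_S$ (Hessian $\bhSigma$) and $L$ (Hessian $\bSigma$) are both $1$-smooth; I take $H = 1$ and assume $\alpha \le 1 = 1/H$, as required by \cref{thm:excess_risk}.

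The core step is to show that \ac{GD} on $\Lh_S$ is $(\Delta, \bhatM)$-admissible with $\Delta = (1-\alpha\lminh^+)^T$. Because the objective is quadratic, \ac{GD} is affine: $\nabla\Lh_S(\bw) = \bhSigma\bw - \bb$ with $\bb = \tfrac1n\sum_i Y_i\bX_i$, hence $\bw_{t+1} = (\bI-\alpha\bhSigma)\bw_t + \alpha\bb$ and $\alg(\bw) - \alg(\bw') = (\bI-\alpha\bhSigma)^T(\bw-\bw')$. Diagonalizing $\bhSigma = \sum_{i\le r}\hat\lambda_i\bu_i\bu_i\tp$ and using $\bhatM = \sum_{i\le r}\bu_i\bu_i\tp$, the matrix $(\bI-\alpha\bhSigma)^T\bhatM(\bI-\alpha\bhSigma)^T = \sum_{i\le r}(1-\alpha\hat\lambda_i)^{2T}\bu_i\bu_i\tp$ has largest eigenvalue $\max_{i\le r}(1-\alpha\hat\lambda_i)^{2T}$; since $0 < \alpha\hat\lambda_i \le \alpha\|\bhSigma\| \le \alpha \le 1$, each factor lies in $[0,1)$ and the maximum is attained at the smallest eigenvalue, i.e. equals $(1-\alpha\lminh^+)^{2T}$. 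This yields $\|\alg(\bw)-\alg(\bw')\|_{\bhatM}^2 \le (1-\alpha\lminh^+)^{2T}\|\bw-\bw'\|^2$, and $\Delta = (1-\alpha\lminh^+)^T$ is a function of $S$ only, hence independent of $\bW_0$ as \cref{thm:excess_risk} demands. I expect this to be the main obstacle: the sharp constant really relies on restricting attention to $\mathrm{range}(\bhatM)$ (where the slowest-contracting mode is the one with eigenvalue $\lminh^+$) and on the step-size bound $\alpha\|\bhSigma\| \le 1$, which is exactly what keeps every contraction factor non-negative so the maximum is $(1-\alpha\lminh^+)^{2T}$ rather than something involving $|1-\alpha\hat\lambda_1|$.

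It remains to check the range condition $\bhatM\alg(\bw) = \alg(\bw)$ and to dispose of the middle term. From the affine form, $\alg(\bw) = (\bI-\alpha\bhSigma)^T\bw + \alpha\sum_{k=0}^{T-1}(\bI-\alpha\bhSigma)^k\bb$; in the noiseless model $\bb = \tfrac1n\sum_i(\bX_i\tp\bwstar)\bX_i = \bhSigma\bwstar \in \mathrm{range}(\bhSigma) = \mathrm{range}(\bhatM)$, which is invariant under $\bI-\alpha\bhSigma$, so the \ac{GD} drift stays in $\mathrm{range}(\bhatM)$ — this is precisely what licenses the identity $\|\alg(\bW_0)-\bwstar\|^2_{\bI-\bhatM} = \|\bwstar\|^2_{\bI-\bhatM}$ used inside the proof of \cref{thm:excess_risk} (with the orthogonal component of the Gaussian initialization handled there as well). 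Finally, exact realizability gives $L(\bwstar) = \tfrac12\E[(\bX\tp\bwstar - Y)^2] = 0$, so the ``in particular'' clause of \cref{thm:excess_risk} gives $\E[\|\alg(\bwstar)-\bwstar\|_{\bhatM}^2] \le 2\alpha T L(\bwstar) = 0$. Substituting $H = 1$, $\E[\Delta^2] = \E[(1-\alpha\lminh^+)^{2T}]$, and this vanishing middle term into \cref{thm:excess_risk} produces exactly the claimed inequality.
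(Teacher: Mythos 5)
Your proposal is correct and follows essentially the same route as the paper: invoke \cref{thm:excess_risk} with $\bM=\bhatM$, establish $((1-\alpha\lminh^+)^T,\bhatM)$-admissibility from the affine form of the \ac{GD} iterates (your spectral-decomposition computation is the same calculation the paper does via the \ac{SVD} clipping $\bU_r\tp(\bI-\alpha\bhSigma)^T$), and kill the middle term via $L(\bwstar)=0$ in the noise-free model. The only difference is cosmetic, and your explicit check that the drift term lies in $\mathrm{range}(\bhatM)$ is if anything slightly more careful than the paper's own write-up.
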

\begin{proof}
The proof is a consequence of \cref{thm:excess_risk}, modulo showing that \ac{GD} with the least-squares objective is $((1-\alpha \lminh^+)^T, \bhatM)$-admissible (\cref{prop:A_Lip} below).
\begin{prop}
  \label{prop:OLS_GD_map}
  For a $T$-step gradient descent map $\alg : \reals^d \to \reals^d$ with step size $\alpha > 0$ applied to the least-squares, and for all $\bw_0 \in \reals^d$, we have a.s.\ that
  \begin{align*}
    \alg(\bw_0) = (\bI - \alpha \bhSigma)^T \bw_0 + \alpha \sum_{t=0}^{T-1} (\bI - \alpha \bhSigma)^t \pr{\frac1n \sum_{i=1}^n \bX_i Y_i}~.
  \end{align*}
\end{prop}
\begin{proof}[Proof of \cref{prop:OLS_GD_map}]
Abbreviate $\bC = (\bX_1 Y_1 + \cdots + \bX_n Y_n)/n$.
Since $\nabla \wh{L}_S(\bw) = \bhSigma \bw - \bC$, observe that
\begin{align*}
  \bw_{t}
  =
    \bw_{t-1} - \alpha (\bhSigma \bw_{t-1} - \bC)
  =
    (\bI - \alpha \bhSigma) \bw_{t-1} + \alpha \bC~.
\end{align*}
A simple recursive argument reveals that for every $\bw_0 \in \reals^d$
\begin{align*}
  \alg(\bw_0) = \bw_T
  &=
    (\bI - \alpha \bhSigma) \bw_{T-1} + \alpha \bC\\
  &=
    (\bI - \alpha \bhSigma)^2 \bw_{T-2} + \alpha (\bI - \alpha \bhSigma) \bC + \alpha \bC\\
  &=
    (\bI - \alpha \bhSigma)^3 \bw_{T-3} + \alpha (\bI - \alpha \bhSigma)^2 \bC + \alpha (\bI - \alpha \bhSigma) \bC + \alpha \bC\\
  &\cdots\\
  &=
    (\bI - \alpha \bhSigma)^T \bw_0 + \alpha \sum_{t=0}^{T-1} (\bI - \alpha \bhSigma)^t \bC~.
\end{align*}
\end{proof}
\cref{prop:OLS_GD_map} implies the following simple fact.
\begin{cor}[Admissibility of \ac{GD}]
  \label{prop:A_Lip}
  The $T$-step gradient descent map $\alg : \reals^d \to \reals^d$ with step size $\alpha > 0$ applied to the least-squares problem satisfies, for all $\bw_0, \bu_0 \in \reals^d$,
  \begin{align*}
    \|\alg(\bw_0) - \alg(\bu_0)\|_{\bhatM}
    \leq
    (1 - \alpha \lminh^+)^T \|\bw_0 - \bu_0\|~.
  \end{align*}
\end{cor}
\begin{proof}[Proof of \cref{prop:A_Lip}]
  By~\cref{prop:OLS_GD_map} for any $\bw_0, \bu_0 \in \reals^d$:
  \begin{align*}
    \|\alg(\bw_0) - \alg(\bu_0)\|_{\bU_r \bU_r\tp}
    &=
      \|(\bI - \alpha \bhSigma)^T (\bw_0 - \bu_0)\|_{\bU_r \bU_r\tp}\\
    &=
      \|\bU_r\tp(\bI - \alpha \bhSigma)^T (\bw_0 - \bu_0)\|\\
    &\leq
      \|\bU_r\tp (\bI - \alpha \bhSigma)^T\| \| (\bw_0 - \bu_0)\|~. 
  \end{align*}
  Now,
  \begin{align*}
    \bU_r\tp (\bI - \alpha \bhSigma)^T
    =
    \bU_r\tp \bU (\bI - \alpha \bS)^T \bV\tp
    =
    \bI_{r \times d} (\bI - \alpha \bS)^T \bV\tp
    =
    (\bI_{r \times d} - \alpha \bS_{r \times d})^T \bV\tp
  \end{align*}
  where subscript $r \times n$ stands for clipping the matrix to $r$ rows and $d$ columns.
  The above implies that the operator norm of $\bU_r\tp (\bI - \alpha \bhSigma)^T$ satisfies $\|\bU_r\tp (\bI - \alpha \bhSigma)^T\| \leq (1-\alpha \lmin^+(\bhSigma))^T$.
\end{proof}
Finally, note that in the overparametrized case ($d>n$) we have $r = n \wedge d = n$.
\end{proof}
\subsection{Least-Squares with Random Design and Label Noise}
\label{app:sec:ls_random_design_withnoise}
Now, in addition to the random design we introduce a label noise into our model:
\[
  Y = \bX\tp \bwstar + \ve~,
\]
where we have independent noise $\ve$ such that $\E[\ve] = 0$ and $\E[\ve^2] = \noisevar$.

\begin{theorem}
  Assume that $\bW_0 \sim \sN(\bzero, \initvar \bI)$.
  Then, for any $\bwstar \in \argmin_{\bw \in \reals^d} L(\bw)$,
  \[
    \E[\sE(\bwstar)]
    \leq
      \E\br{(1 - \alpha \lminh^+)^{2T}} \pr{ \|\bwstar\|^2 + \initvar (2 + d) }
      +
      \frac{4 \sigma^2}{n} \E\br{\pr{\lminh^+}^{-2}}
      +
      \frac12 \E[\|\bwstar\|^2_{\bI-\bhatM}]~.
  \]
\end{theorem}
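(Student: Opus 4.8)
The plan is to obtain \cref{thm:excess_ls_noise} from \cref{thm:excess_risk} along exactly the route used for \cref{thm:excess_ls_noiseless}; the only term that behaves differently is the middle one, $\E[\|\alg(\bwstar) - \bwstar\|_{\bhatM}^2]$, which was identically $0$ in the noiseless model and now carries a genuine noise contribution. First I would instantiate \cref{thm:excess_risk} with the data-dependent pair $(\Delta,\bM) = \pr{(1-\alpha\lminh^+)^T,\ \bhatM}$. Admissibility with this $\Delta$ is precisely \cref{prop:A_Lip}, whose proof involves only the design $\bX = [\bX_1,\dots,\bX_n]$ and not the label model, so both $\Delta$ and $\bhatM$ are functions of $\bX$ alone, hence independent of $\bW_0$ and of the noise $\bve = (\ve_1,\dots,\ve_n)$. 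This reproduces verbatim the first term $\E\br{(1-\alpha\lminh^+)^{2T}}\pr{\|\bwstar\|^2 + \initvar(2+d)}$ and the third term $\tfrac12\E[\|\bwstar\|^2_{\bI-\bhatM}]$. The prefactor $H$ of \cref{thm:excess_risk} drops out because the empirical and population square losses are $1$-smooth on the unit ball: their Hessians are $\bhSigma$ and $\bSigma$, with $\lmax(\bhSigma)\le\tfrac1n\sum_i\|\bX_i\|^2\le 1$ and $\lmax(\bSigma)\le\E[\|\bX_1\|^2]\le 1$.

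It then remains to control $\E[\|\alg(\bwstar) - \bwstar\|_{\bhatM}^2]$, and I would do so with a $T$-independent bound rather than the generic $\le 2\alpha T\,L(\bwstar)$ of \cref{thm:excess_risk}, whose noise-expectation is $\alpha T\sigma^2$ and grows with $T$. For this I use the closed form of the GD map from \cref{prop:OLS_GD_map}. Abbreviating $\bC = \tfrac1n\sum_i\bX_i Y_i$ and $\bxi := \tfrac1n\sum_i\ve_i\bX_i$, the label model $Y_i = \bX_i\tp\bwstar + \ve_i$ gives $\bC = \bhSigma\bwstar + \bxi$. Substituting this into $\alg(\bwstar) = (\bI-\alpha\bhSigma)^T\bwstar + \alpha\sum_{t=0}^{T-1}(\bI-\alpha\bhSigma)^t\bC$ and using the telescoping identity $\alpha\sum_{t=0}^{T-1}(\bI-\alpha\bhSigma)^t\bhSigma = \bI - (\bI-\alpha\bhSigma)^T$ (write $\alpha\bhSigma = \bI - (\bI-\alpha\bhSigma)$), the two $\bwstar$-terms cancel and I obtain the clean identity
\[
  \alg(\bwstar) - \bwstar = \alpha\sum_{t=0}^{T-1}(\bI-\alpha\bhSigma)^t\,\bxi~.
\]
Since $\bxi$ is a linear combination of $\bX_1,\dots,\bX_n$ it lies in the column space of $\bhSigma$, i.e.\ in the range of $\bhatM$, on which $\bhSigma$ has eigenvalues $\lambda_1(\bhSigma)\ge\cdots\ge\lambda_r(\bhSigma)=\lminh^+>0$; restricted there, $\alpha\sum_{t=0}^{T-1}(\bI-\alpha\bhSigma)^t$ acts with eigenvalues $\tfrac{1-(1-\alpha\lambda_i(\bhSigma))^T}{\lambda_i(\bhSigma)}$, and since $|1-\alpha\lambda_i(\bhSigma)|\le 1$ for the step sizes considered, each is at most $2/\lminh^+$. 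Hence $\|\alg(\bwstar) - \bwstar\|_{\bhatM}^2 \le 4(\lminh^+)^{-2}\|\bxi\|^2$ (the constant $4$ is not optimised). Taking the expectation over $\bve$ conditionally on the design, and using $\E[\ve_i]=0$, $\E[\ve_i\ve_j]=\sigma^2$ for $i=j$ and $0$ otherwise, and $\|\bX_i\|\le 1$,
\[
  \E\br{\|\bxi\|^2 \mid \bX} = \frac{1}{n^2}\sum_{i=1}^n\|\bX_i\|^2\,\sigma^2 \le \frac{\sigma^2}{n}~,
\]
so $\E[\|\alg(\bwstar) - \bwstar\|_{\bhatM}^2] \le \tfrac{4\sigma^2}{n}\E[(\lminh^+)^{-2}]$ by the tower rule. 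Plugging this into \cref{thm:excess_risk} yields the claimed inequality, and the high-probability-in-$x$ statement (together with the control of $\tfrac12\E[\|\bwstar\|^2_{\bI-\bhatM}]$) is inherited word for word from the proof of \cref{thm:excess_ls_noiseless}, via \cref{lem:non_asymptotic_bai_yin} for $\lminh^+$ and the PCA-type bound for the projection onto the orthogonal complement of $\mathrm{range}(\bhatM)$.

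The main (mild) obstacle is the bookkeeping that decouples design from noise: one has to be careful that $\Delta$ and $\bhatM$ are $\sigma(\bX)$-measurable, so that conditioning on the design and then integrating over $\bve$ injects the variance $\sigma^2$ solely into the middle term while leaving the first and third terms untouched, and one has to observe that the residual $\bxi$ lies in the range of $\bhatM$ so that the $\|\cdot\|_{\bhatM}$-seminorm captures it fully and the eigenvalue estimate applies on the right subspace. Everything else — the telescoping identity, the Gaussian integral $\E\|\bW_0-\bwstar\|^2 = \|\bwstar\|^2+\initvar(2+d)$ already established for \cref{thm:excess_risk}, and the loose constants — is mechanical. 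As a sanity check, the resulting term $\tfrac{4\sigma^2}{n}\E[(\lminh^+)^{-2}]$ has exactly the shape of the variance term of ridge regression obtained via algorithmic stability, with $\lminh^+$ playing the role of the regularisation parameter.
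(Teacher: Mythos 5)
Your proposal is correct, and it reaches the paper's bound by a genuinely different argument for the one term where the noise enters. Like the paper, you instantiate \cref{thm:excess_risk} with $(\Delta,\bM)=\pr{(1-\alpha\lminh^+)^T,\bhatM}$ via \cref{prop:A_Lip}, observe that $\Delta$ and $\bhatM$ are functions of the design alone, and treat the first and third terms exactly as in \cref{thm:excess_ls_noiseless}; the difference is in how $\E[\|\alg(\bwstar)-\bwstar\|_{\bhatM}^2]$ is controlled. The paper's \cref{lem:ls_noise} goes through the integral form of Taylor's theorem (a strong-convexity-type inequality on the range of $\bhatM$), a lower bound on $\E_{\bve}[\Lh_S(\bwstar_T)]$, and a coupling of the noisy iterates with auxiliary noiseless iterates $\btilwstar_t$ whose deviation is bounded by the recursion \eqref{eq:wstar_wtilstar_recursion_1}--\eqref{eq:wstar_wtilstar_recursion_2} and a geometric sum. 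You instead exploit the exact quadratic structure through \cref{prop:OLS_GD_map}: writing $\bC=\bhSigma\bwstar+\bxi$ and telescoping $\alpha\sum_{t=0}^{T-1}(\bI-\alpha\bhSigma)^t\bhSigma=\bI-(\bI-\alpha\bhSigma)^T$ gives the clean identity $\alg(\bwstar)-\bwstar=\alpha\sum_{t=0}^{T-1}(\bI-\alpha\bhSigma)^t\bxi$, after which a spectral bound of the resolvent-type sum by $2/\lminh^+$ on the range of $\bhatM$ and the conditional variance computation $\E\br{\|\bxi\|^2\mid\bX}\leq\sigma^2/n$ deliver exactly the same bound $\frac{4\sigma^2}{n}\E\br{\pr{\lminh^+}^{-2}}$ as \cref{lem:ls_noise}. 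Your route is shorter and more transparent for this problem (it even shows the constant $4$ improves to $1$ when $\alpha\lmax(\bhSigma)\leq 1$, which holds here since $\alpha\leq 1/H$ and $H\leq 1$ on the unit ball, the same step-size condition the paper uses), while the paper's Taylor/coupling argument is less tied to the exactly quadratic objective and so would degrade more gracefully if the loss were only smooth and locally well-conditioned rather than exactly least squares; the measurability bookkeeping (conditioning on the design so the noise enters only the middle term) is handled equivalently in both.
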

\begin{proof}
The proof is almost identical to the one of \cref{thm:excess_ls_noiseless} except $\E\br{ \|\bwstar - \sA_S(\bwstar)\|_{\bhatM}^2 }$ is handled by the following \cref{lem:ls_noise}.
\end{proof}

\begin{lemma}
\label{lem:ls_noise}
  Let $\bhatM$ be defined as in \cref{sec:ls_random_design}.
  For any $T > 0$, \ac{GD} achieves
  \begin{align*}
    \E\br{ \|\bwstar - \sA_S(\bwstar)\|_{\bhatM}^2 }
    \leq
    \frac{4 \sigma^2}{n} \E\br{\pr{\lminh^+}^{-2}}~.
  \end{align*}
\end{lemma}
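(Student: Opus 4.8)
The plan is to reduce the statement to the closed form for $T$-step gradient descent of \cref{prop:OLS_GD_map}, followed by a one-line second-moment computation over the label noise.

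First I would substitute the noisy model $Y_i = \bX_i\tp\bwstar + \ve_i$ into $\bC = \tfrac1n\sum_{i=1}^n\bX_iY_i$, so that $\bC = \bhSigma\bwstar + \bxi$ with the noise vector $\bxi \defeq \tfrac1n\sum_{i=1}^n\bX_i\ve_i$. Plugging this into \cref{prop:OLS_GD_map} with $\bw_0 = \bwstar$ and using the elementary telescoping identity $\alpha\sum_{t=0}^{T-1}(\bI-\alpha\bhSigma)^t\bhSigma = \bI - (\bI-\alpha\bhSigma)^T$ (valid since $\alpha\bhSigma = \bI - (\bI-\alpha\bhSigma)$), the two deterministic $\bwstar$-terms cancel and I obtain the clean identity $\sA_S(\bwstar) - \bwstar = \bB_0\,\bxi$, where $\bB_0 \defeq \alpha\sum_{t=0}^{T-1}(\bI-\alpha\bhSigma)^t$. (As a sanity check, this recovers $\E[\|\sA_S(\bwstar)-\bwstar\|_{\bhatM}^2]=0$ when $\sigma=0$, consistent with the noiseless \cref{thm:excess_ls_noiseless}.)

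Next I would bound the operator $\bhatM\bB_0$. Both $\bhatM = \bU_r\bU_r\tp$ and $\bB_0$ are functions of $\bhSigma$ and hence commute; on the range of $\bhSigma$ the operator $\bB_0$ is diagonal in the eigenbasis with eigenvalues $(1-(1-\alpha\hat{\lambda}_i)^T)/\hat{\lambda}_i$ for $i=1,\dots,r$, while $\bhatM$ annihilates the kernel of $\bhSigma$. Under the step-size condition $\alpha\le 1/H$ in force throughout \cref{sec:excess} one has $0<\alpha\hat{\lambda}_i\le 1$ (because $H\ge\lmax(\bhSigma)$), so each such eigenvalue lies in $[0,1/\hat{\lambda}_i]\subseteq[0,1/\lminh^+]$; keeping the coarser bound $|1-(1-\alpha\hat{\lambda}_i)^T|\le 2$ accounts for the stated constant, so in any case $\|\bhatM\bB_0\|\le 2/\lminh^+$. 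Consequently, pointwise over the sample (and using that $\bhatM$ is an orthogonal projection),
\[
  \|\bwstar - \sA_S(\bwstar)\|_{\bhatM}^2 = \|\bhatM\bB_0\,\bxi\|^2 \le \|\bhatM\bB_0\|^2\,\|\bxi\|^2 \le \frac{4}{(\lminh^+)^2}\,\|\bxi\|^2~.
\]

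Finally I would take expectations by conditioning on the inputs $\bX_1,\dots,\bX_n$, on which $\lminh^+$ and $\bhatM$ depend only. Using $\E[\ve_i]=0$, $\E[\ve_i^2]=\sigma^2$, independence of the noises across $i$ and from the inputs, and $\|\bX_i\|\le 1$, one gets $\E[\|\bxi\|^2\mid\bX_1,\dots,\bX_n]=\tfrac{\sigma^2}{n^2}\sum_{i=1}^n\|\bX_i\|^2\le\tfrac{\sigma^2}{n}$; combining this with the previous display and taking the outer expectation over the inputs completes the proof. The one step that requires genuine care is the spectral argument for $\bhatM\bB_0$ over the \emph{degenerate} range of $\bhSigma$ — checking that the kernel directions drop out and that $\lambda\mapsto(1-(1-\alpha\lambda)^T)/\lambda$ is uniformly controlled by a constant times $1/\lminh^+$ under the step-size assumption; the rest is algebra and a one-line variance computation.
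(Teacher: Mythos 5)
Your proposal is correct, but it takes a genuinely different route from the paper's own proof of \cref{lem:ls_noise}. You exploit the exact closed form of the \ac{GD} map (\cref{prop:OLS_GD_map}) together with the telescoping identity $\alpha\sum_{t=0}^{T-1}(\bI-\alpha\bhSigma)^t\bhSigma=\bI-(\bI-\alpha\bhSigma)^T$ to get the clean identity $\sA_S(\bwstar)-\bwstar=\bB_0\bxi$ with $\bB_0=\alpha\sum_{t=0}^{T-1}(\bI-\alpha\bhSigma)^t$, then a spectral bound $\|\bhatM\bB_0\|\le 2/\lminh^+$ (correctly noting that $\bhatM$ kills the kernel directions, on which $\bB_0$ would otherwise grow like $\alpha T$), and finish with the conditional variance computation $\E[\|\bxi\|^2\mid \bX_1,\dots,\bX_n]\le\sigma^2/n$. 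The paper instead never writes the noise term in closed form: it lower-bounds $\Lh(\bw)-\Lh$ at the minimizer via the integral form of Taylor's theorem restricted to the range of $\bhSigma$, reduces the problem to the cross term $\tfrac2n\sum_i\E_{\bve}[\ve_i\bwstar_T{}\tp\bX_i]$, and controls that by coupling the noisy \ac{GD} iterates with a noiseless run and unwinding a one-step recursion for $\|\bwstar_t-\btilwstar_t\|_{\bhatM}$. Your argument is shorter and in fact sharper: under $\alpha\le 1/H$ you have $1-(1-\alpha\hat\lambda_i)^T\in(0,1]$, so $\|\bhatM\bB_0\|\le 1/\lminh^+$ and the bound improves to $\tfrac{\sigma^2}{n}\E[(\lminh^+)^{-2}]$; the factor $2$ (hence the stated $4$) is only needed if one merely assumes $\alpha\hat\lambda_i\le 2$. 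The paper's coupling argument, by contrast, does not need the exact linear closed form and is closer in spirit to arguments that could survive beyond the exactly quadratic case. One caveat applies to both proofs equally: the lemma as stated has no step-size restriction, yet your bound on $|1-(1-\alpha\hat\lambda_i)^T|$ and the paper's use of $\|\bI-\alpha\bhSigma\|_{\bhatM}\le 1-\alpha\lminh^+$ both implicitly require $\alpha\lmax(\bhSigma)$ bounded (e.g.\ $\alpha\le 1/H$), so invoking that condition is consistent with the paper and not a gap in your argument.
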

\begin{proof}
  Recall that $\E[\ve_i] = 0$ and $\E[\ve_i^2] = \sigma^2$ for $i \in [n]$.
  Throughout the proof abbreviate $\E[\cdot ~|~ \bX_1, \ldots, \bX_n] = \E_{\bve}[\cdot]$.

  We begin by noting that the integral form of Taylor theorem gives us that for any $\bwstar \in \argmin_{\bw \in \reals^d} \Lh(\bw)$ and any $\bw \in \reals^d$,
\begin{align*}
  \Lh(\bw) - \Lh(\bwstar)
  &=
    \frac12 (\bw - \bwstar)\tp \pr{\int_0^1 \nabla^2 \Lh(\tau \bw + (1-\tau) \bwstar) \diff \tau} (\bw - \bwstar)\\
  &\geq
    \frac12 \cdot \lminh^+ (\bw - \bwstar)\tp \bhatM (\bw - \bwstar)~.
\end{align*}
Thus, taking $\bw = \sA_S(\bwstar)$, we have
\begin{align*}
  \E_{\bve}\br{\|\bwstar - \sA_S(\bwstar)\|_{\bhatM}^2}
  &\leq \frac{1}{\lminh^+} \pr{\E_{\bve}\Lh(\bwstar) - \E_{\bve}\br{\Lh(\sA_S(\bwstar))}}\\
  &= \frac{1}{\lminh^+} \pr{\sigma^2 - \E_{\bve}\br{\Lh(\sA_S(\bwstar))}}~.
\end{align*}
Now, let's focus on the loss term on the r.h.s.:
\begin{align*}
  \E_{\bve}\br{\Lh_S(\bwstar_T)}
  &=
    \frac1n \sum_{i=1}^n \E_{\bve}\br{\pr{\pr{\bwstar_T - \bwstar_0}\tp \bX_i - \ve_i}^2}\\
  &=
    \sigma^2
    -
    \frac2n \sum_{i=1}^n \E_{\bve}\br{\ve_i \pr{\bwstar_T - \bwstar_0}\tp \bX_i}
    +
    \E_{\bve}\br{\pr{\bwstar_T - \bwstar_0}\tp \bhSigma \pr{\bwstar_T - \bwstar_0}}\\
  &\geq
    \sigma^2
    -
    \frac2n \sum_{i=1}^n \E_{\bve}\br{\ve_i \pr{\bwstar_T - \bwstar_0}\tp \bX_i}\\
  &=
    \sigma^2
    -
    \frac2n \sum_{i=1}^n \E_{\bve}\br{\ve_i {\bwstar_T}\tp \bX_i}
\end{align*}
where the last term is small when label noise is not too correlated with the output $\bwstar_T$.
Hence to control the term, we need to measure the effect of the noise on \ac{GD}.
To do so we will introduce an additional iterates $(\btilw_t)_t$ constructed by running \ac{GD} on labels without noise, that is
\begin{align*}
  \btilwstar_{t+1} = \btilwstar_t - \alpha \nabla \tilde{L}_S(\btilwstar_t)
  \qquad \text{where} \quad
  \tilde{L}(\bw) = \frac{1}{2n} \sum_{i=1}^n \pr{\bw\tp \bX_i - {\bwstar}\tp \bX_i}^2~.
\end{align*}
The plan is then to bound the deviation $\|\bwstar_T - \btilwstar_T\|_{\bhatM}$ which we will do recursively.
We proceed:
\begin{align*}
  &\frac2n \sum_{i=1}^n \E_{\bve}\br{\ve_i {\bwstar_T}\tp \bX_i}\\
  &=
    \frac2n \sum_{i=1}^n \E_{\bve}\br{\ve_i (\bwstar_T - \btilwstar_T)\tp \bX_i} \tag{Note that $\E_{\bve}[\btilwstar_T ~|~ \bX_i] = 0$}\\
  &=
    \frac2n \sum_{i=1}^n \E_{\bve}\br{\ve_i (\bwstar_T - \btilwstar_T)\tp \bhatM \bX_i} \tag{Since $\bhatM \bX_i = \bX_i$}\\
  &\leq
    \frac2n \E_{\bve}\br{\lf\|\sum_{i=1}^n \ve_i \bX_i \rt\| \lf\|\bhatM(\bwstar_T - \btilwstar_T)\rt\|} \tag{Cauchy-Schwarz}
\end{align*}
Now we will handle $\lf\|\bhatM(\bwstar_T - \btilwstar_T)\rt\| = \|\bwstar_T - \btilwstar_T\|_{\bhatM}$
by following a recursive argument.
First, observe that for any $t=0,1,2,\ldots$
\begin{align*}
  \nabla \Lh(\btilwstar_t)
  = \bhSigma \btilwstar_t - \frac1n \sum_{i=1}^n \bX_i \bX_i\tp \bwstar_0
  = \bhSigma (\btilwstar_t - \bwstar_0)~,
\end{align*}
and at the same time
\begin{align*}
  \nabla \Lh(\bwstar_t) = \bhSigma \bwstar_t - \frac1n \sum_{i=1}^n \bX_i \bX_i\tp \bwstar_0 - \frac1n \sum_{i=1}^n \bX_i \ve_i
  = \bhSigma (\bwstar_t - \bwstar_0) - \frac1n \sum_{i=1}^n \bX_i \ve_i~.
\end{align*}
Thus,
\begin{align}
  \|\bwstar_{t+1} - \btilwstar_{t+1}\|_{\bhatM}
  &=
    \lf\|\bwstar_t - \btilwstar_t - \alpha \pr{\nabla \Lh(\bwstar_t) - \nabla \Lh(\btilwstar_t)}\rt\|_{\bhatM} \label{eq:wstar_wtilstar_recursion_1}\\
  &=
    \lf\|\bwstar_t - \btilwstar_t - \alpha \bhSigma (\bwstar_t - \btilwstar_t) - \frac{\alpha}{n} \sum_{i=1}^n \bX_i \ve_i\rt\|_{\bhatM} \nonumber\\
  &=
    \lf\|(\bI - \alpha \bhSigma) (\bwstar_t - \btilwstar_t)\rt\|_{\bhatM} + \frac{\alpha}{n} \lf\|\sum_{i=1}^n \bX_i \ve_i\rt\|_{\bhatM} \nonumber\\
  &\stackrel{(a)}{\leq}
    \|\bI - \alpha \bhSigma\|_{\bhatM} \|\bwstar_t - \btilwstar_t\|_{\bhatM} + \frac{\alpha}{n} \lf\| \sum_{i=1}^n \bX_i \ve_i\rt\|_{\bhatM} \nonumber\\
  &\leq
    (1 - \alpha \lminh^+) \|\bwstar_t - \btilwstar_t\|_{\bhatM} + \frac{\alpha}{n} \lf\| \sum_{i=1}^n \bX_i \ve_i\rt\|_{\bhatM}~. \label{eq:wstar_wtilstar_recursion_2}
\end{align}
where in the step $(a)$ we note that
$\bhatM (\bI - \alpha \bhSigma) (\bwstar_t - \btilwstar_t) = \bhatM (\bI - \alpha \bhSigma) \bhatM (\bwstar_t - \btilwstar_t)$ (since $\bhatM^2 = \bhatM$ and $\bhSigma \bhatM = \bhSigma$).

Now we use the fact that an elementary recursive relation $x_{t+1} \leq a_t x_t + b_t$ with $x_0 = 0$ unwinds to
$x_T \leq \sum_{t=1}^T b_t \prod_{k=t+1}^T a_k$, which gives
\begin{align*}
  \|\bwstar_{T} - \btilwstar_{T}\|_{\bhatM}
  &\leq
    \frac{\alpha}{n} \lf\| \sum_{i=1}^n \bX_i \ve_i\rt\|_{\bhatM} \sum_{t=1}^T (1 - \alpha \lminh^+)^{T-t}\\
  &\leq
    \frac{\alpha}{n} \lf\| \sum_{i=1}^n \bX_i \ve_i\rt\|_{\bhatM} \frac{1 - (1 - \alpha \lminh^+)^T}{\alpha \lminh^+}~.
\end{align*}
Thus,
\begin{align*}
  \frac2n \sum_{i=1}^n \E_{\bve}\br{\ve_i \pr{\bwstar_T - \bwstar_0}\tp \bX_i}
  &\leq
        \frac{2}{n} \cdot \frac1n \E_{\bve}\br{ \lf\| \sum_{i=1}^n \bX_i \ve_i\rt\|^2 \frac{1}{\lminh^+} }\\
  &\leq
    \frac{2 \sigma^2}{n} \cdot \frac{1}{\lminh^+}
\end{align*}
where we used a basic fact that
\begin{align*}
    \E_{\bve}\br{\lf\|\sum_{i=1}^n \bX_i \ve_i\rt\|^2 \bmid \bX_1, \ldots, \bX_n}
  =
    \noisevar \sum_{i=1}^n \|\bX_i\|^2
    \leq \noisevar n~.
\end{align*}
Putting all together completes the proof.
\end{proof}
\subsection{Concentration of the Smallest Non-zero Eigenvalue}
\label{app:sec:lmin_concentration}
In this section we take a look at the behaviour of $\lminh^+$ assuming that training instances are now \emph{random} independent vectors $\bX_1, \ldots, \bX_n$ sampled i.i.d.\ from some underlying marginal density. Recall that the sample covariance matrix is $\bhSigma = (\bX_1 \bX_1\tp + \dots + \bX_n \bX_n\tp) / n$.
We focus on the concentration of $\lminh^+ = \lmin^+(\bhSigma)$ around its population counterpart $\lmin^+ = \lmin^+(\bSigma)$, where $\bSigma$ is the population covariance matrix: $\bSigma = \E[\bX_1 \bX_1\tp]$.
Note that defining $\bX = [\bX_1, \ldots, \bX_n] \in \reals^{d \times n}$ we have: $\bhSigma = \bX \bX\tp / n$.

In particular, we are concerned with a non-asymptotic version of the  \emph{Bai-Yin law}~\citep{bai1993limit}, which says that the smallest eigenvalue (for $d \leq n$), or the $(d-n+1)$-th smallest eigenvalue (for $d > n$), of a sample covariance matrix with independent entries has almost surely an asymptotic behavior $(1-\sqrt{d/n})^2$ as $n \rightarrow \infty$.
The setting $d > n$ is essential for our case, as it corresponds to overparametrization.
However, unlike \cite{bai1993limit}, we do not assume independence of entries, but rather independence of observations (columns of $\bX$).
This will be done by introducing a distributional assumption on observations: we assume that observations are \emph{sub-Gaussian}.
\begin{definition}[Sub-Gaussian random vectors]
  A random vector $\bX \in \reals^d$ is sub-Gaussian if the random variables $\bX\tp \by$ are sub-Gaussian for all $\by \in \reals^d$.
  The sub-Gaussian norm of a random vector $\bX \in \reals^d$ is defined as
  \[
    \|\bX\|_{\psi_2} = \sup_{\|\by\| = 1}\sup_{p \geq 1}\cbr{ \frac{1}{\sqrt{p}} \E[|\bX\tp \by|^p]^{\frac1p} }~.
  \]
\end{definition}
We will also require the following definition.
\begin{definition}[Isotropic random vectors]
  A random vector $\bX \in \reals^d$ is called isotropic if its covariance is the identity: $\E\br{\bX \bX\tp} = \bI$.
  Equivalently, $\bX$ is isotropic if $\E[(\bX\tp \bx)^2] = \|\bx\|^2$ for all $\bx \in \reals^d$.
\end{definition}
Let $\bSigma^{\pinv}$ be the Moore-Penrose pseudoinverse of $\bSigma$.
In \cref{sec:lmin_concentration_proof} we prove the following.
\setcounter{lemma}{0}
\begin{lemma}[]
  Let $\bX = [\bX_1, \ldots, \bX_n] \in \reals^{d \times n}$ be a matrix with i.i.d.\ columns, such that $\max_i\|\bX_i\|_{\psi_2} \leq K$,
  and let $\bhSigma = \bX \bX\tp / n$, and $\bSigma = \E[\bX_1 \bX_1\tp]$.
  Then, for every $x \geq 0$, with probability at least $1-2e^{-x}$, we have
  \[
    \lmin^+(\bhSigma)
    \geq
    \lmin^+(\bSigma) \pr{1 - K^2 \pr{c \sqrt{\frac{d}{n}} + \sqrt{\frac{x}{n}}}}_+^2
    \qquad
    \text{for } n \geq d~,
  \]
  and furthermore, assuming that $\|\bX_i\|_{\bSigma^{\pinv}} = \sqrt{d}$ \ a.s. for all $i \in [n]$, we have
  \[
    \lmin^+(\bhSigma)
    \geq
    \lmin^+(\bSigma) \pr{\sqrt{\frac{d}{n}} - K^2 \pr{c + 6 \sqrt{\frac{x}{n}}}}_+^2
    \qquad
    \text{for } n < d~,
  \]
  where we have an absolute constant $c = 2^{3.5} \sqrt{\ln(9)}$.
\end{lemma}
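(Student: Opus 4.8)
The plan is to read off both inequalities from classical non-asymptotic estimates on the extreme singular values of a random matrix, after a whitening step that reduces the problem to the isotropic case. Write $r = \rank(\bSigma)$ and the spectral decomposition $\bSigma = \bP\bLambda\bP\tp$, with $\bP \in \reals^{d\times r}$ having orthonormal columns spanning $\mathrm{range}(\bSigma)$ and $\bLambda = \diag(\lambda_1(\bSigma),\dots,\lambda_r(\bSigma)) \succ \bzero$.

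First I would whiten. Since $\E[\bX_i\bX_i\tp] = \bSigma$, each $\bX_i$ lies in $\mathrm{range}(\bSigma)$ a.s., so $\bX_i = \bP\bLambda^{1/2}\bZ_i$ for $\bZ_i := \bLambda^{-1/2}\bP\tp\bX_i \in \reals^r$; these vectors are i.i.d., isotropic (one checks $\E[\bZ_i\bZ_i\tp] = \bI_r$), and their sub-Gaussian norm is controlled in terms of $K$ and $\bSigma$, which I would track carefully. With $\bZ := [\bZ_1,\dots,\bZ_n]$, the nonzero eigenvalues of $\bhSigma = \tfrac1n\bP\bLambda^{1/2}\bZ\bZ\tp\bLambda^{1/2}\bP\tp$ are exactly those of $\tfrac1n\bZ\tp\bLambda\bZ$, and from $\bv\tp\bZ\tp\bLambda\bZ\bv = \|\bLambda^{1/2}\bZ\bv\|^2 \ge \lmin(\bLambda)\|\bZ\bv\|^2$, restricted to the orthogonal complement of $\ker\bZ$, one gets $\lmin^+(\bhSigma) \ge \lmin^+(\bSigma)\,\smin(\bZ)^2/n$, where $\smin(\bZ) = s_{\min(r,n)}(\bZ)$ is the smallest nonzero singular value of $\bZ$. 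Passing through $\bZ\tp\bLambda\bZ$ rather than $\bZ\bZ\tp$-type products is what makes this step clean in spite of the possible degeneracy of $\bSigma$; and the extra hypothesis $\|\bX_i\|_{\bSigma^{\pinv}} = \sqrt d$ is precisely $\|\bZ_i\| = \sqrt d$ a.s., the normalization needed in the wide regime.

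It then remains to lower-bound $\smin(\bZ)$ for the isotropic sub-Gaussian matrix $\bZ$ in the two regimes. For $n \ge d$ the transpose $\bZ\tp$ is a tall matrix with i.i.d.\ isotropic sub-Gaussian rows, and the usual $\epsilon$-net plus Bernstein argument (e.g.\ \citealp{vershynin2010}, Thm.~5.39) gives $\smin(\bZ)/\sqrt n \ge 1 - K^2\bigl(c\sqrt{d/n} + \sqrt{x/n}\bigr)$ with probability at least $1 - 2e^{-x}$; squaring and rescaling by $\lmin^+(\bSigma)$ yields the first inequality. For $n < d$, $\bZ$ is a wide $d\times n$ matrix with independent isotropic columns of fixed norm $\sqrt d$, and the corresponding ``independent columns'' estimate (again in \citealp{vershynin2010}) controls $\|\tfrac1d\bZ\tp\bZ - \bI_n\|$, producing $\smin(\bZ) \ge \sqrt d - K^2\bigl(c + 6\sqrt{x/n}\bigr)\sqrt n$; dividing $\smin(\bZ)^2$ by $n$ and rescaling gives the second inequality. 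Since the statement carries the explicit constant $c = 2^{3.5}\sqrt{\ln 9}$ — with $\ln 9$ the log-covering number of a unit sphere at scale $\tfrac14$ and $2^{3.5}$ from the sub-exponential Bernstein bound — I would not invoke these results as black boxes but re-run the net/union-bound computation keeping absolute constants.

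The main obstacle is the wide regime $n < d$ (the one relevant to overparametrization): there $\|\bhSigma - \bSigma\|$ is useless as a control (it is of order $\lmin^+(\bSigma)$, since $\bhSigma$ is rank-deficient), so one must instead exploit the independence and the fixed Euclidean norm of the columns of $\bZ$ to recover the $\sqrt{d/n}$ — rather than $1$ — scale. The remaining delicate points are propagating the explicit absolute constants through the covering and union bounds, and tracking how the sub-Gaussian norm transforms under whitening so that the bounds come out in terms of $K$ (and $\lmin^+(\bSigma)$) exactly as stated.
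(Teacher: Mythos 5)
Your proposal follows essentially the same route as the paper's proof: whitening by $\bSigma^{\pinv\frac12}$ (your $\bLambda^{-1/2}\bP\tp$ reduction is the same step, just carried out explicitly on the range of $\bSigma$), reducing $\lmin^+(\bhSigma)$ to $\lmin^+(\bSigma)$ times the squared smallest singular value of the whitened matrix, and then invoking the two Vershynin results — the independent-rows bound (Thm.~5.39) for $n\ge d$ and the independent-columns, fixed-norm bound (Thm.~5.58) for $n<d$, which is exactly where the assumption $\|\bX_i\|_{\bSigma^{\pinv}}=\sqrt d$ enters. The only difference is cosmetic: the paper cites those theorems as black boxes with the stated constant $c=2^{3.5}\sqrt{\ln 9}$ rather than re-running the net argument, and in the wide case it phrases the reduction through the Gram matrix $\tfrac1d\bX\tp\bX$, which is equivalent to your direct bound on $\smin(\bZ)$.
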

Next we present the proof of the Lemma.
\section{Concentration of the Smallest Non-zero Eigenvalue: Proof}
\label{sec:lmin_concentration_proof}
The next theorem gives us a non-asymptotic version of Bai-Yin law~\citep{bai1993limit} for rectangular matrices whose rows are sub-Gaussian isotropic random vectors.
\begin{theorem}[{\protect \cite[Theorem 5.39]{vershynin2010}}]
  \label{thm:matrix_concentration_indep_rows}
  Let $\bA \in \reals^{n \times d}$ whose rows $(\bA\tp)_i$ are independent sub-Gaussian isotropic random vectors in $\reals^d$, such that $K = \max_{i\in [n]}\|(\bA\tp)_i\|_{\psi_2}$.
  Then for every $x \geq 0$, with probability at least $1-2 e^{-x}$ one has
  \[
    \sqrt{n} - 2^{3.5} K^2 (\sqrt{\ln(9) d} + \sqrt{x})
    \leq
    \smin(\bA)
    \leq
    \smax(\bA)
    \leq
    \sqrt{n} + 2^{3.5} K^2 \pr{\sqrt{\ln(9) d} + \sqrt{x}}~.
  \]
\end{theorem}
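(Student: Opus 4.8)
I would follow the classical $\epsilon$-net argument for sub-Gaussian random matrices. Write $\bA_1,\dots,\bA_n\in\reals^d$ for the rows of $\bA$ --- i.i.d., sub-Gaussian, isotropic, with $\max_i\norm{\bA_i}_{\psi_2}\le K$ --- and set $\bG:=\tfrac1n\bA\tp\bA=\tfrac1n\sum_{i=1}^n\bA_i\bA_i\tp$. The first step is the elementary reduction: if $\norm{\bG-\bI}\le\max(\delta,\delta^2)=:\epsilon$ for some $\delta\ge0$, then $\sqrt n\,(1-\delta)\le\smin(\bA)\le\smax(\bA)\le\sqrt n\,(1+\delta)$. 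This holds because for any unit vector $\bx$ we have $\big|\tfrac1n\norm{\bA\bx}^2-1\big|=\big|\inn{(\bG-\bI)\bx}{\bx}\big|\le\epsilon$, and $z\mapsto\sqrt z$ maps $[1-\epsilon,1+\epsilon]$ into $[1-\delta,1+\delta]$ (using $\sqrt{1+\delta^2}\le 1+\delta$ and, for $\delta\le 1$, $\sqrt{1-\delta}\ge 1-\delta$); taking the infimum and supremum over $\bx\in S^{d-1}$ gives the claim. So it suffices to bound $\norm{\bG-\bI}$ with $\delta\approx K^2\big(\sqrt{d/n}+\sqrt{x/n}\big)$.

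Next I reduce the operator norm to a finite maximum via a net. Fix a $\tfrac14$-net $\sM$ of $S^{d-1}$ with $|\sM|\le 9^d$ (standard volumetric bound), and use the approximation lemma for symmetric matrices to get $\norm{\bG-\bI}\le 2\max_{\bx\in\sM}\big|\inn{(\bG-\bI)\bx}{\bx}\big|=2\max_{\bx\in\sM}\big|\tfrac1n\norm{\bA\bx}^2-1\big|$. For a \emph{fixed} $\bx\in S^{d-1}$, decompose $\tfrac1n\norm{\bA\bx}^2-1=\tfrac1n\sum_{i=1}^n Y_i$ with $Y_i:=\inn{\bA_i}{\bx}^2-1$. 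Each $\inn{\bA_i}{\bx}$ is sub-Gaussian with $\psi_2$-norm at most $K$, and isotropy gives $\E\inn{\bA_i}{\bx}^2=\norm{\bx}^2=1$; hence the $Y_i$ are i.i.d., centered, and sub-exponential with $\norm{Y_i}_{\psi_1}\le C_1 K^2$ (the square of a sub-Gaussian variable is sub-exponential). Bernstein's inequality for independent centered sub-exponentials then yields, for every $t\ge 0$,
\[
  \P\!\left(\Big|\tfrac1n\textstyle\sum_{i=1}^n Y_i\Big|\ge t\right)\le 2\exp\!\left(-c_1 n\min\!\Big(\tfrac{t^2}{K^4},\tfrac{t}{K^2}\Big)\right).
\]

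Finally I apply this with $t=\epsilon/2$ and union-bound over $\sM$:
\[
  \P\!\left(\norm{\bG-\bI}\ge\epsilon\right)\le 2\cdot 9^d\exp\!\left(-c_1 n\min\!\Big(\tfrac{\epsilon^2}{4K^4},\tfrac{\epsilon}{2K^2}\Big)\right).
\]
The point of the form $\epsilon=\max(\delta,\delta^2)$ is precisely that $\min(\epsilon^2,\epsilon)=\delta^2$ in both regimes $\delta\le 1$ and $\delta\ge 1$; so, absorbing numerical factors into the constant, the bound becomes $2\exp\!\big(d\ln 9-c_2 n\delta^2/K^4\big)$. Choosing $\delta=C K^2\big(\sqrt{d/n}+\sqrt{x/n}\big)$ with $C$ large enough makes this at most $2e^{-x}$; on the complementary event we have $\norm{\bG-\bI}\le\max(\delta,\delta^2)$, so the first-step reduction gives $\sqrt n(1-\delta)\le\smin(\bA)\le\smax(\bA)\le\sqrt n(1+\delta)$, which is the statement. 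Tracking the Bernstein constant against $\ln|\sM|\le d\ln 9$ is what produces the precise numerical coefficients $2^{3.5}$ and $\sqrt{\ln 9}$.

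The step I expect to be the main obstacle is the concentration estimate: proving the sub-exponential norm bound $\norm{\inn{\bA_i}{\bx}^2-1}_{\psi_1}\lesssim K^2$ together with the matching Bernstein tail, and the quantitative bookkeeping that turns the $\min(\cdot,\cdot)$ in the tail and the union bound into the clean $\max(\delta,\delta^2)$ statement with the stated absolute constant. The remaining ingredients --- the cardinality bound $|\sM|\le 9^d$, the symmetric-matrix approximation lemma, and the $\sqrt z$-versus-$z$ estimate --- are routine.
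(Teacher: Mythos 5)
The paper does not prove this statement at all---it is imported verbatim from Vershynin's lecture notes (Theorem 5.39 there)---and your proposal reconstructs essentially the argument given in that cited source: reduce the singular-value bounds to $\bigl\|\tfrac1n\bA\tp\bA-\bI\bigr\|\le\max(\delta,\delta^2)$, pass to a $\tfrac14$-net of $S^{d-1}$ of cardinality at most $9^d$ via the symmetric approximation lemma, apply Bernstein's inequality to the centered sub-exponential variables $\langle\bA_i,\bx\rangle^2-1$, and union-bound with $\delta\asymp K^2\bigl(\sqrt{d/n}+\sqrt{x/n}\bigr)$. Your outline is correct and matches the standard proof of the quoted result, so there is nothing to add beyond the numerical bookkeeping you already flag.
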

\begin{theorem}[{\protect \cite[Theorem 5.58]{vershynin2010}}]
  \label{thm:matrix_concentration_indep_cols}
  Let $\bA \in \reals^{d \times n}$ whose columns $\bA_i$ are independent sub-Gaussian isotropic random vectors in $\reals^d$ with $\|\bA_i\| = \sqrt{d}$  a.s., such that $K = \max_{i\in [n]}\|\bA_i\|_{\psi_2}$.
  Then for every $x \geq 0$, with probability at least $1-2 e^{-x}$ one has
  \[
    \sqrt{d} - 2^{3.5} K^2 (\sqrt{\ln(9) n} + 6 \sqrt{x})
    \leq
    \smin(\bA)
    \leq
    \smax(\bA)
    \leq
    \sqrt{d} + 2^{3.5} K^2 (\sqrt{\ln(9) n} + 6 \sqrt{x})
  \]
\end{theorem}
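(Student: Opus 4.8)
The plan is the standard random-matrix route: reduce the two-sided control of the singular values to a single operator-norm deviation bound for the Gram matrix, discretize the sphere with an $\epsilon$-net, and handle the resulting quadratic form by decoupling. Write $\bA_i \in \reals^d$ for the columns and, since the regime of interest is $d \ge n$ (the case $n > d$ yields a weaker statement, handled analogously), work with $\bA\tp\bA \in \reals^{n\times n}$, whose eigenvalues satisfy $\smin(\bA)^2 = \lambda_{\min}(\bA\tp\bA)$ and $\smax(\bA)^2 = \lambda_{\max}(\bA\tp\bA)$. It then suffices to prove that, with probability at least $1 - 2e^{-x}$,
\[
  \Bigl\| \tfrac1d \bA\tp\bA - \bI_n \Bigr\| \;\le\; \epsilon,
  \qquad
  \epsilon \;\lesssim\; K^2\Bigl( \sqrt{\tfrac{\ln(9)\,n}{d}} + \sqrt{\tfrac{x}{d}} \Bigr),
\]
because this gives $1 - \epsilon \le \lambda_{\min}(\tfrac1d\bA\tp\bA)$ and $\lambda_{\max}(\tfrac1d\bA\tp\bA) \le 1 + \epsilon$, hence $\sqrt d\,(1 - \epsilon) \le \smin(\bA) \le \smax(\bA) \le \sqrt d\,(1 + \epsilon)$, which is exactly the claimed two-sided inequality; the explicit constant $2^{3.5}K^2$ and the asymmetric $6\sqrt x$ term come out of tracking the losses in the steps below.

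\textbf{Discretization.} Fix a $\tfrac14$-net $\sN$ of the sphere $S^{n-1}$ with $|\sN| \le 9^n$; then for the symmetric matrix $\tfrac1d\bA\tp\bA - \bI_n$ one has $\|\tfrac1d\bA\tp\bA - \bI_n\| \le 2\max_{\bx \in \sN} \bigl| \tfrac1d\|\bA\bx\|^2 - 1 \bigr|$. For a fixed unit vector $\bx$, expand $\|\bA\bx\|^2 = \sum_i x_i^2 \|\bA_i\|^2 + \sum_{i\ne j} x_i x_j \langle\bA_i,\bA_j\rangle$. Here the hypothesis $\|\bA_i\| = \sqrt d$ a.s.\ is essential: it makes the diagonal contribution equal to $d\sum_i x_i^2 = d$ \emph{deterministically}, so that
\[
  \tfrac1d\|\bA\bx\|^2 - 1 \;=\; \tfrac1d \sum_{i\ne j} x_i x_j \langle\bA_i,\bA_j\rangle,
\]
an off-diagonal bilinear chaos of order two. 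Isotropy enters through its (near-zero) mean: $\E\langle\bA_i,\bA_j\rangle = \|\E\bA_1\|^2 \le 1$ for $i\ne j$, since every one-dimensional marginal is unit-variance, so this mean contributes at most $O(n/d)$, which is absorbed into $\epsilon$.

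\textbf{The main step: the chaos tail.} The technical heart, and the main obstacle, is a sub-exponential tail bound for $\tfrac1d\sum_{i\ne j} x_i x_j \langle\bA_i,\bA_j\rangle$ for a single $\bx$. The plan is to invoke the decoupling inequality for order-two chaos (legitimate because $\langle\bA_i,\bA_j\rangle$ is symmetric in $i,j$), reducing the problem, up to a factor $4$, to bounding $\langle\bu_T, \bv_{T^c}\rangle$ for a fixed partition $[n] = T \sqcup T^c$, where $\bu_T = \sum_{i\in T} x_i \bA_i$ and $\bv_{T^c} = \sum_{j\in T^c} x_j \bA_j$ are \emph{independent}. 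Conditioning on $\bu_T$, the quantity $\langle\bu_T, \bv_{T^c}\rangle = \sum_{j\in T^c} x_j \langle\bu_T,\bA_j\rangle$ is a sum of independent sub-Gaussian variables with $\psi_2$-norms at most $|x_j|\,K\,\|\bu_T\|$, hence is sub-Gaussian with parameter $\lesssim K\|\bu_T\|$, giving $\P\bigl(|\langle\bu_T,\bv_{T^c}\rangle| > s \mid \bu_T\bigr) \le 2\exp\bigl(-cs^2/(K^2\|\bu_T\|^2)\bigr)$. To close the argument one needs $\|\bu_T\| \lesssim K\sqrt d$ with high probability; this is the delicate point, since $\|\bu_T\|^2$ is itself a chaos of the same kind, and I would obtain it either by a parallel one-sided estimate $\smax(\bA) \lesssim K\sqrt d$ or by bootstrapping the very bound under proof. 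Combining yields $\bigl| \tfrac1d\|\bA\bx\|^2 - 1 \bigr| \lesssim K^2 \sqrt{x/d}$ with probability $1 - 2e^{-x}$ for each fixed $\bx$; a union bound over the $9^n$ net points replaces $x$ by $x + n\ln 9$, producing the $\sqrt{\ln(9)\,n}$ term, and carefully retracing the decoupling factor $4$, the net factor $(1 - 2\cdot\tfrac14)^{-1} = 2$, and the loss in the $\|\bu_T\|$-step pins down the stated $2^{3.5}K^2$ together with the factor $6$ in front of $\sqrt x$. The $\smax$ side requires no extra work, being the other half of the same operator-norm bound.
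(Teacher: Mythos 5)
You are reconstructing a result the paper itself does not prove: \cref{thm:matrix_concentration_indep_cols} is imported verbatim (with explicit constants) from Vershynin's Theorem 5.58, so the relevant comparison is with the cited proof, and your sketch does follow that canonical route (net over $S^{n-1}$, exact cancellation of the diagonal thanks to $\|\bA_i\|=\sqrt d$ a.s., decoupling of the off-diagonal chaos, conditional Hoeffding). The genuine gap sits exactly at the step you flag and then leave open: to integrate the conditional sub-Gaussian tail you need a high-probability bound $\|\bu_T\|\lesssim K\sqrt d$, i.e.\ an a priori control of $\smax$ of (half of) $\bA$, and neither of your two proposals supplies it. ``Bootstrapping the very bound under proof'' is circular as written, since the quantity needed is precisely the upper half of the statement being proven; and asserting ``a parallel one-sided estimate $\smax(\bA)\lesssim K\sqrt d$'' just renames the problem, because for independent sub-Gaussian \emph{columns} that estimate is of the same difficulty class and your sketch gives no argument for it. Until this is closed, the chaos tail for a single net point is not established and the union bound has nothing to act on. A concrete repair, available from results already quoted in this paper: apply \cref{thm:matrix_concentration_indep_rows} to $\bA\tp$, whose rows are the columns $\bA_i$ and hence independent, isotropic, sub-Gaussian vectors in $\reals^d$; since $\smax(\bA)=\smax(\bA\tp)$, this yields $\smax(\bA)\le \sqrt n + 2^{3.5}K^2(\sqrt{\ln(9)d}+\sqrt x)\lesssim K^2(\sqrt d+\sqrt x)$ with probability $1-2e^{-x}$ in your regime $n\le d$, and one then runs the decoupled estimate on the intersection with this event (the event $\{\|\bu_T\|\le R\}$ is measurable with respect to the columns in $T$, so the conditioning argument goes through). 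Note also that decoupling produces an average over random partitions $T$, not a single fixed one; you need either a union bound over the $2^n$ partitions (harmless next to the $9^n$ net cost) or the expectation-over-$T$ version of the argument.

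Two smaller points. First, the claim that retracing the losses ``pins down'' the exact constants $2^{3.5}$, $\sqrt{\ln 9}$ and the asymmetric factor $6$ is not substantiated by the sketch; these are artifacts of the specific proof in the cited source, and no reader could verify them from what you wrote. Second, your reduction to $\|\tfrac1d\bA\tp\bA-\bI_n\|\le\epsilon$ with $\epsilon$ linear in the deviation implicitly assumes $\epsilon\le 1$; for large $x$, and a fortiori in the dismissed case $n>d$ where the Gram matrix is singular and $\|\tfrac1d\bA\tp\bA-\bI_n\|\ge 1$, one needs the standard $\max(\delta,\delta^2)$ formulation to recover the additive bound on $\smax$. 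These are minor compared with the missing control of $\|\bu_T\|$, which is the technical heart of the theorem.
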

Above two theorems lead to the following non-asymptotic version of a Bai-Yin law.
\begin{proof}[Proof of \cref{lem:non_asymptotic_bai_yin}]
  The proof considers two cases: 1) when number of observations exceeds the dimension, which is handled by the concentration of a minimal non-zero eigenvalue of a covariance matrix; 2) when dimension exceeds number of observations, which is handled by concentration of the Gram matrix.
  \paragraph{Case $n \geq d$.}
  We will apply~\cref{thm:matrix_concentration_indep_rows} with $\bA = (\bSigma^{\pinv\frac12} \bX)\tp$ whose rows are independent and isotropic, and in addition by Cauchy-Schwarz inequality:
  \begin{align*}
    \|\bSigma^{\pinv\frac12}\| \smin(\bX\tp)
    \geq
    \smin\pr{(\bSigma^{\pinv\frac12} \bX)\tp}
    \geq
    \sqrt{n} - 2^{3.5} K^2 (\sqrt{\ln(9) d} + \sqrt{x})
  \end{align*}
  with probability at least $1-e^{-x}$ for $x > 0$.
  Observing that $\|\bSigma^{\pinv\frac12}\| = \smin^+(\bSigma)^{-1/2}$, this implies that
  \begin{align*}
    &\smin(\bX\tp)
    \geq
      \sqrt{\smin^+(\bSigma)} \pr{\sqrt{n} - 2^{3.5} K^2 \pr{\sqrt{\ln(9) \, d} + \sqrt{x}}}~,
  \end{align*}
  while dividing through by $\sqrt{n}$, taking the non-negative part of the r.h.s.\, and squaring gives us
  \begin{align*}
    \lmin(\bhSigma)
    \geq
    \lmin^+(\bSigma) \pr{1 - 2^{3.5} K^2 \pr{\sqrt{\ln(9) \, \frac{d}{n}} + \sqrt{\frac{x}{n}}}}_+^2~.
  \end{align*}
  \paragraph{Case $n < d$.}
  In this case we essentially study concentration of a smallest singular value of a Gram matrix $\bhG = \frac{1}{d}\bX\tp\bX$.
  For the case $n < d$, \cref{thm:matrix_concentration_indep_rows} would give us a vacuous estimate, and therefore we rely on~\cref{thm:matrix_concentration_indep_cols} which requires additional assumption that columns of $\bX$ lie on a (elliptic) sphere of radius $\sqrt{d}$.
  In particular, similarly as before, applying \cref{thm:matrix_concentration_indep_cols} to the matrix $\bSigma^{\pinv\frac12} \bX$ with isotropic columns $\bSigma^{\pinv\frac12} \bX_i$ satisfying $\|\bSigma^{\pinv\frac12}\bX_i\| = \sqrt{d}$ \ a.s. for all $i \in [n]$, we get
  \begin{align*}
    \|\bSigma^{\pinv\frac12}\| \smin(\bX)
    \geq
    \smin\pr{\bSigma^{\pinv\frac12} \bX}
    \geq
    \sqrt{d} - 2^{3.5} K^2 (\sqrt{\ln(9) n} + 6 \sqrt{x})
  \end{align*}
  with probability at least $1-e^{-x}$ for $x > 0$.
  Again, this gives us
  \begin{align*}
    &\smin\pr{\bX}
    \geq
      \sqrt{\smin^+(\bSigma)} \pr{\sqrt{d} - 2^{3.5} K^2 \pr{\sqrt{\ln(9) \, n} + 6 \sqrt{x}}}~,
  \end{align*}
  while dividing through by $\sqrt{d}$, taking the non-negative part of the r.h.s.\, and squaring gives us
  \begin{align*}
    \lmin(\bhG)
    \geq
    \lmin^+(\bSigma) \pr{1 - 2^{3.5} K^2 \pr{\sqrt{\ln(9) \, \frac{n}{d}} + 6 \sqrt{\frac{x}{d}}}}_+^2~.
  \end{align*}

  Now we relate $\lmin(\bhG)$ to the smallest non-zero eigenvalue of $\bhSigma$ (see also~\citep[Remark 1]{bai1993limit}).
  The smallest eigenvalue of $d \bhG$ corresponds to $d-n+1$-th smallest eigenvalue of $n \bhSigma$, that is $d \lmin(\bhG) = n \lmin^+(\bhSigma)$.
  That said, multiplying the previous inequality through by $d/n$ and rearranging, we get
  \[
    \lmin^+(\bhSigma)
    \geq
    \lmin^+(\bSigma) \pr{\sqrt{\frac{d}{n}} - 2^{3.5} K^2 \pr{\sqrt{\ln(9)} + 6 \sqrt{\frac{x}{n}}}}_+^2
  \]
 The proof is now complete.
\end{proof}

\section{Bounding the third term (orthogonal complement)}
\label{app:sec:third_term}

Finally, we take care of $\E[\|\bwstar\|^2_{\bI-\bhatM}]$.
Clearly, in the underparameterized case $d \geq n$, $\E[\|\bwstar\|^2_{\bI-\bhatM}] = 0$ and so we will not consider such a case.
On the other hand, in the overparameterized case, we argue that whenever the spectrum of $\bhatM$ decays sufficiently quickly, the term of interest will behave as $\|\bwstar\|^2 / \sqrt{n}$.

Consider the following theorem due to \citep[Theorem 1]{shawe2005eigenspectrum}, which is concerned with the magnitude of projection onto partial eigenbasis of a covariance matrix (they state the theorem for Kernel-PCA, however we adapt it here for the Euclidean space):
\begin{theorem}[{\cite[Theorem 1]{shawe2005eigenspectrum}}]
  Denote the $k$-``tail'' of eigenvalues of $\bhSigma$ as
  \begin{align*}
    \hat{\lambda}^{> k} = \sum_{i=k+1}^n \hat{\lambda}_i~.
  \end{align*}
  Then, for any $\bz \in \reals^d$, with probability at least $1-\delta$ over $S$, for all $r \in [n]$,
\begin{align*}
  \E\br{\|P_{\bU_r}^{\perp}(\bz)\|_2^2}
  \leq
  \min_{k \in [r]}\cbr{\frac1n \cdot \hat{\lambda}^{>k} + \frac{1 + \sqrt{k}}{\sqrt{n}} \sqrt{\frac{2}{n} \sum_{i=1}^n \|\bX_i\|^2}}
  +
  \|\bz\|_2^2 \sqrt{\frac{18}{n} \cdot \ln\pr{\frac{2 n}{\delta}}}~.
\end{align*}
\end{theorem}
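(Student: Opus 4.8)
\emph{Proof sketch (following \cite{shawe2005eigenspectrum}).} The plan is to reduce the population residual of the \emph{learned} rank-$r$ projection to an exact empirical identity plus a uniform deviation bound over low-rank projections, and then to exploit the nesting of the eigenspaces to optimize over the truncation level; throughout, $\E$ denotes expectation over a fresh input $\bz$ drawn like the $\bX_i$, and $\|\bz\|_2^2$ in the deviation term is read as an almost-sure bound on its squared norm. First I would record the empirical identity: for every $k\in[n]$ the rank-$k$ PCA projection $\bU_k\bU_k\tp$ minimizes $\frac1n\sum_{i=1}^n\|(\bI-P)\bX_i\|^2$ over all rank-$k$ orthogonal projections $P$ (Courant--Fischer), and, using $\sum_i\bX_i\bX_i\tp=n\bhSigma$, the minimum equals the spectral tail $\tr\big((\bI-\bU_k\bU_k\tp)\bhSigma\big)=\sum_{j>k}\hat\lambda_j=\hat\lambda^{>k}$ (up to the normalization used in the statement).

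Second, to pass from the training points to the fresh $\bz$, fix $k$ and consider the class $\mathcal G_k=\{\bz\mapsto\|(\bI-P)\bz\|_2^2:P\text{ an orthogonal projection of rank }k\}$. Each member equals $\|\bz\|_2^2-\langle P,\bz\bz\tp\rangle$, so modulo the fixed term $\|\bz\|_2^2$ this is a trace-norm--constrained linear class in the lifted features $\bz\bz\tp$, with index set $\{M\succeq 0:M\preceq\bI,\ \tr M=k\}$. Symmetrization together with a bounded-difference (McDiarmid) argument then gives, with probability at least $1-\delta/n$ over $S$ and simultaneously for the learned projection,
\[
  \E\br{\,\|(\bI-\bU_k\bU_k\tp)\bz\|_2^2\,}
  \;\le\;
  \hat\lambda^{>k}
  \;+\; 2\,\Radh_n(\mathcal G_k)
  \;+\; \|\bz\|_2^2\sqrt{\tfrac{\ln(2n/\delta)}{n}}~,
\]
where the empirical term has already been simplified via the identity above and the last term bounds the deviation of the relevant supremum from its mean. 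The genuinely $k$-dependent contribution comes from bounding $\Radh_n(\mathcal G_k)$: since $\mathcal G_k$ is (up to the constant $\|\bz\|_2^2$) linear and indexed by PSD matrices of trace at most $k$ and operator norm at most $1$, its empirical Rademacher complexity should come out as $\Radh_n(\mathcal G_k)\lesssim\frac{1+\sqrt k}{n}\sqrt{\sum_{i=1}^n\|\bX_i\|^2}=\frac{1+\sqrt k}{\sqrt n}\sqrt{\frac1n\sum_{i=1}^n\|\bX_i\|^2}$, matching the second term inside the minimum in the statement (up to absolute constants).

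I expect this Rademacher estimate to be the main obstacle: one must use that $\{M:\tr M\le k,\ 0\preceq M\preceq\bI\}$ is \emph{effectively} $k$-dimensional, so that the rate is $\sqrt{k/n}$ rather than $\sqrt{d/n}$ (or $k/\sqrt n$), which needs a careful covering-number/Sudakov estimate or the spectral argument of \cite{shawe2005eigenspectrum}. Finally I would assemble the pieces: a union bound over $k\in[n]$ upgrades the rank-$k$ inequality to hold for all $k$ simultaneously with probability $1-\delta$ (this, with the slack from McDiarmid, is where the $\ln(2n/\delta)$ factor originates), and for any $r\in[n]$ and $k\le r$ the eigenspaces are nested, so $P_{\bU_k}\preceq P_{\bU_r}$ and hence $\|P_{\bU_r}^{\perp}(\bz)\|_2^2\le\|P_{\bU_k}^{\perp}(\bz)\|_2^2$ pointwise; taking expectations and then minimizing the right-hand side over $k\in[r]$ yields the claimed bound.
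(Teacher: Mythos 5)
This statement is not proved in the paper at all: it is imported (with minor re-notation for the Euclidean setting) from \citet[Theorem 1]{shawe2005eigenspectrum}, so there is no internal proof to compare against. Your sketch reconstructs the argument of that reference along essentially its own lines: the empirical identity that the rank-$k$ PCA residual equals the eigenvalue tail, a uniform deviation bound over the class of rank-$k$ projections (equivalently trace-$k$ PSD contractions) via Rademacher complexity plus McDiarmid, a union bound over $k$ producing the $\ln(2n/\delta)$ factor, and the nesting of eigenspaces to pass to the minimum over $k\in[r]$ --- this is the right skeleton, and your reading of the expectation (over a fresh draw, with $\|\bz\|_2^2$ an almost-sure norm bound) matches the source rather than the paper's slightly loose restatement. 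The only caveat is the step you yourself flag as the main obstacle: the complexity estimate is asserted, not derived, and the natural Frobenius/linear-class computation for $\sup_{P}\frac1n\sum_i\sigma_i\langle P,\bX_i\bX_i\tp\rangle$ yields $\frac{\sqrt k}{n}\bigl(\sum_i\|\bX_i\|^4\bigr)^{1/2}$, i.e.\ fourth powers of the norms as in the original theorem (stated there with $K(x_i,x_i)^2$); this reduces to the $\bigl(\sum_i\|\bX_i\|^2\bigr)^{1/2}$ form quoted here only because the inputs are assumed to lie in the unit ball. With that replacement made explicit, your outline is a faithful account of the cited proof.
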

Since the rank of the covariance matrix in our case is $n$ and inputs lie on a unit sphere, we have that w.p.\ at least $1-\delta$ over $S$,
\begin{align}
\label{eq:orth_complement_bound}
  \E\br{\|\bwstar\|^2_{\bI - \bhatM}}
  \leq
  \min_{k \in [n]}\cbr{\frac1n \cdot \hat{\lambda}^{>k} + \pr{1 + \sqrt{k}} \sqrt{\frac{2}{n}}}
  +
  \|\bwstar\|_2^2 \sqrt{\frac{18}{n} \cdot \ln\pr{\frac{2 n}{\delta}}}
\end{align}
Thus, assuming that eigenvalues decay quickly enough that is $\hat{\lambda}_i = C b^{-i}$ for some constants $C > 0, b > 1, i \in \mathbb{N}$, the above projection behaves as (w.h.p. over $S$)
\begin{align*}
  \E\br{\|\bwstar\|^2_{\bI - \bhatM}} = \sctilO\pr{ \frac{\|\bwstar\|_2^2}{\sqrt{n}} } \quad \mathrm{as} \quad n \to \infty~.
\end{align*}
A natural question is whether we indeed typically observe a polynomial decay of the spectrum.
As an illustrative example we consider a simulation where inputs are sampled uniformly from a unit sphere for the sample size $n \in \cbr{2^i ~:~ i \in [13]}$ and $d=10 n$.
The $\min_{k \in [n]}\cbr{\dots}$ term in \cref{eq:orth_complement_bound} is plotted against the sample size in \cref{fig:poly_decay}.
We observe that the term exhibits polynomial decay.
\begin{figure}[H]
    \centering
    \includegraphics[width=8cm]{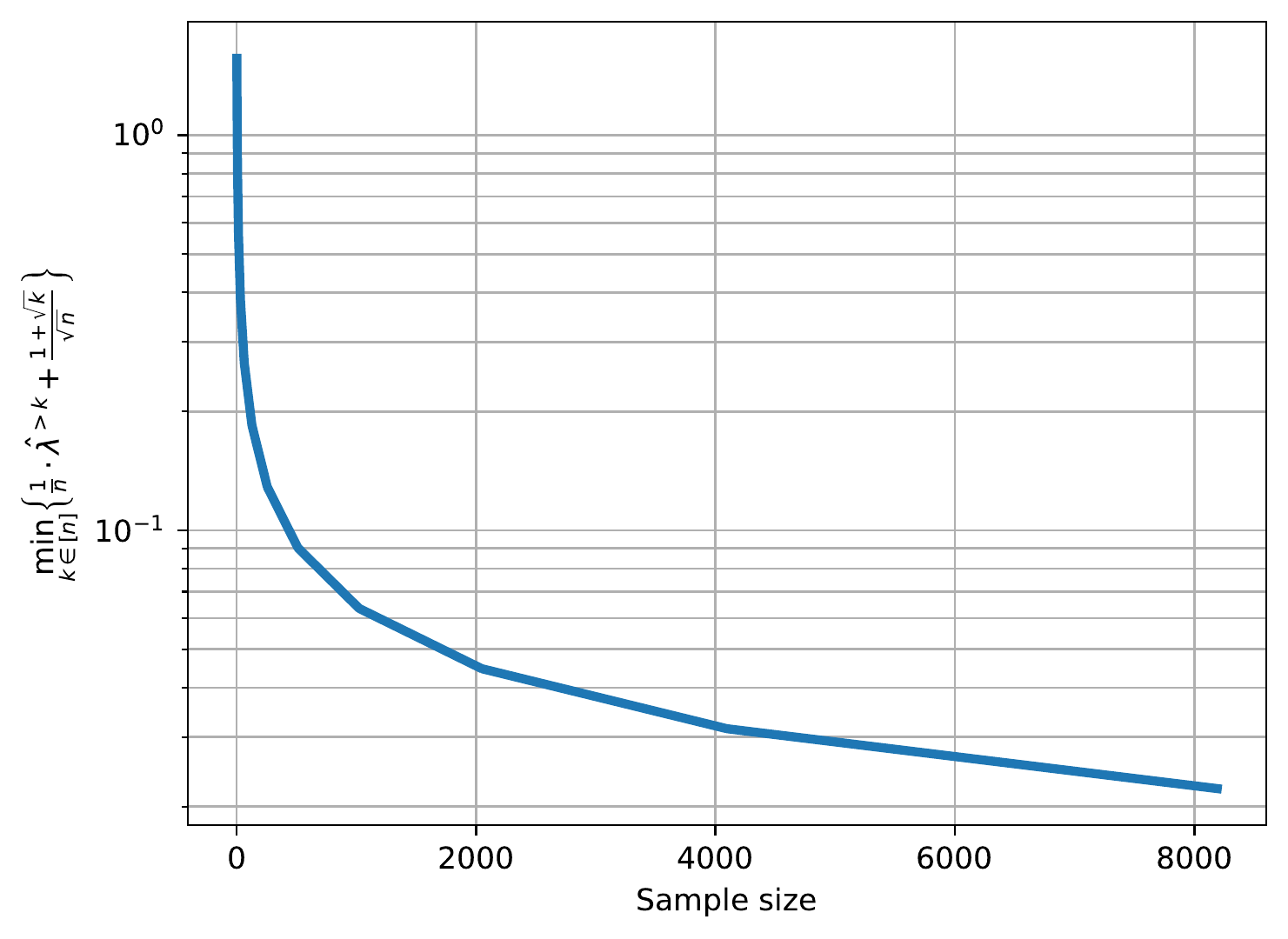}
    \caption{Decay of the $\min_{k \in [n]}\cbr{\dots}$ term in \cref{eq:orth_complement_bound} for inputs distributed on a unit sphere. Here $d = 10n$. Error bars are omitted due to insignificant scale.}
    \label{fig:poly_decay}
\end{figure}

\section{More on the effect of depth}\label{app:depth_exp}

In section~\ref{sec:empirical_discussion}, we suggested that the ill-conditioning of the intermediary features of a neural network is not only due to the size of the network, but also to the weights distribution across the layers. More particularly, we suggest here that the optimization difficulty we observe for deep neural networks is linked among other factors to the minimum eigenvalue of the activations of the penultimate layer.

To support our hypothesis, we run an experiment where we train networks of a fixed width (equal to 500) and depth varying from 2 to 10. We track the test error at various stages of training and the minimum eigenvalue of the features of the last layer. In Figure~\ref{fig:depth_err_eigval}, we can observe that as expected, the deeper the network, the harder it is to train them. This is reflected in the increasing test error. For the deepest network, simple gradient descent fails to obtain a reasonable performance even after 10000 iterations. Moreover, we observe that the deeper the network, the smaller is the minimum eigenvalue, and the most ill-conditioned settings get even worse with training.  

\begin{figure}[H]
  \centering
  \includegraphics[width=.95\textwidth]{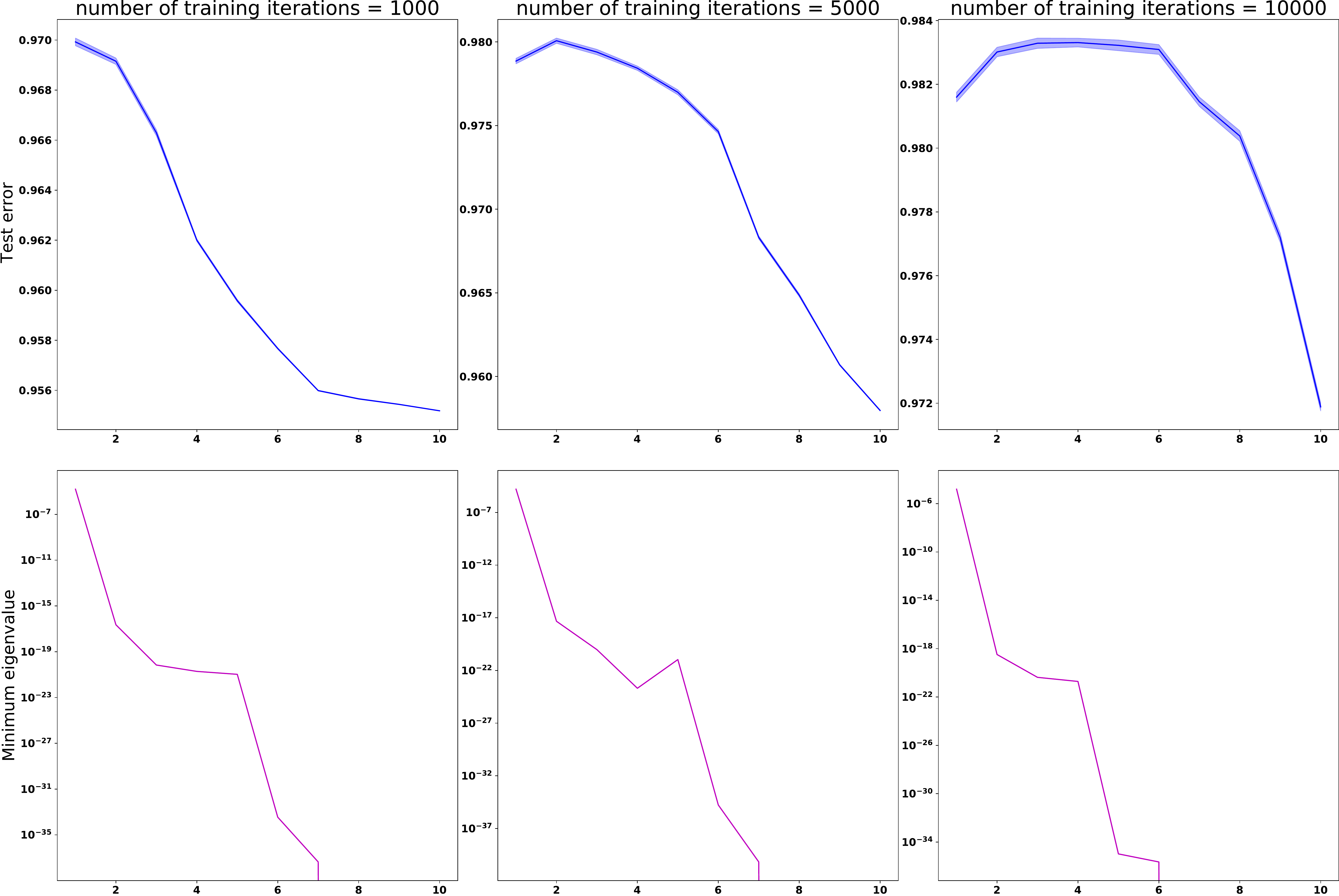}
  \caption{Mean test error and minimum eigenvalue for networks of fixed width = 500 and varying depth: MNIST - 1000 samples for training, 10000 samples for test, networks trained with gradient descent and step size 0.01.}
  \label{fig:depth_err_eigval}
\end{figure}

To further this analysis, we also compare networks with 3 hidden layers where we increase the width in all the layers and in the penultimate layer only, creating bottleneck in the earlier layers. This experiment complements Figure~\ref{fig:MNIST_depth}. In Figure~\ref{fig:MNIST_depth_bottleneck}, we observe that the bottleneck results in a more important drop in the eigenvalue around the width 1000 (width of the last layer in this case). Moreover, the minimum eigenvalue stays smaller than the other considered architectures when we increase the depth. This is reflected in a higher test error, confirming once more the effect of the conditioning of the last layer features on the final performance of the network when trained with gradient descent. 

\begin{figure}[H]
    \begin{minipage}[b]{.24\linewidth}
        \centering
        \includegraphics[align=c,width=\textwidth]{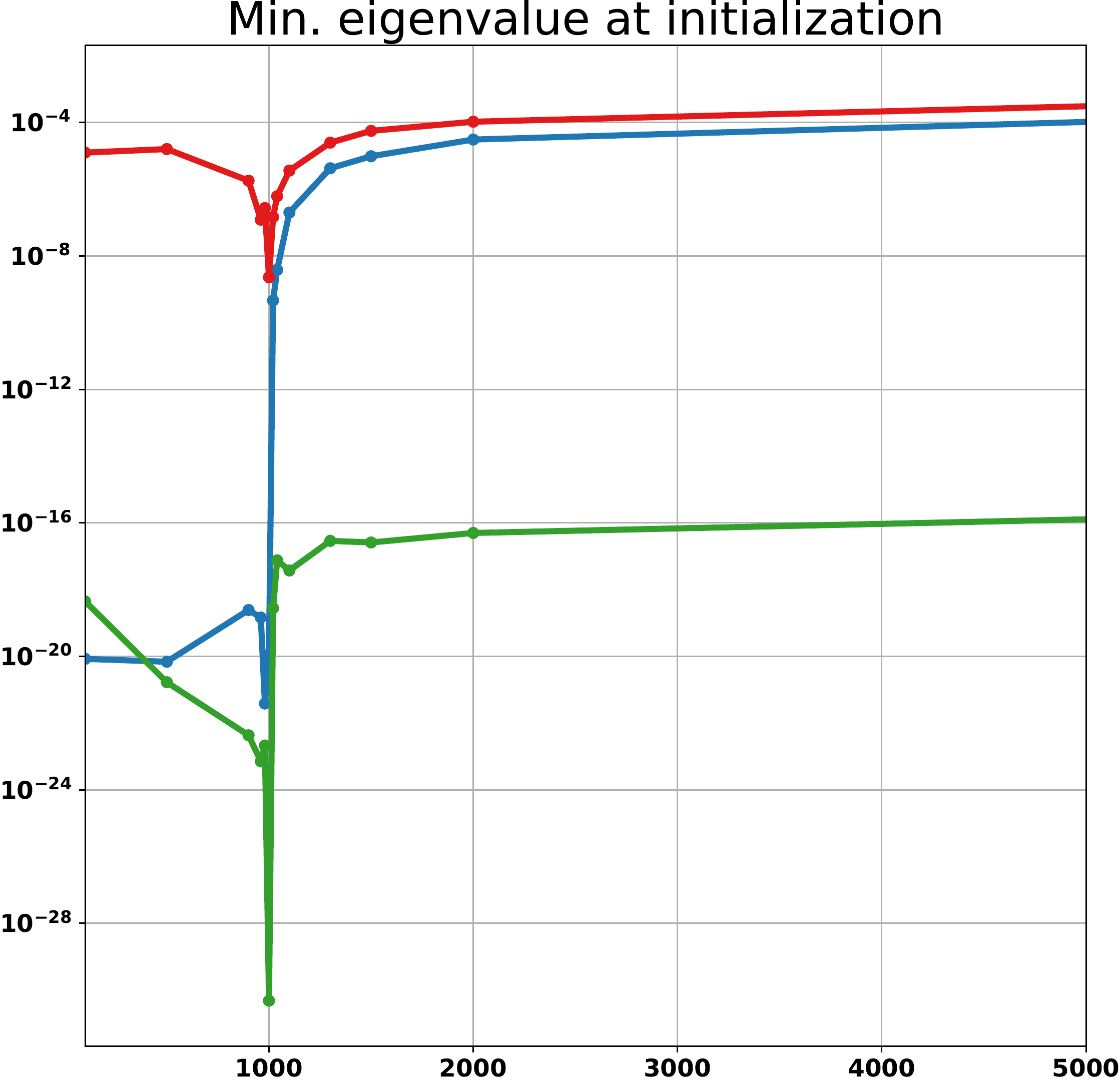}
        (a)
        \label{fig:MNIST_depth_eig_init}
        \end{minipage} \hfill
    \begin{minipage}[b]{.74\linewidth}
        \centering
        \includegraphics[align=c,width=\textwidth]{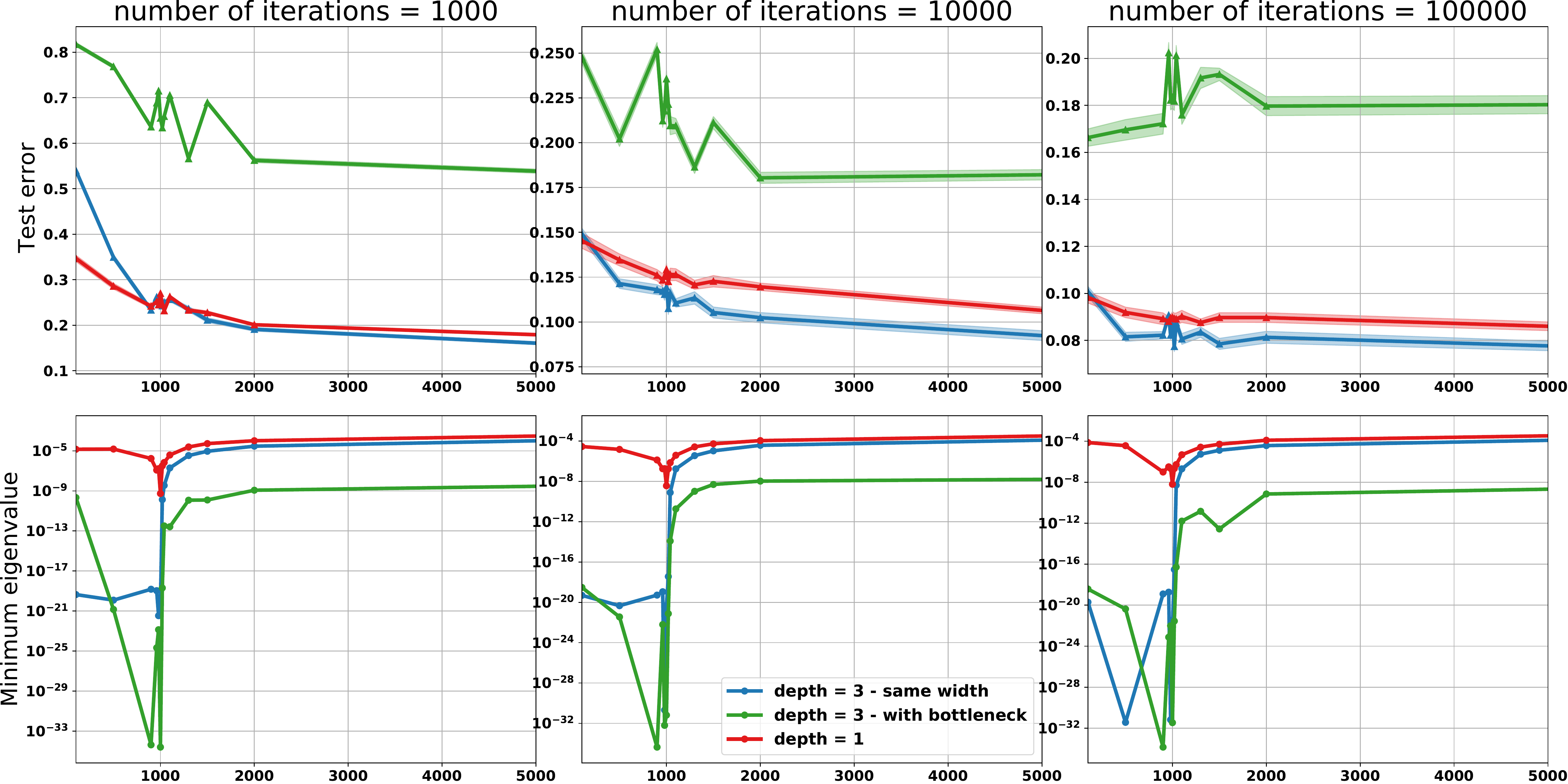}
        (b)
        \label{fig:MNIST_depth_error}
        \end{minipage}
    \caption{Training networks of increasing width with 1 and 3 hidden layers on MNIST - For the version with bottleneck, only the size of the last hidden layer is increased, while the other layers are composed of 10 neurons: (a) Minimum positive eigenvalue of the intermediary features at initialization - (b) Test error and corresponding minimum eigenvalue of the intermediary features at different iterations}\label{fig:MNIST_depth_bottleneck}
\end{figure}

\section{Additional Empirical Evaluation}
\label{sec:app:empirical}

\subsection{Experimental settings - More details}

In our experiments, we considered two datasets: MNIST and FashionMNIST. Both datasets have an input dimension of $784$ and a training set of $6.10^4$ samples. As our theory predicts that the drop in the minimum eigenvalue and the performance of the models happens when the feature size reaches the size of the training set, and in order to keep our model tractable, we use subsets of size 1000 of the training sets. These subsets are randomly chosen and kept the same when the size of the model increases. All the models are trained with plain gradient descent, with a fixed step size. We use a step size of $0.01$ unless stated otherwise. All the weights of the networks are initialized from a truncated normal distribution with a scaled variance. Finally, for the MNIST experiment in Figure~\ref{fig:MNIST_FashionMNIST}, the mean and standard errors are estimated from runs with different seeds. For the other experiments, the mean and standard errors of the test error are estimated by splitting the test set into 10 subsets. 

\subsection{More on the effect of architectural choices}
In section~\ref{sec:empirical_discussion}, we suggested that for neural networks the quantity of interest might also be $\lminh^+$ for intermediary features, which is affected by size of the model but also by the distribution of the weights and architectural choices. Section~\ref{app:depth_exp} shows some experiments that validate this hypothesis. To further our analysis, we question here the impact of skip connections on the eigenvalue of features at initialization. The difficulty that depth cause for the optimization of neural networks led to our reliance on skip connections among other tricks~\citep{De2020}. Here, we hypothesize that skip connections make the optimization of deep networks easier thanks to a better conditioning of the feature, through a less severe drop in the minimum eigenvalue around the interpolation threshold.  Figure~\ref{fig:skip_connection} shows that for a deep network with skip connection, the minimum eigenvalue of the penultimate layer activations behaves like this of a shallow neural network. 

\begin{figure}[H]
  \centering
  \includegraphics[width=.95\textwidth]{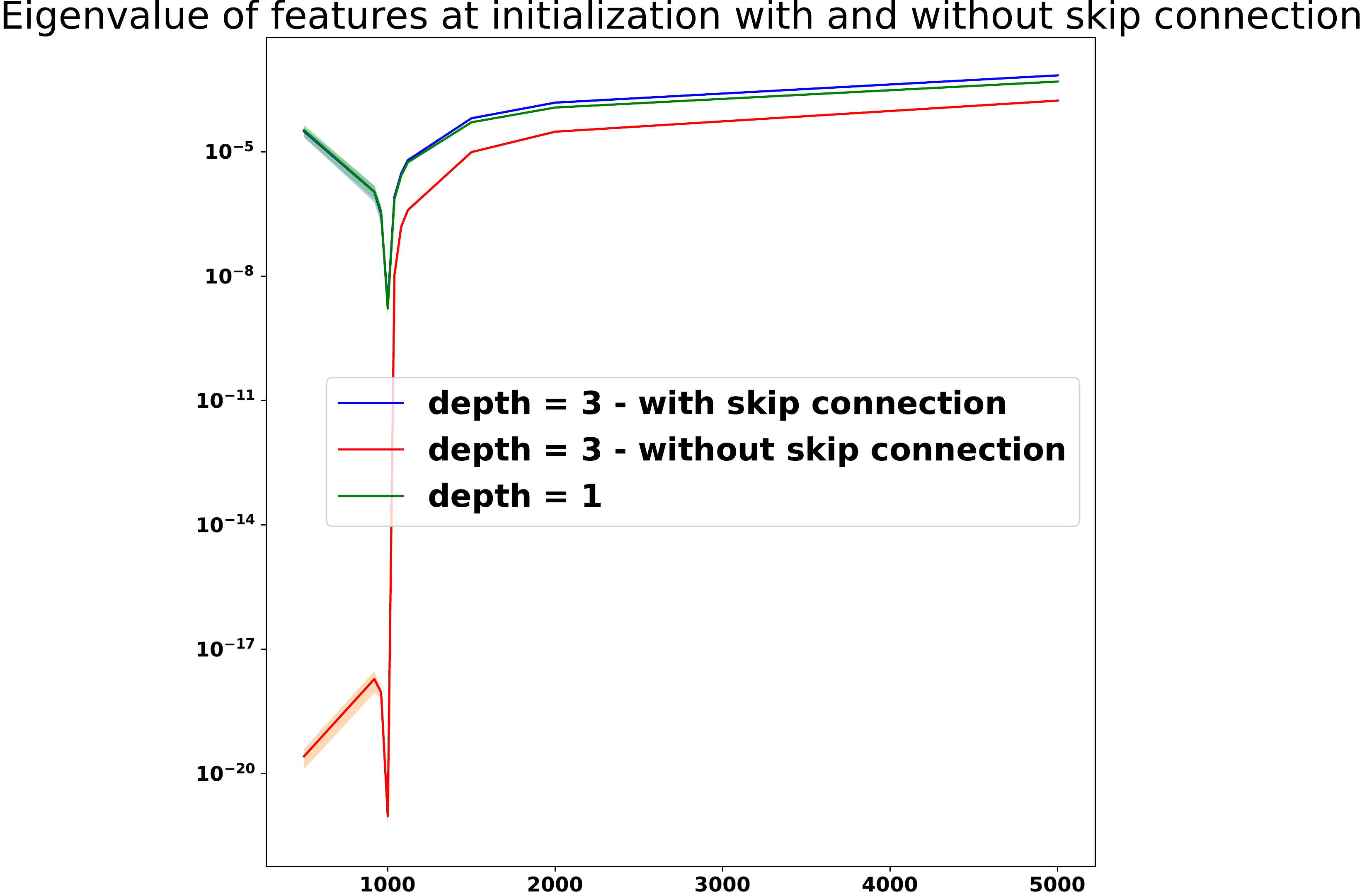}
  \caption{Mean minimum eigenvalue at initialization for networks of depths 1 and 3 and varying width. For the network of depth 3, we show two variants: with and without skip connection. The skip connection makes the deep network eigenvalue behave like the shallow network's.}
  \label{fig:skip_connection}
\end{figure}

\end{document}